\newtheoremstyle{problem}{\topsep}{\topsep}%
     {}
     {}
     {\bfseries}
     {}
     {\newline}
     {\thmname{#1}\thmnumber{ #2} \thmnote{#3}}
\newtheorem{theorem}{Theorem}[section]
\newtheorem{definition}[theorem]{Definition}
\newtheorem{example}[theorem]{Example}
\newtheorem{lemma}[theorem]{Lemma}
\newtheorem{proposition}[theorem]{Proposition}
\newtheorem{corollary}[theorem]{Corollary}
\newtheorem{problem}[theorem]{Problem}
\newtheorem{exercise}{Exercise}[section]
\newtheorem{fact}{Fact}[section]
\def\RR{\mathbb{R}}
\def\NN{\mathbb{N}}
\def\CC{\mathbb{C}}
\def\HH{\mathbb{H}}
\def\MM{\mathbb{M}}
\def\CF{\mathcal{F}}
\def\im{\mbox{Im }}
\def\re{\mbox{Re }}
\def\det{\mbox{det}}
\def\lmin{\lambda_{\min}}
\def\lmax{\lambda_{\max}}
\def\tr{\mathrm{tr\ }}
\newcommand{\ex}[1]{\mathrm{E}\left[#1\right]}
\newcommand{\var}[1]{\mathrm{Var}\left[#1\right]}
\newcommand{\exs}[2]{\mathrm{E}_{#1}\left[#2\right]}
\newcommand{\pr}[1]{\mathrm{P}\left(#1\right)}
\newcommand{\varstat}[1]{\bm{v}\left(#1\right)}
\title{Efficient approximation of kernel functions\\ {\small Lecture notes for COV883, II Semester 2019-20}}
\author{Amitabha Bagchi\thanks{With contributions from Vinayak Rastogi, Prakhar Agarwal, Saurabh Godse, Ruturaj Mohanty, Akanshu Gupta, Vrittika Bagadia, Saumya Gupta, Shashank Goel, Hritik Bansal, Gantavya Bhatt, Kumari Rekha, Namrata Jain, Mehak, Arpan Mangal, Harsh Vardhan Jain, Gobind Singh and Pratyush Maini.}\\Computer Science and Engineering\\IIT Delhi}
\begin{document}

\maketitle

\begin{abstract}
These lecture notes endeavour to collect in one place the mathematical background required to understand the properties of kernels in general and the Random Fourier Features approximation of Rahimi and Recht (NIPS 2007) in particular. We briefly motivate the use of kernels in Machine Learning with the example of the support vector machine. We discuss positive definite and conditionally negative definite kernels in some detail. After a brief discussion of Hilbert spaces, including the Reproducing Kernel Hilbert Space construction, we present Mercer's theorem. We discuss the Random Fourier Features technique and then present, with proofs, scalar and matrix concentration results that help us estimate the error incurred by the technique. These notes are the transcription of 10 lectures given at IIT Delhi between January and April 2020.
\end{abstract}

\tableofcontents

\section{Motivation: Kernels in Machine Learning}
\label{sec:motivation}

Kernels have been found to be extremely useful in basic machine learning tasks like classification, regression and others. A full treatment of the role of kernels in machine learning can be found in the book by Scholk\"opf and Smola~\cite{scholkopf:2001}. In these notes, in order to motivate the study of kernel functions, we examine in outline the case of classification, and that too only classification using support vector machines. We will formally define kernels only in \autoref{sec:kernels}. 

\subsection{Motivating example: Support Vector Machines}

Classification is a basic problem in machine learning. In the so-called supervised setting of the 2-class classification problem we are given a {\em training set} which consists of points from a domain, each labelled with 1 or -1, denoting which class they belong to. The goal is to find an easily computable function of the domain that maps unlabelled elements of the domain (query points) to a label. In the case where the domain is the $d$-dimensional space $\RR^d$ a natural way of approaching this problem is separation by a hyperplane, a method that is attributed to Vapnik in~\cite{scholkopf:2001}. We state this formally.
\begin{problem}[Two-class classification using hyperplanes]
\label{prb:vapnik}
Given a training set $S \subseteq \RR^d \times \{-1,1\}$, find a hyperplane $\bm{w}^T\bm{x} + b = 0$ such that for
\[(\bm{x},1) \in S: \bm{w}^T\bm{x} + b \geq 0, \]
and for
\[(\bm{x},-1) \in S: \bm{w}^T\bm{x} + b < 0. \]
\end{problem}
To understand why the conditions amount to ``separation'' by a hyperplane we recall that 
$\bm{w}^{T} \bm{x}$ is actually the dot product between the vectors $\bm{w}$ and $\bm{x}$ and can be thought of as the projection of $\bm{x}$ on $\bm{w}$ scaled by the constant $c$. Hence $\bm{w}^T\bm{x} + b = 0$ is satisfied by all points for which the scaled projection has value exactly $-b$ and we can separate all points of $\RR^d$ into two sets, those on the positive ``side'' of the hyperplane and those on the negative ``side'' of it, i.e., those whose scaled projection is greater that $-b$ and those whose scaled projection is smaller. 

We note that there may be cases where there is no solution to Problem~\ref{prb:vapnik}, but for our purposes we assume we are dealing with cases where a solution exists. This problem has been approached in the literature by considering its optimization version which seeks to find a hyperplane that not only separates the two classes but has the maximum distance from every training point. This can be stated formally.
\begin{problem}[Optimal two-class classification using hyperplanes]
\label{prb:svm}
Given a training set $S \subseteq \RR^d \times \{-1,1\}$, and a constant $c > 0$,  
\[\mbox{maximize} \min_{(\bm{x}, y) \in S} y(\bm{w}^T\bm{x} + b), \]
\[\mbox{subject to}\]
\[\|\bm{w}\| = c.\]
\end{problem}
In the machine learning literature $(\bm{w}^{T} \bm{x} + b)/c$ is known as the geometric margin of the point $\bm{x}$ and  $y\cdot(w^{T} x + b)/c$, which is always positive, is known as its functional margin. The objective is to maximize the smallest functional margin. Since the scaling factor is in our control, we can restate this problem by demanding that the functional margin be fixed.
\begin{problem}[Alternate formulation of Problem~\ref{prb:svm}]
Given a training set $S \subseteq \RR^d \times \{-1,1\}$,
\[\mbox{maximize } \frac{1}{2}\|w\|^2\]
\[\mbox{subject to}\]
\[\forall (\bm{x}, y) \in S: y(\bm{w}^T\bm{x} + b) \geq 1.\]
\end{problem}
The 1/2 and the square in the objective function are to make the dual form more convenient. Going through the method of Lagrange multipliers we get the dual problem:
\begin{problem}[Dual version of Problem~\ref{prb:svm}]
\label{prb:svm-dual}
Given a training set $S \subseteq \RR^d \times \{-1,1\}$,

\[\mbox{maximize } \sum_{i=1}^{|S|} \alpha_i - \frac{1}{2} \sum_{i=1}^{|S|}\sum_{j=1}^{|S|} y_i y_j \alpha_i \alpha_j \bm{x}_i^T\bm{x}_j\]
\[\mbox{subject to}\]
\[\alpha_i \geq 0, 1 \leq i \leq |S|, \mbox{and } \sum_{i=1}^{|S|} \alpha_iy_i = 0.\]
\end{problem}
A detailed treatment of how the dual problem is derived can be seen in~\cite{ng-stanford:2019}. For our purposes we observe that the objective function in Problem~\ref{prb:svm-dual} depends entirely on pairwise dot products between training points. This leads to the following insight:
\begin{quote}
If we transform $S \subset \RR^d$ through a transformation $\kappa: \RR^d \rightarrow X$, we can find a separating hyperplane in $X$ by solving Problem~\ref{prb:svm-dual} with $\bm{x}_i^T \bm{x}_j$ replaced by $\langle \kappa(\bm{x}_i), \kappa(\bm{x}_j)\rangle$, where $\langle \cdot, \cdot\rangle$ is an inner product defined on $X$.
\end{quote}
Such a transformation could be advantageous if the training set is not separable in $\RR^d$ but its tranformed version is separable in $X$ (with hyperplanes defined via the inner product defined on $X$). Even if the transformed version is not separable, it may be more amenable to low-error classification using regularization or other techniques. 

However, computing the dot product in the transformed case could be computationally more demanding, especially if the $X$ has greater dimension than $d$. This leads us to the following conclusion
\begin{quote}
For a ``useful'' transformation $\kappa : \RR^d \rightarrow X$, if we can find an efficiently computable function $\varphi: \RR^d \times \RR^d \rightarrow \RR$ such that for all $\bm{x}, \bm{y} \in \RR^d$, 
$\varphi(\bm{x}, \bm{y}) = \langle \kappa(\bm{x}), \kappa(\bm{y})\rangle$,
then we can solve Problem~\ref{prb:svm-dual} in $X$ efficiently.
\end{quote}

We will see ahead that there is a class of functions called {\em kernels} that satisfy the condition that they can be represented as inner products in a transformed space. Whether this transformed space is useful or not depends on the application to which the machine learning method is being applied. The question of usefulness is outside the scope of these notes.

\subsection{Transforming the data: Some examples}

\begin{example}
Suppose $S \subseteq \RR^d$, let $\kappa(\bm{x}) = (\bm{x}(i)\bm{x}(j): 1 \leq i, j \leq d)$.
\end{example}
This transformation maps a $d$-dimensional vector to a $d^2$ dimensional vector comprising all the two coordinate products of the original vector. Prima facie it appears that this means that computing a dot product in the transformed space would be more expensive, $d^2$ operations as opposed to the $d$ operations required to compute a dot product in $\RR^d$. However we observe an interesting property of the transformation. Given $\bm{x}, \bm{z} \in S$, 
\begin{align*}
\phi(\bm{x})\phi(\bm{z})) & = \sum_i \sum_j \bm{x}(i)\bm{x}(j) \bm{z}(i)\bm{z}(j)\\
& =  \sum_i \bm{x}(i)\bm{z}(i) \sum_j \bm{x}(j)\bm{z}(j)\\
& = (\bm{x}^T \bm{z})^2
\end{align*}
This means that instead of performing $d^2$ operations to compute an inner product in the transformed space we can simply compute $d+1$ operations because we have ideantified a (more efficiently computable) function that gives us the value of the inner product in the transformed space.  A similar transformation is the next one:
\begin{example}
Suppose $S \subseteq \RR^2$. For some $c \in \RR$,  let $\phi(\bm{x}) =  (\bm{x}(i)\bm{x}(j): 1 \leq i, j \leq d; \sqrt{2c}\bm{x}(i): 1 \leq i \leq d; c)$
\end{example}
We omit the calculation but in this case we find that 
$$\phi(\bm{x})\phi(\bm{z}))  = (\bm{x}^T \bm{z} + c)^2$$.
We can further generalize this. 
\begin{exercise}
Find the mapping $\phi$ such that the for $\bm{x}, \bm{z} \in \RR^d$, $\phi(\bm{x})^T \phi(\bm{z}) = (\bm{x}^T \bm{z}+c)^d$.
\end{exercise}
The efficiency gained by identifying these function is of no use unless we can say that such a mapping is useful. Scholk\"opf and Smola~\cite{scholkopf:2001} claim that the kinds of transformations we have seen above, they call them ``monomial mappings'', are useful in the case of pattern analysis/classification in images where each coordinate of the data vector corresponds to a pixel. 

We now turn to the general class of functions that Mercer showed in 1909 can be represented as inner products in a transformed space: positive definite kernels.

\section{Kernels}
\label{sec:kernels}

The material in this and subsequent sections closely follows the presentation in the book by Berg, Christensen and Ressel~\cite{berg:1984}. In general we will assume that all scalars are drawn from complex numbers, $\CC$, clarifying explicitly when we are restricting to the reals. We will use $\overline{c}$ to denote the complex conjugate of $c \in \CC$. For an $n \times m$ complex matrix $A$, we will use $A^*$ to denote its conjugate transpose, i.e., $A^*_{ij} = \overline{A_{ji}}$. 

\subsection{Definitions}
\label{sec:kernels:definitions}

We first begin with some basic definitions from linear algebra.

\begin{definition}[Positive definite matrix]
\label{def:mat-pd}
An $n \times n$ matrix $A$ is called {\em positive definite} (p.d.) if 
\begin{equation}
\label{eq:mat-pd}
\sum_{j,k=1}^n c_j \overline{c_k} A_{jk} \geq 0
\end{equation}
for every $\{c_1, \ldots, c_n\} \subseteq \CC$. If the inequality \eqref{eq:mat-pd} holds strictly for every $\{c_1, \ldots, c_n\} \subseteq \CC$ then the matrix is called {\em strictly} positive definite.
\end{definition}

\paragraph{Remarks.} 
\begin{enumerate}
\item In the linear algebra literature the term positive semidefinite is often used for matrices that satisfy \eqref{eq:mat-pd} and positive definite is used for matrices that satisfy \eqref{eq:mat-pd} strictly. 
\item Implicit in Definition~\ref{def:mat-pd} is the fact that the sum on the LHS of \eqref{eq:mat-pd} is real. If it were complex then comparison with 0 would be meaningless. We recall that if $A$ is Hermitian then the LHS is guaranteed to be real.
\item If we consider the $n \times 1$ vector $\bm{c}$ such that $\bm{c}(i) = c_i$ then \eqref{eq:mat-pd} can be rewritten as $\bm{c}^* A \bm{c} \geq 0$.
\end{enumerate}

Clearly, we can define negative definite matrices by reversing the direction of the $\geq$ sign in \eqref{eq:mat-pd}, but we define an interesting class of matrices that contain the negative definite matrices.

\begin{definition}[Conditionally negative definite matrix] An $n \times n$ matrix $A$ is called {\em conditionally negative definite} (c.n.d.) if it is Hermitian and if 
\begin{equation}
\label{eq:mat-cnd}
\sum_{j,k=1}^n c_j \overline{c_k} A_{jk} \leq 0
\end{equation}
for every $\{c_1, \ldots, c_n\} \subseteq \CC$ such that $\sum_{i=1}^n c_i = 0$.
\end{definition}
Note that here we don't need to assume that the LHS of \eqref{eq:mat-cnd} is real since we have explicitly specified that $A$ is Hermitian.  
With these definitions in hand we are ready to define two interesting classes of bivariate functions. In the following we will abbreviate positive definite as p.d. and conditionally negative definite as c.n.d.

\begin{definition}[Positive definite kernel]
\label{def:ker-pd}
Let X be a non empty set. $\varphi : X \times X \rightarrow C$ is a {\em positive definite kernel} if
\begin{equation}
\label{eq:ker-pd}
\sum_{j,k=1}^{n}c_{j}\overline{c_{k}}\varphi(x_{j},x_{k}) \geq 0
\end{equation}
$\forall n \in \NN, \{x_{1},x_{2},\ldots,x_{n}\} \subseteq X$, and $\{c_{1},c_{2},\ldots,c_{n}\} \subseteq \CC$.
\end{definition}

\begin{example}
\label{eg:dot-product}
For $d > 0$, $\varphi: \RR^d \times \RR^d \rightarrow \RR$ defined as $\varphi(\bm{x}, \bm{y}) = \bm{x}^T \bm{y}$ is a p.d. kernel.
\end{example}
\begin{proof}
For some $n \in \NN$, suppose $\{\bm{x}_1, \ldots, \bm{x}_n \} \subseteq \RR^d$ and $\{c_{1},c_{2},\ldots,c_{n}\} \subseteq \CC$. Then 
\begin{align*}
\sum_{j,k = 1}^n c_j \overline{c_{k}}\varphi(\bm{x}_{j},\bm{x}_{k}) & = \sum_{j,k = 1}^n c_j \overline{c_{k}}\bm{x}_k^T \bm{x}_j = \sum_{j,k = 1}^n (c_k \bm{x}_k)^* (c_j \bm{x}_j)\\
& = \left( \sum_{j=1}^n c_j \bm{x}_j\right)^* \left( \sum_{j=1}^n c_j \bm{x}_j\right) = \left| \sum_{j=1}^n c_j \bm{x}_j \right|^2 > 0.
\end{align*}
\end{proof}
Example~\ref{eg:dot-product} shows that all bivariate functions that are dot products of a Euclidean space are p.d. kernels. Later we will see that this is true in general for the inner product of an inner product (pre-Hilbert) space.

\begin{definition}[Conditionally negative definite kernel]
\label{def:ker-cnd}
Let X be a non empty set. $\psi : X \times X \rightarrow C$ is a {\em conditionally negative definite kernel} if
it is Hermitian, i.e., $\psi(x,y) = \overline{\psi(y,x)} \forall x,y \in X$, and\begin{equation}
\label{eq:ker-cnd}
\sum_{j,k=1}^{n}c_{j}\overline{c_{k}}\varphi(x_{j},x_{k}) \leq 0
\end{equation}
$\forall n \geq 2, \{x_{1},x_{2},\ldots,x_{n}\} \subseteq X$, and $\{c_{1},c_{2},\ldots,c_{n}\} \subseteq \CC$ such that $\sum c_{i} = 0$.
\end{definition}

\begin{example}
$\psi: \RR\times \RR \rightarrow \RR$ defined as $\psi(x,y) = (x-y)^2$ is a c.n.d. kernel.
\end{example}
\begin{proof}
Since $\psi$ is symmetric so the condition that the kernel should be Hermitian is satisfied. Given $n \geq 2$, let $\{x_1, \ldots, x_n\} \subseteq \RR$ and $\{c_{1},c_{2},\ldots,c_{n}\} \subseteq \CC$ such that $\sum c_{i} = 0$. Now, 
\[
\sum_{j,k=1}^{n}c_{j}\overline{c_{k}}(x_j-x_k)^2  =  \sum_{j=1}^n c_j x_j^2 \sum_{k=1}^n \overline{c_k} + \sum_{k=1}^n \overline{c}_k x_k^2 \sum_{j=1}^n c_j - 2 \sum_{j,k=1}^n c_j \overline{c_k} x_j x_k
\]
By the condition $\sum c_i = 0$ the first two terms on the RHS are 0 and the third term can be rewritten as 
\begin{align*}
- 2 \left(\sum_{j=1}^n c_j x_j\right) \left(\overline{\sum_{j=1}^n c_j x_j}\right) & = -2 \left| \sum_{j=1}^n c_j x_j\right|^2\\
& \leq 0.
\end{align*}
\end{proof}

\paragraph{Remarks.}
\begin{enumerate}
\item If the inequality \eqref{eq:ker-pd} (resp. \eqref{eq:ker-cnd}) is strict then the kernel is called {\em strictly} positive definite (resp. strictly negative definite).
\item In both Definitions~\ref{def:ker-pd} and~\ref{def:ker-cnd} if we had imposed the restriction that all the $x_i$s are distinct we would {\em not} get a weaker definition since $$\sum_{j,k=1}^{n}c_{j}\bar{c_{k}}\varphi(x_{j},x_{k}) = \sum_{j,k=1}^{p}d_{j}\bar{d_{k}}\varphi(x_{\alpha_{j}},x_{\alpha_{k}})$$
where the $x_{\alpha_{i}}$ are distinct and $d_{k} := \sum_{i: x_i = x_{\alpha_k}} c_{i}$ for $k = 1,\ldots,p$.
\item If $\sigma : X \rightarrow X$ is a bijection then $\varphi$ is a p.d. (respectively c.n.d.) kernel iff $\varphi \circ (\sigma \times \sigma)$ is a p.d. (resp. c.n.d.) kernel. This comes from the fact that all possible finite subsets of $X$ satisfy the condition \eqref{eq:ker-pd} (resp. \eqref{eq:ker-cnd}), and so if this condition is satisfied by distinct $x_1, \ldots, x_n$ for some $n \in N$ (resp $n \geq 2$) then it is satisfied by $\sigma(x_1), \ldots, \sigma(x_n)$ which is also a set of distinct elements from $X$.
\item If X is finite with $| X | = n$, then $\varphi$ is a p.d. (respectively c.n.d.) kernel iff the $n \times n$ matrix $A$ with  $A_{ij} = \varphi(x_{j},x_{k})$, $1 \leq j,k \leq n$ is p.d. (resp. c.n.d.). This can be deduced from the fact that all the principal submatrices of a p.d. matrix are p.d. and from a similar argument for the c.n.d. case.
\item For the c.n.d. case we restrict $n$ to be at least 2 since if $n=1$ then $c_1$ will have to be 0 to satisfy the condition of summing to 0, i.e., the definition will be trivially true for all bivariate functions.
\end{enumerate}

\subsection{Some properties of p.d. and c.n.d. kernels}
\label{sec:kernels:properties}

\subsubsection{Basic properties}
\label{sec:kernels:properties:basic}
We prove some foundational properties of the classes of kernels defined above. First we show that the definition of positive defineteness for a kernel implies that the kernel is Hermitian.
\begin{proposition}[P.d. kernels are Hermitian]
\label{prp:pd-hermitian}
If $\varphi$ is a p.d. kernel defined on $X \times X$ then all its diagonal elements are positive, i.e., $\varphi(x,x) \geq 0$ for all $x \in X$ and $\varphi$ is Hermitian, i.e. $\varphi(x,y) = \overline{\varphi(y,x)}$ for all $x,y \in X$.
\end{proposition}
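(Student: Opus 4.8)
The plan is to derive both conclusions from the smallest instances of the defining inequality \eqref{eq:ker-pd}, using the fact (recorded in the remark after Definition~\ref{def:mat-pd}) that its left-hand side is a real number for every admissible choice of coefficients, since otherwise the comparison with $0$ would be meaningless.

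First I would handle the diagonal. Taking $n = 1$, the single point $x_1 = x$, and $c_1 = 1$ in \eqref{eq:ker-pd} collapses the sum to the single term $\varphi(x,x)$, so the inequality reads $\varphi(x,x) \geq 0$ at once. In particular each $\varphi(x,x)$ is real, a fact I will reuse below.

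For the Hermitian property I would specialize to $n = 2$ with points $x_1 = x$, $x_2 = y$ and arbitrary $c_1, c_2 \in \CC$, so that \eqref{eq:ker-pd} yields a nonnegative (hence real) quantity
$$S(c_1,c_2) = |c_1|^2\varphi(x,x) + c_1\overline{c_2}\,\varphi(x,y) + \overline{c_1}c_2\,\varphi(y,x) + |c_2|^2\varphi(y,y).$$
Writing $u = \varphi(x,y) - \overline{\varphi(y,x)}$ and subtracting $\overline{S}$ from $S$, the diagonal terms drop out because $\varphi(x,x)$ and $\varphi(y,y)$ are real, and a short manipulation leaves
$$S - \overline{S} = c_1\overline{c_2}\,u - \overline{c_1\overline{c_2}\,u} = 2i\,\im\!\left(c_1\overline{c_2}\,u\right).$$
Since $S$ is real this must vanish, so $c_1\overline{c_2}\,u$ is real for every choice of $c_1, c_2$.

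Finally I would pin down $u$ by two choices: $(c_1,c_2) = (1,1)$ shows that $u$ is real, while $(c_1,c_2) = (i,1)$ shows that $iu$ is real; the only number that is real together with $i$ times itself is $0$, so $u = 0$, i.e.\ $\varphi(x,y) = \overline{\varphi(y,x)}$. The only delicate point is this last step: reality of $S$ for a single coefficient pair constrains just one real combination of the real and imaginary parts of $u$, so the argument must genuinely vary $c_1, c_2$ (equivalently, probe with a real and then an imaginary multiplier) in order to force both parts to vanish. Everything else is direct substitution into \eqref{eq:ker-pd}.
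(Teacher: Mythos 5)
Your proposal is correct and follows essentially the same route as the paper: both proofs take $n=1$ to get $\varphi(x,x)\geq 0$ real, then probe the $n=2$ instance with a real coefficient pair and a pair with imaginary ratio (your $(1,1)$ and $(i,1)$ versus the paper's $(1,1)$ and $(1,-i)$) to force the imaginary and real parts of $\varphi(x,y)-\overline{\varphi(y,x)}$ to vanish respectively. Your packaging of the argument through the single quantity $u$ and the identity $S-\overline{S}=2i\,\im(c_1\overline{c_2}u)$ is a tidy reorganization, but not a different proof.
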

\begin{proof}
It is easy to see that $\varphi(x,x)$ is real and $\geq 0$ by setting $n = 1$ in Definition~\ref{def:ker-pd}. 

We now show that $\varphi$ is Hermitian. For some $x, y \in X$ such that $x \ne y$ let $A = \begin{pmatrix}
  \varphi(x,x) & \varphi(x,y)\\ 
  \varphi(y,x) & \varphi(y,y)
\end{pmatrix}$ and let  $\bm{c}_1 = \begin{pmatrix} 1 \\1\end{pmatrix}$.
Since $\varphi$ is p.d. we have that 
\[ \bm{c}_1^* A \bm{c}_1 = \varphi(x,x) + \varphi(y,y) + \varphi(x,y) + \varphi(y,x) \geq 0.\]
Since we have already shown that $\varphi(x,x)$ and $\varphi(y,y)$ are real, so this proves that $\im \varphi(x,y) = - \im \varphi(y,x)$.

Similarly, if we take
$\bm{c}_2 = 
\begin{pmatrix}
1 \\
-i
\end{pmatrix}$, since $\varphi$ is p.d. we have that 
\[ \bm{c}_2^* A \bm{c}_2 = \varphi(x,x) + \varphi(y,y) + i (\varphi(y,x) - \varphi(x,y)) \geq 0,\]
which can only be true if $\re \varphi(x,y) = \re \varphi(y,x)$.
\end{proof}
\paragraph{Remark.} As we have just seen, a p.d. kernel can be shown to be Hermitian. However the same argument doesn't hold for c.n.d. kernels since the vectors $\bm{c}_1$ and $\bm{c}_2$ don't have the property that their coordinates add up to 0. So we have to add the requirement that c.n.d. kernels are Hermitian in Definition~\ref{def:ker-cnd}.

Next we show that for real-valued kernels, symmetry and positive (resp. conditionally negative) definiteness w.r.t. real vectors is good enough to show that they are p.d. (resp. c.n.d.).
\begin{proposition}[Real-valued kernels]
\label{prp:real-kernel}
A real-valued kernel $\varphi: X \times X \rightarrow \RR$ is p.d. (resp. c.n.d.) iff $\varphi$ is symmetric, i.e., $\varphi(x,y) = \varphi(y,x)$ for all $x, y \in X$, and 
\begin{equation}
\label{eq:real-kernel}
\sum_{j,k = 1}^{n} r_j r_k \varphi(x_j, x_k) \geq 0 \;\;(\mbox{resp. } \leq 0)
\end{equation}
for all $n \in \NN$ (resp. $n \geq 2$), $\{x_1, \ldots, x_n\}\subseteq X$, $\{r_1, \ldots, r_n\} \subseteq \RR$ (resp. additionally $\sum_{i=1}^n r_i = 0$).
\end{proposition}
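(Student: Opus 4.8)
The plan is to prove both implications, handling the p.d.\ and c.n.d.\ cases in parallel. The forward direction is almost immediate: if $\varphi$ is p.d.\ (resp.\ c.n.d.) then it is Hermitian---by Proposition~\ref{prp:pd-hermitian} in the p.d.\ case, and by the explicit requirement in Definition~\ref{def:ker-cnd} in the c.n.d.\ case---and for a real-valued kernel the Hermitian condition $\varphi(x,y)=\overline{\varphi(y,x)}$ collapses to symmetry $\varphi(x,y)=\varphi(y,x)$. Inequality~\eqref{eq:real-kernel} then follows by restricting the defining inequality~\eqref{eq:ker-pd} (resp.~\eqref{eq:ker-cnd}) to real coefficients, since real vectors are a special case of complex ones.

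The content is in the reverse direction. Assuming $\varphi$ is symmetric and that~\eqref{eq:real-kernel} holds for all real coefficient vectors, I would take an arbitrary $\{c_1,\dots,c_n\}\subseteq\CC$ and split each coefficient into real and imaginary parts, $c_j=a_j+ib_j$ with $a_j,b_j\in\RR$. Expanding the product $c_j\overline{c_k}=(a_ja_k+b_jb_k)+i(b_ja_k-a_jb_k)$ and summing against the real quantities $\varphi(x_j,x_k)$ separates $\sum_{j,k}c_j\overline{c_k}\varphi(x_j,x_k)$ into a real part $\sum_{j,k}(a_ja_k+b_jb_k)\varphi(x_j,x_k)$ and an imaginary part $\sum_{j,k}(b_ja_k-a_jb_k)\varphi(x_j,x_k)$.

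The crux is that symmetry forces the imaginary part to vanish. Relabelling the summation indices $j\leftrightarrow k$ in one of its two halves and invoking $\varphi(x_k,x_j)=\varphi(x_j,x_k)$ shows the two halves are equal and hence cancel; this is exactly where symmetry, rather than mere nonnegativity on real vectors, is indispensable. The surviving real part is $\sum_{j,k}a_ja_k\varphi(x_j,x_k)+\sum_{j,k}b_jb_k\varphi(x_j,x_k)$, a sum of two real quadratic forms of the type governed by~\eqref{eq:real-kernel}. Each is $\geq 0$ (resp.\ $\leq 0$) by hypothesis, so their sum has the required sign, and we conclude that $\sum_{j,k}c_j\overline{c_k}\varphi(x_j,x_k)$ is real with the correct sign, i.e.~\eqref{eq:ker-pd} (resp.~\eqref{eq:ker-cnd}) holds.

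In the c.n.d.\ case one additional observation is required: the constraint $\sum_i c_i=0$ must pass to the real data. Writing $\sum_i c_i=\sum_i a_i+i\sum_i b_i$ and using that a complex number vanishes iff its real and imaginary parts both vanish, I would note that $\sum_i a_i=0$ and $\sum_i b_i=0$ separately, so both real quadratic forms above are evaluated on zero-sum real vectors and the c.n.d.\ hypothesis in~\eqref{eq:real-kernel} applies to each. I do not anticipate any genuine obstacle; the only delicate point is the index-swap that annihilates the imaginary part, which must be carried out with care.
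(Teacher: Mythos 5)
Your proposal is correct and follows essentially the same route as the paper's proof: decompose $c_j = a_j + i b_j$, expand $c_j\overline{c_k}$ so that symmetry annihilates the imaginary part, and recognise the real part as a sum of two quadratic forms of the type in~\eqref{eq:real-kernel}. You additionally spell out the forward direction and the observation that $\sum_i c_i = 0$ forces $\sum_i a_i = \sum_i b_i = 0$ in the c.n.d.\ case, details the paper leaves implicit.
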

\begin{proof}
Given $\{x_1, \ldots, x_n\} \subseteq X$, consider $\{c_1, \ldots, c_n\} \subseteq \CC$. For each $j$, let $a_j = \re c_j$ and $b_j = \im c_j$. Then we have that
\[
\sum_{j,k = 1}^{n} c_j \overline{c_k} \phi(x_j, x_k) =  \sum_{j,k = 1}^{n} (a_j a_k + b_j b_k) \phi(x_j, x_k) + i \sum_{j,k = 1}^{n}(b_j a_k - a_j b_k) \phi(x_j, x_k)
\]
Symmetry ensures that the imaginary part of the RHS is 0. The real part of the RHS is non-negative for the p.d. case and non-positive for the c.n.d. case because it is the sum of two summations that can be written in the form of the LHS of \eqref{eq:real-kernel}.
\end{proof}

We present an important application of Proposition~\ref{prp:real-kernel}.
\begin{example}
\label{ex:gaussian-cnd}
The real-valued kernel $\psi \colon \mathbb{R} \times \mathbb{R} \rightarrow \mathbb{R}$, defined as $\psi(x,y) = (x-y)^2$ is conditionally negative definite (c.n.d.)
\end{example}

\begin{proof} Proposition~\ref{prp:real-kernel} stipulates that $\psi$ should be symmetric, which is clearly the case. So now, nor some $n$, we consider $\{x_{1},\ldots,x_{n}\} \subseteq \mathbb{R}$ and $\{c_{1},\ldots,c_{n}\} \subseteq \mathbb{R}$ s.t. $\sum_{i}c_{i}=0$. \\
\begin{align*}
    \sum_{j,k} c_{j}c_{k}(x_{j}-x_{k})^{2} 
    &= \sum_{j,k} c_{j}c_{k}\left(x_{j}^2 -x_{k}^2 - 2x_{j}x_{k} \right)\\
    &= \sum_{k}c_{k}\sum_{j}c_{j}x_{j}^2 + \sum_{j}c_j\sum_{k}c_{k}x_{k}^2 - 2\sum_{j,k}c_{j}c_{k}x_{j}x_{k}\\
    &= 0 + 0 + -2\left(\sum_{i}c_{i}x_{i}\right)^{2} \leq 0
\end{align*}
The last simplification follows from $\sum_{i}c_{i}=0$ because of which the first two terms on the RHS equal zero.
\end{proof}
We note from the proof that it is fairly straightforward to prove the same property if $\psi$ is defined as the square of the Euclidean distance between two points in $\RR^d$, i.e., the square of the euclidean distance gives us a c.n.d. kernel. Later we will see this can be extended to general Hilbert spaces. 

\subsubsection{Relating diagonal and off-diagonal elements}
\label{sec:kernels:properties:diagonal}
We now present two results that relate the diagonal elements of p.d. and c.n.d. kernels with their off-diagonal elements. 
\begin{proposition}
\label{prp:cnd-diagonal}
For a c.n.d. kernel $\psi: X \times X \rightarrow \CC$, $\psi(x,x) + \psi(y,y) \leq 2 \re \psi(x,y)$ for all $x,y \in X$.
\end{proposition}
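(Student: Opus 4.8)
The plan is to invoke the defining inequality for c.n.d. kernels with the smallest nontrivial instance, namely $n = 2$. The obstacle for c.n.d. kernels, as the remark after Proposition~\ref{prp:pd-hermitian} emphasizes, is that the coefficient vectors must sum to zero, so I cannot use the all-ones vector as in the p.d. case. The key observation is that for two points there is an essentially unique (up to scaling) coefficient choice satisfying $\sum c_i = 0$, namely $\bm{c} = (1, -1)$, and this is exactly what isolates the quantity $\psi(x,x) + \psi(y,y) - 2\re\psi(x,y)$.

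Concretely, I would first dispose of the trivial case $x = y$: since $\psi$ is Hermitian by hypothesis, $\psi(x,x) = \overline{\psi(x,x)}$ is real, so $\re\psi(x,x) = \psi(x,x)$ and the claimed inequality holds with equality. For $x \neq y$, I would set $n = 2$, take the points $x_1 = x$ and $x_2 = y$, and choose $c_1 = 1$, $c_2 = -1$, which satisfies $c_1 + c_2 = 0$ as required by Definition~\ref{def:ker-cnd}. Expanding the double sum then yields
\[
\sum_{j,k=1}^{2} c_j \overline{c_k}\, \psi(x_j, x_k)
= \psi(x,x) + \psi(y,y) - \psi(x,y) - \psi(y,x).
\]

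The final step uses the Hermitian property $\psi(y,x) = \overline{\psi(x,y)}$, so that $\psi(x,y) + \psi(y,x) = \psi(x,y) + \overline{\psi(x,y)} = 2\re\psi(x,y)$. Substituting, the c.n.d. inequality $\sum_{j,k} c_j\overline{c_k}\psi(x_j,x_k) \leq 0$ becomes
\[
\psi(x,x) + \psi(y,y) - 2\re\psi(x,y) \leq 0,
\]
which rearranges to the desired bound. I do not expect any real difficulty here; the only point requiring care is remembering to use the Hermitian hypothesis to collapse the two off-diagonal terms into a single real part, since without it the left-hand side would not simplify to a real expression comparable with zero.
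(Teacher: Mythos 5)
Your proposal is correct and follows essentially the same route as the paper: apply the c.n.d. inequality with $n=2$ and coefficient vector $(1,-1)$, then use the Hermitian hypothesis to collapse $\psi(x,y)+\psi(y,x)$ into $2\re\psi(x,y)$. The separate treatment of $x=y$ is a harmless extra precaution that the paper omits.
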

\begin{proof}
Define $A = \begin{pmatrix}
  \psi(x,x) & \psi(x,y)\\ 
  \psi(y,x) & \psi(y,y)
\end{pmatrix}$ and let  $\bm{c} = \begin{pmatrix} 1 \\-1\end{pmatrix}$.
Since $\psi$ is c.n.d. we have that 
\[ \bm{c}^* A \bm{c}_1 = \psi(x,x) + \psi(y,y) - \psi(x,y) - \psi(y,x) \leq 0.\]
Since $\psi$ is Hermitian by definition, i.e., $\psi(y,x) = \overline{\psi(x,y)}$, the result follows. 
\end{proof}

\begin{proposition}
\label{prp:pd-diagonal}
For a p.d. kernel $\varphi: X \times X \rightarrow \CC$, $|\varphi(x,y)|^2 \leq \varphi(x,x)\varphi(y,y)$ for all $x,y \in X$.
\end{proposition}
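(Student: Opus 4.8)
The plan is to establish this as a Cauchy--Schwarz-type inequality by restricting the positive definiteness condition to the two points $x$ and $y$ and choosing the coefficients so that the resulting quadratic form becomes a real quadratic in a single real variable. First I would recall from Proposition~\ref{prp:pd-hermitian} that $\varphi$ is Hermitian and that $\varphi(x,x)$ and $\varphi(y,y)$ are real and non-negative; in particular the off-diagonal entries satisfy $\varphi(y,x) = \overline{\varphi(x,y)}$. I would dispose of the trivial case $\varphi(x,y) = 0$ immediately, since then the left-hand side is $0$ while the right-hand side is a product of two non-negative reals.

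Next, assuming $\varphi(x,y) \neq 0$, I would apply Definition~\ref{def:ker-pd} with $n = 2$, $x_1 = x$, $x_2 = y$, and coefficients $c_1 = t\,\omega$, $c_2 = 1$, where $t \in \RR$ is a free parameter and $\omega = -\overline{\varphi(x,y)}/|\varphi(x,y)|$ is a fixed phase of modulus $1$. Expanding the sum and using $\varphi(y,x) = \overline{\varphi(x,y)}$, the two cross terms combine into $2\,\re\!\left(c_1\overline{c_2}\,\varphi(x,y)\right) = 2t\,\re\!\left(\omega\,\varphi(x,y)\right) = -2t\,|\varphi(x,y)|$, so the positive definiteness inequality becomes
\[
t^2\,\varphi(x,x) - 2t\,|\varphi(x,y)| + \varphi(y,y) \;\geq\; 0 \qquad \text{for all } t \in \RR.
\]

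Finally I would read off the result from this scalar inequality. When $\varphi(x,x) > 0$ the left-hand side is a real quadratic in $t$ with positive leading coefficient that is non-negative everywhere, so its discriminant must be non-positive, i.e. $4\,|\varphi(x,y)|^2 - 4\,\varphi(x,x)\,\varphi(y,y) \leq 0$, which is exactly the claim. The degenerate case $\varphi(x,x) = 0$ must be handled separately: there the inequality reads $-2t\,|\varphi(x,y)| + \varphi(y,y) \geq 0$ for all real $t$, which forces $|\varphi(x,y)| = 0$ (otherwise letting $t \to +\infty$ gives a contradiction), again yielding the claim.

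I expect the only real subtlety to be the correct handling of the complex phase $\omega$: one must choose it so that the imaginary part of the cross term vanishes and the real part carries the favourable sign $-2t|\varphi(x,y)|$, and this is where a careless sign or conjugation would derail the argument. Everything else --- expanding the quadratic form and invoking the discriminant condition --- is routine, and the degenerate diagonal case is a quick aside. An alternative, even shorter route would be to observe that the $2\times 2$ Hermitian matrix $\bigl(\varphi(x_j,x_k)\bigr)$ is positive semidefinite and hence has non-negative determinant $\varphi(x,x)\varphi(y,y) - |\varphi(x,y)|^2$, but that shortcut presupposes the fact that a positive semidefinite Hermitian matrix has non-negative determinant, which has not been established in the text, so I would prefer the self-contained discriminant argument above.
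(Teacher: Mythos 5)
Your proof is correct and follows essentially the same route as the paper's: both restrict the positive-definiteness condition to the $2\times 2$ Gram matrix at $x$ and $y$ and extract the inequality from the non-negativity of the resulting quadratic form, the paper by completing the square over $(w,z)\in\CC^2$ and substituting $w=-bz/a$, you by normalizing the phase and applying the discriminant criterion to a real quadratic in $t$. If anything, your explicit treatment of the degenerate case $\varphi(x,x)=0$ is more careful than the paper's, which only argues the case $a\neq 0$.
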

\begin{proof}
For $x,y \in X$ define 
$A = \begin{pmatrix}
  a  & \overline{b}\\ 
  b & d
\end{pmatrix}$
where $a = \varphi(x,x)$, $d = \varphi(y,y)$, and $b = \varphi(y,x)$. Let us consider a vector $\bm{c} = \begin{pmatrix}\overline{w} \\ \overline{z}\end{pmatrix}$. We have that 
\[\begin{pmatrix}
w & z
\end{pmatrix}
\begin{pmatrix}
  a & \overline{b}\\ 
  b & d
\end{pmatrix}
\begin{pmatrix}
\overline{w} \\
\overline{z}
\end{pmatrix} = a|w|^2 + 2 \re (bz\overline{w}) + d|z|^2.
\]
When $a \ne 0$, we can rewrite the RHS as
\begin{equation}
\label{eq:pd-diagonal}
a\left|w + \dfrac{b}{a} z\right|^2 + \dfrac{|z|^2}{a}(ad - |b|^2).
\end{equation}
Since this quantity should be $\geq 0$ for all choices of $w$ and $z$, we can deduce that $a > 0$ since if this is not so we can choose $z = 0$ to contradict the fact that $\varphi$ is p.d. By a similar argument when $d \ne 0$ we can rewrite the quantity in \eqref{eq:pd-diagonal}  with the roles of $a$ and $d$ reversed to deduce that $d \geq 0$. 

Examining \eqref{eq:pd-diagonal} we note that if we choose $w = -bz/a$ the first term becomes 0, so if $\varphi$ is p.d. it is necessary that $ad - |b|^2 \geq 0$ which, taken with the fact that $a,d \geq 0$, proves the result.
\end{proof}

\subsubsection{Creating kernels from univariate functions}
\label{sec:kernels:properties:univariate}
A univariate complex-valued function can be used to create both a p.d. kernel and a c.n.d. kernel. The following proposition shows how.
\begin{proposition}
\label{prp:univariate}
Given a function $f: X \rightarrow \CC$, $\varphi(x,y) = f(x) \overline{f(y)}$ is a p.d. kernel and $\psi(x,y) = f(x) + \overline{f(y)}$ is a c.n.d. kernel.
\end{proposition}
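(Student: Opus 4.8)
The plan is to verify each claim by direct substitution into the defining inequalities \eqref{eq:ker-pd} and \eqref{eq:ker-cnd}, exploiting the product structure of $\varphi$ and the sum structure of $\psi$ to factor the double sums.

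For the p.d. claim, I would fix $n \in \NN$, points $\{x_1, \ldots, x_n\} \subseteq X$, and scalars $\{c_1, \ldots, c_n\} \subseteq \CC$, substitute $\varphi(x_j, x_k) = f(x_j)\overline{f(x_k)}$, and observe that the double sum factors as a single sum times its conjugate:
\[
\sum_{j,k=1}^n c_j \overline{c_k}\, f(x_j)\overline{f(x_k)} = \left(\sum_{j=1}^n c_j f(x_j)\right)\overline{\left(\sum_{k=1}^n c_k f(x_k)\right)} = \left|\sum_{j=1}^n c_j f(x_j)\right|^2 \geq 0.
\]
This is the same maneuver as in Example~\ref{eg:dot-product}, with the scalar $f(x_j)$ playing the role of the vector $\bm{x}_j$.

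For the c.n.d. claim I would proceed in two steps. First, since Definition~\ref{def:ker-cnd} requires $\psi$ to be Hermitian, I would check that property directly: $\overline{\psi(y,x)} = \overline{f(y) + \overline{f(x)}} = f(x) + \overline{f(y)} = \psi(x,y)$. Second, I would fix $n \geq 2$, points, and scalars with the additional constraint $\sum_i c_i = 0$, substitute $\psi(x_j, x_k) = f(x_j) + \overline{f(x_k)}$ into \eqref{eq:ker-cnd}, and split the result into two double sums. Each of these contains a free factor of either $\sum_k \overline{c_k}$ or $\sum_j c_j$, both of which vanish by the zero-sum constraint. Hence the whole expression equals $0$, which certainly satisfies the required inequality $\leq 0$.

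There is no genuine obstacle here; both computations are short, and the p.d. calculation is essentially a reprise of the dot-product example. The one point requiring care is the Hermitian check in the c.n.d. case, which must be carried out separately because — as the Remark following Proposition~\ref{prp:pd-hermitian} explains — it does not follow for free from the zero-sum inequality the way it does for p.d. kernels. I would also note in passing that the c.n.d. inequality is in fact attained with equality for this $\psi$, so the conclusion is stronger than strictly needed.
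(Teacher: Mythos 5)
Your proof is correct and follows essentially the same route as the paper's: factor the p.d. double sum into $\left|\sum_j c_j f(x_j)\right|^2$ and split the c.n.d. double sum so that each piece carries a vanishing factor $\sum_j c_j$ or $\sum_k \overline{c_k}$. Your explicit verification of the Hermitian property of $\psi$ is a small but worthwhile addition that the paper's proof leaves implicit, even though Definition~\ref{def:ker-cnd} formally requires it.
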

\begin{proof}
Consider first the p.d. case. For some choice of $n \in N$, $\{x_1, \ldots, x_n\} \subseteq X$ and $\{c_1, \ldots, c_n\} \subseteq \CC$, we have 
\begin{align*}
\sum_{j=1}^n\sum_{k=1}^n c_j \overline{c_k} f(j)\overline{f(k)}&= \sum_{j=1}^n c_j f(j) \sum_{k=1}^n \overline{c_k f(k)}\\
& = \left| \sum_{j=1}^n c_j f(j) \right|^2 \geq 0.
\end{align*}
For the c.n.d. case we assume $\sum_{i=1}^n c_i = 0$. So,
\begin{align*}
\sum_{j=1}^n\sum_{k=1}^n c_j \overline{c_k} (f(j)+\overline{f(k)})&= \sum_{j=1}^n c_j f(j) \sum_{k=1}^n \overline{c_k} + \sum_{k=1}^n \overline{c_kf(k)}\sum_{j=1}^n c_j\\
& = 0
\end{align*}
\end{proof}
\begin{corollary}
A constant function $\varphi(x,x) = c$ is p.d. if and only if $c \geq 0$ and c.n.d. if and only if  $c \in \RR$.
\end{corollary}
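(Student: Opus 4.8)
The plan is to read the constant function as the bivariate map $\varphi(x,y) = c$ for all $x,y \in X$, and then deduce both equivalences almost entirely from Proposition~\ref{prp:univariate}, which is tailor-made for this purpose.

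For the positive definite equivalence I would first dispatch the easy direction. Given $c \geq 0$, take $f \equiv \sqrt{c}$, the constant real function on $X$. Then $f(x)\overline{f(y)} = \sqrt{c}\cdot\sqrt{c} = c$, so by the p.d.\ half of Proposition~\ref{prp:univariate} the constant kernel $c$ is p.d. For the converse, if the constant kernel is p.d., then Proposition~\ref{prp:pd-hermitian} guarantees that every diagonal entry is real and nonnegative; since $\varphi(x,x) = c$ (and $X$ is nonempty), this forces $c \geq 0$.

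For the conditionally negative definite equivalence, the key observation is that the constant kernel automatically satisfies the summation inequality whenever $\sum_i c_i = 0$: indeed $\sum_{j,k} c_j\overline{c_k}\, c = c\,\bigl|\sum_j c_j\bigr|^2 = 0$, regardless of the value of $c$. Hence the only genuine constraint comes from the Hermitian requirement built into Definition~\ref{def:ker-cnd}, namely $\varphi(x,y) = \overline{\varphi(y,x)}$, i.e.\ $c = \overline{c}$, which holds precisely when $c \in \RR$. For the forward implication one may alternatively invoke the c.n.d.\ half of Proposition~\ref{prp:univariate} with $f \equiv c/2$, giving $f(x) + \overline{f(y)} = c/2 + \overline{c/2} = c$ for real $c$.

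There is no hard step here; the corollary is essentially immediate. The one point worth flagging is that, unlike the p.d.\ case, conditional negative definiteness does not by itself force realness through the summation inequality — that inequality is satisfied vacuously — so the realness of $c$ must instead be extracted from the separately imposed Hermitian condition. This is exactly the asymmetry highlighted in the remark following Proposition~\ref{prp:pd-hermitian}.
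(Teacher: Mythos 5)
Your proposal is correct and follows essentially the same route as the paper: both directions of the p.d.\ claim come from nonnegativity of diagonal entries plus Proposition~\ref{prp:univariate} with $f \equiv \sqrt{c}$, and the c.n.d.\ claim reduces to the Hermitian requirement forcing $c \in \RR$, with sufficiency via Proposition~\ref{prp:univariate} applied to a function whose real part is $c/2$. Your added observation that the c.n.d.\ summation inequality is satisfied vacuously (the sum equals $c\,\bigl|\sum_j c_j\bigr|^2 = 0$) is a nice explicit justification of the asymmetry the paper only gestures at.
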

\begin{proof}
Clearly if $c < 0$ then the function cannot be p.d. But if $c \geq 0$ then we can choose $f(x) = \sqrt{c}$ and apply Prop.~\ref{prp:univariate} to prove that $\varphi(x,y) = f(x)\overline{f(y)}$ is p.d.

For the c.n.d. case, note that a constant function cannot have a non-zero imaginary part since that would violate the condition that the function is Hermitian. For any $c \in \RR$, if we set $f(x) = s$ for any $s \in \CC$ s.t. $\re s = c/2$, then, by Prop.~\ref{prp:univariate}, $f(x) + \overline{f(y)} = c$ is c.n.d. 
\end{proof}

\subsubsection{Combining kernels}
\label{sec:kernels:properties:combining}
We now discuss the properties of kernels created by combining other kernels. First we define some terms. 

\begin{definition}[Convex cone]
Suppose $X$ is a vector space with associated scalar field $\mathbb{F} \in \{\mathbb{C}, \mathbb{R} \}$. A set $S \subseteq X$ is called a {\em convex cone} if $\forall x,y \in S$ and $\alpha, \beta \geq 0$, $\alpha x + \beta y \in S$.
\end{definition}

\begin{definition}[Pointwise convergence]
If $\CF$ is the set of functions from a set $X$ to $\CC$ and $\{f_n\}_{n\geq 0}$  is a sequence of functions from $\CF$, we say $\{f_n\}_{n\geq 0}$ {\em converges pointwise} to $f \in \CF$ if for all $x \in X$
$$\lim_{n\rightarrow \infty}f_{n}(x) = f(x).$$
\end{definition}

\begin{definition}[Closure under pointwise convergence]
Let $F \subseteq \CF$ be a set of functions from $X$ to $\CC$. Then we say $F$ is closed under the topology of pointwise convergence if for every sequence $\{ f_n \}_{n \geq 0}$ in $\mathcal{F}$ converges pointwise to a function $f \in F$.
\end{definition}
We illustrate the notion of closure under pointwise convergence by an example where it does not hold. 
\begin{example} Let $\CF$ be the set of functions from $\RR$ to $\RR$ and let $F$ be the set of functions from $[0,1]$ to $[0,1)$. Consider the sequence $\{f_n\}_{n\geq 0}$ such that 
\[f_{i}(x) = x^{2} - \frac{1}{i}.\]
Clearly this sequence converges pointwise to $f(x) = x^2$. But since $f(1) = $, $f \notin F$ although every function in the sequence is in $F$. Hence, $F$ is not closed under the topology of pointwise convergence.
\end{example}

\begin{definition}[Gram matrix]
Suppose $X$ is a non-empty set and $\varphi: X \times X \rightarrow \CC$ is a kernel defined on $X$. Given $V = \{v_1,v_2,\ldots,v_n\} \subseteq X$ we will refer matrix $G$ with entries $G_{ij} = \varphi(v_i,v_j)$ as the {\em Gram matrix} of $V$ w.r.t $\varphi$.
\end{definition}
Note that this is a somewhat more general use of the term Gram matrix than is usually encountered in the linear algebra literature.

We now proceed to the properties of combinations of kernels. 
\begin{proposition}[Convex combination]
\label{prp:convexcone}
 P.d. (c.n.d) kernels  form a convex cone that is closed in the topology of pointwise convergence.
\end{proposition}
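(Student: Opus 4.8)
The plan is to observe that both defining conditions, \eqref{eq:ker-pd} and \eqref{eq:ker-cnd}, are expressed through the quadratic form $Q_\varphi(\bm{c}, V) := \sum_{j,k=1}^{m} c_j \overline{c_k}\, \varphi(x_j, x_k)$, and that this form depends \emph{linearly} on $\varphi$ for any fixed choice of points $V = \{x_1,\ldots,x_m\}$ and coefficients $\{c_1,\ldots,c_m\}$. Once this linearity is in hand, the convex cone property and closure under pointwise convergence both reduce to elementary facts about real numbers: that nonnegative linear combinations preserve the sign of a quantity, and that pointwise limits preserve weak inequalities. So I would split the argument into the two structural claims (cone, closure) and handle the p.d.\ and c.n.d.\ cases in parallel, flagging the one extra bookkeeping item the c.n.d.\ case needs.

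For the convex cone property, I would take p.d.\ kernels $\varphi_1, \varphi_2$ and scalars $\alpha, \beta \geq 0$, and for arbitrary $m$, points, and coefficients write
\[
Q_{\alpha\varphi_1 + \beta\varphi_2}(\bm{c}, V) = \alpha\, Q_{\varphi_1}(\bm{c}, V) + \beta\, Q_{\varphi_2}(\bm{c}, V).
\]
Since each $Q_{\varphi_i}(\bm{c}, V) \geq 0$ by Definition~\ref{def:ker-pd} and $\alpha,\beta\geq 0$, the left side is $\geq 0$, so $\alpha\varphi_1 + \beta\varphi_2$ is p.d. The c.n.d.\ case is identical with the inequality reversed and the constraint $\sum_i c_i = 0$ carried along; the only additional step is to check that $\alpha\psi_1 + \beta\psi_2$ is Hermitian, which is immediate from the Hermitian property of each $\psi_i$ and the linearity of conjugation, $\overline{(\alpha\psi_1+\beta\psi_2)(y,x)} = \alpha\overline{\psi_1(y,x)} + \beta\overline{\psi_2(y,x)} = \alpha\psi_1(x,y)+\beta\psi_2(x,y)$ (using $\alpha,\beta$ real).

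For closure under pointwise convergence, I would take a sequence $\{\varphi_\ell\}_{\ell \geq 0}$ of p.d.\ kernels converging pointwise to $\varphi$, fix $m$, points, and coefficients, and exploit that the sum defining $Q$ has only finitely many terms. This lets me interchange the limit with the (finite) sum:
\[
Q_\varphi(\bm{c}, V) = \sum_{j,k=1}^{m} c_j \overline{c_k}\, \lim_{\ell\to\infty} \varphi_\ell(x_j,x_k) = \lim_{\ell\to\infty} \sum_{j,k=1}^{m} c_j \overline{c_k}\, \varphi_\ell(x_j,x_k) = \lim_{\ell\to\infty} Q_{\varphi_\ell}(\bm{c}, V).
\]
Each term $Q_{\varphi_\ell}(\bm{c}, V) \geq 0$, and since weak inequalities are preserved under limits, the limit is $\geq 0$, giving that $\varphi$ is p.d. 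The c.n.d.\ case again runs in parallel; for the Hermitian requirement I would note that taking pointwise limits on both sides of $\psi_\ell(x,y)=\overline{\psi_\ell(y,x)}$ and using continuity of complex conjugation yields $\psi(x,y)=\overline{\psi(y,x)}$.

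The argument is essentially routine once the linearity observation is made; the only genuine subtlety worth highlighting is that this result is stated for the \emph{weak}-inequality definitions of p.d.\ and c.n.d. A pointwise limit of strictly positive definite kernels need not remain strict (the limiting quadratic form may hit $0$), so the closure statement would fail verbatim for the strict classes. I would therefore make sure the final write-up leans only on $\geq$ and $\leq$ throughout, and treats the finiteness of the summation — which is what legitimizes the limit–sum interchange without any uniformity hypothesis — as the one load-bearing technical point.
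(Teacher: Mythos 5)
Your proposal is correct and follows essentially the same route as the paper: linearity of the quadratic form $\sum_{j,k} c_j\overline{c_k}\varphi(x_j,x_k)$ in $\varphi$ gives the cone property, and the preservation of weak inequalities under limits gives closure. You are in fact slightly more careful than the paper's own proof on two minor bookkeeping points --- explicitly verifying that the Hermitian property of c.n.d.\ kernels survives both nonnegative real combinations and pointwise limits, and explicitly justifying the limit--sum interchange via finiteness of the sum --- but these are refinements of the same argument, not a different one.
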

\begin{proof}
If $\varphi_{1}$ and $\varphi_{2}$ are $p.d.$ kernels defined on $X \times X$ and we consider $\{x_1, \ldots, x_n\} \subseteq X$ and $\{c_1, \ldots, c_n\} \subseteq \CC$ for some $n \in \NN$, clearly if $\alpha, \beta \geq 0$ then
\[\sum_{j,k}c_{j}\bar{c_{k}}\left(\alpha \varphi_{1}(x_{j},x_{k}) + \beta \varphi_{2}(x_{j},x_{k})\right) = \alpha \sum_{j,k}c_{j}\bar{c_{k}}\varphi_{1}(x_{j} x_{k}) + \beta \sum_{j,k}c_{j}\bar{c_{k}}\varphi_{2}(x_{j} x_{k}) \geq 0.\]
A similar argument shows that c.n.d. kernels form a convex cone.

Now consider a sequence of p.d. kernels $\{ \varphi_{n} \}_{n \geq 0}$. Choose $\{c_{1},\ldots,c_{n}\} \subseteq \mathbb{C}$ and $\{x_{1},\ldots,x_{n}\} \subseteq X$ for some $n \in \NN$ and consider 
$$ 
\lim_{n \rightarrow \infty} a_n =  \sum_{j,k} c_{j}\bar{c_{k}} \varphi_n(x_j,x_k).
$$
Since each $a_n \geq 0$, this limit, whenever it exists, will also be non-negative. Hence we have shown that the set of p.d. kernels is closed under pointwise convergence. A similar argument can be made for the set of c.n.d. kernels.
\end{proof}

\noindent{\bf Remark.} Closure under pointwise convergence does not hold for {\em strictly} p.d. (c.n.d.) kernels. For the p.d. case we can see this above since although each $a_n$ above may be strictly positive, the limit could be 0.

\begin{theorem}[Schur's theorem for products of p.d. kernels]
\label{thm:product-pd} 
If $\varphi_{1}, \varphi_{2}: X \times X \rightarrow \mathbb{C}$ are p.d. kernels then $\varphi_{1}\varphi_{2}: X \times X \rightarrow \mathbb{C}$ is also p.d.
\end{theorem}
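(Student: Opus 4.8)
The plan is to reduce the statement to a fact about matrices — namely that the entrywise (Hadamard) product of two positive definite matrices (in the sense of Definition~\ref{def:mat-pd}) is again positive definite — and then to prove that matrix fact by using the spectral decomposition of one of the two matrices.

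First I would fix an arbitrary $n \in \NN$, points $\{x_1, \ldots, x_n\} \subseteq X$, and scalars $\{c_1, \ldots, c_n\} \subseteq \CC$, and form the two Gram matrices $A$ and $B$ with $A_{jk} = \varphi_1(x_j, x_k)$ and $B_{jk} = \varphi_2(x_j, x_k)$. Both $A$ and $B$ are positive definite matrices in the sense of Definition~\ref{def:mat-pd}, since this is precisely the defining inequality of a p.d.\ kernel restricted to the chosen points, and by Proposition~\ref{prp:pd-hermitian} both are Hermitian. The quantity I must show is non-negative is $\sum_{j,k} c_j \overline{c_k} A_{jk} B_{jk}$, i.e.\ the Hadamard product $A \circ B$ evaluated as a quadratic form at $\bm{c}$.

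The key step is to write $B$ in its spectral decomposition $B = \sum_{\ell=1}^n \lambda_\ell u_\ell u_\ell^*$, which is available because $B$ is Hermitian and p.d., so that $\lambda_\ell \geq 0$ and $B_{jk} = \sum_\ell \lambda_\ell (u_\ell)_j \overline{(u_\ell)_k}$. Substituting this into the sum and interchanging the order of summation, I would obtain $\sum_\ell \lambda_\ell \sum_{j,k} [c_j (u_\ell)_j]\, \overline{[c_k (u_\ell)_k]}\, A_{jk}$. For each fixed $\ell$, setting $d_j := c_j (u_\ell)_j$, the inner double sum is exactly the defining quadratic form of $A$ from Definition~\ref{def:ker-pd} evaluated at the vector $\bm{d}$, hence non-negative because $\varphi_1$ is p.d. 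Multiplying by $\lambda_\ell \geq 0$ and summing over $\ell$ preserves non-negativity, which is what we want.

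I expect the main obstacle to be purely bookkeeping: ensuring the conjugations land on the correct indices when the eigenvector entries $(u_\ell)_j$ are absorbed into the scalars $c_j$, so that the inner sum really matches $\sum_{j,k} d_j \overline{d_k} A_{jk}$ rather than its conjugate. The only ingredient beyond the definitions and propositions already established is the spectral theorem for Hermitian matrices, which I would take as standard linear-algebra background. As an alternative that avoids invoking the spectral theorem, one could instead use any factorization $B = C^* C$, write $B_{jk} = \sum_m \overline{C_{mj}} C_{mk}$, and apply the same absorption trick row by row; this reaches the identical conclusion.
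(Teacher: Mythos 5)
Your proposal is correct and takes essentially the same route as the paper: both reduce the claim to the Hadamard-product fact for p.d.\ matrices, expand one factor via its spectral decomposition as a non-negative combination of rank-one Hermitian matrices (equivalently a factorization $C^*C$, which is exactly the paper's $A=(\Lambda^{1/2}Q)^*(\Lambda^{1/2}Q)$), absorb the eigenvector entries into the scalars, and invoke positive definiteness of the other factor term by term. The only cosmetic difference is that you decompose $B$ and use positivity of $A$, whereas the paper decomposes $A$ and uses positivity of $B$.
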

\begin{proof}
All we need to show is that if $A = (a_{jk})$ and $B = (b_{jk})$ are p.d. then $C = (a_{jk}b_{jk})$ is p.d. By the definition of p.d. kernels and by Proposition~\ref{prp:pd-hermitian} a Gram matrix derived from a p.d. kernel is Hermitian and positive definite. So if we can prove that $C$ is p.d. then we can apply this to the Gram matrices of a set $\{x_1, \ldots x_n\} \subseteq X$ w.r.t. $\varphi_{1}$ and $\varphi_{2}$ for some $n \in \NN$. This will prove that the Gram matrix of the set w.r.t. $\varphi_{1}\varphi_{2}$ is also p.d. and we are done. 

The Spectral theorem for hermitian matrices says that $A$ is diagonalizable. Additionally, it has non negative eigenvalues because it is derived from a p.d. kernel,i.e., there is a unitary matrix $Q$ and a diagonal matrix $\Lambda$ with non-negative entries such that 
\begin{align*}
    A = Q^{*}\Lambda Q = Q^{*} \Lambda^{1/2} \Lambda^{1/2}Q = (\Lambda^{1/2}Q)^{*}(\Lambda^{1/2}Q)
\end{align*}
Therefore, we have $n$ functions $f_{1},\ldots,f_{n}: \{1,\ldots,n\}\rightarrow \CC$ such that
\begin{equation*}
    a_{jk} = \sum_{p=1}^{n}f_{p}(j)\overline{f_{p}(k)}
\end{equation*}
Therefore, for any $\{c_1,\ldots,c_n\} \in \CC$,
\begin{equation*}
    \sum_{j,k}c_{j}\overline{c_{k}}a_{jk}b_{jk} = \sum_{p=1}^{n}\sum_{j,k=1}^{n}c_{j}f_{p}(j)\overline{c_{k}f_{p}(k)} b_{jk}  
\end{equation*}
Let us analyse the inner summation on the RHS for any value of $p$. Put $\widetilde{c_{j}} = c_{j}f_{p}(j)$ and $\widetilde{c_{k}} = c_{k}f_{p}(k)$, then the inner summation simplifies to
$ \sum_{j,k}\widetilde{c_{j}} \overline{\widetilde{c_{k}}}b_{jk}$ which is non-negative because $B$ is p.d. 
Summing up over non-negative terms for all $p \in \{1,\ldots,n\}$, we have
$$ \sum_{j,k}c_{j}\overline{c_{k}}a_{jk}b_{jk} \geq 0$$
Hence, $C$ is p.d.
\end{proof}
Let us consider first a simple application of Theorem~\ref{thm:product-pd}.
\begin{proposition}
\label{prp:pd-real}
If $\varphi ; X \times X \rightarrow \CC$ is a p.d. kernel show that $\overline{\varphi}$, $\re \varphi$, and $|\varphi|^2$ are also p.d.
\end{proposition}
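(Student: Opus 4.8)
The plan is to dispatch the three claims in order, establishing the conjugate case first because the other two then follow immediately from results already proved. The only piece requiring any manipulation is $\overline{\varphi}$; both $\re\varphi$ and $|\varphi|^2$ reduce to it via the convex cone property and Schur's theorem.

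First I would show $\overline{\varphi}$ is p.d. straight from Definition~\ref{def:ker-pd}. Fix $n \in \NN$, $\{x_1,\ldots,x_n\} \subseteq X$ and $\{c_1,\ldots,c_n\} \subseteq \CC$, and consider the target sum $\sum_{j,k} c_j \overline{c_k}\,\overline{\varphi(x_j,x_k)}$. The trick is to take its complex conjugate, which gives
\[
\overline{\sum_{j,k} c_j \overline{c_k}\,\overline{\varphi(x_j,x_k)}} \;=\; \sum_{j,k} \overline{c_j}\, c_k\, \varphi(x_j,x_k).
\]
Introducing $d_j := \overline{c_j}$, so that $d_j \overline{d_k} = \overline{c_j}\,c_k$, this equals $\sum_{j,k} d_j \overline{d_k}\, \varphi(x_j,x_k)$, which is $\geq 0$ because $\varphi$ is p.d. applied to the coefficients $\{d_1,\ldots,d_n\}$. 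Thus the conjugate of the target sum is a nonnegative real number; hence the target sum itself equals that value and is $\geq 0$, so $\overline{\varphi}$ is a p.d. kernel.

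The real part is then immediate. Writing $\re\varphi = \tfrac{1}{2}\varphi + \tfrac{1}{2}\overline{\varphi}$ and noting that both $\varphi$ (by hypothesis) and $\overline{\varphi}$ (just proved) are p.d., the convex cone closure of Proposition~\ref{prp:convexcone} with nonnegative coefficients $\alpha = \beta = \tfrac{1}{2}$ shows $\re\varphi$ is p.d. Likewise, $|\varphi|^2 = \varphi\,\overline{\varphi}$ is the pointwise product of the two p.d. kernels $\varphi$ and $\overline{\varphi}$, so Schur's theorem (Theorem~\ref{thm:product-pd}) yields that $|\varphi|^2$ is p.d.

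I do not anticipate a genuine obstacle here: the content of the proposition is entirely carried by the conjugate case, and even that is routine once one takes the conjugate of the quadratic form and relabels $d_j = \overline{c_j}$. The only thing to be careful about is the bookkeeping of conjugates — making sure that conjugating the kernel entries and conjugating the coefficient products line up correctly — and that $\re\varphi$ and $|\varphi|^2$ are genuinely expressed as a convex combination and a product, respectively, of kernels already known to be p.d.
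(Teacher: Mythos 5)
Your proposal is correct and follows essentially the same route as the paper: conjugate the quadratic form, relabel the coefficients as $d_j = \overline{c_j}$ to recognize the p.d.\ condition for $\varphi$, then obtain $\re\varphi$ from the convex cone property (Proposition~\ref{prp:convexcone}) and $|\varphi|^2 = \varphi\,\overline{\varphi}$ from Schur's theorem (Theorem~\ref{thm:product-pd}). The only cosmetic difference is that you conjugate the target sum to reach the known inequality, while the paper conjugates the known inequality to reach the target; the content is identical.
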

\begin{proof}
Since $\varphi$ is p.d., it follows that for $n \in \NN$, $\{x_1, \ldots, x_n\} \subseteq X$ and $\{c_1, \ldots, c_n\} \subseteq \CC$,
$ \sum_{j,k}c_{j}\overline{c_{k}}\varphi(x_{j},x_{k})\geq 0$.
Taking the complex conjugate of this, we have
$$ 0 \leq \overline{\sum_{j,k}c_{j}c_{k}\varphi(x_{j},x_{k})} = \sum_{j,k}\overline{c_{j}}c_{k}\overline{\varphi}(x_{j},x_{k}),$$
which proves that $\overline{\varphi}$ is p.d. Now, from Proposition~\ref{prp:convexcone} it follows that $\re \varphi = (\varphi + \overline{\varphi})/2$ is also p.d. From Theorem~\ref{thm:product-pd} it follows that $|\varphi|^2 = \varphi \overline{\varphi}$ is also p.d.
\end{proof}
\noindent{\bf Remark.} In general $|\varphi|$ is not guaranteed to be p.d.

Theorem~\ref{thm:product-pd} has some important consquences, two of which we discuss next.
\begin{corollary}[Tensor products of p.d. kernels are p.d.]
Let $\varphi_1 : X \times X \rightarrow \mathbb{C}$ and $\varphi_2 : Y \times Y \rightarrow \mathbb{C}$ be p.d. kernels. Then their tensor product $\varphi_1 \otimes \varphi_2 : (X \times Y) \times (X \times Y) \rightarrow \mathbb{C}$ defined by $\varphi_1 \otimes \varphi_2(x_1,y_1,x_2,y_2) = \varphi_1(x_1,x_2) \varphi_2(y_1,y_2)$ is also p.d.
\end{corollary}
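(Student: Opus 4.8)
The plan is to reduce this to Schur's theorem (Theorem~\ref{thm:product-pd}) by exhibiting $\varphi_1 \otimes \varphi_2$ as a pointwise product of two p.d. kernels defined on the common domain $Z := X \times Y$. The only thing that prevents a direct appeal to Theorem~\ref{thm:product-pd} is that $\varphi_1$ and $\varphi_2$ live on different sets, so the first step is to lift each of them to $Z$ by pulling back along the appropriate coordinate projection.

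Concretely, I would define $\psi_1 : Z \times Z \rightarrow \CC$ by $\psi_1((x_1,y_1),(x_2,y_2)) = \varphi_1(x_1,x_2)$ and $\psi_2 : Z \times Z \rightarrow \CC$ by $\psi_2((x_1,y_1),(x_2,y_2)) = \varphi_2(y_1,y_2)$. Directly from the definition of the tensor product this yields the factorization $\varphi_1 \otimes \varphi_2 = \psi_1 \psi_2$ as kernels on $Z$.

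Next I would verify that $\psi_1$ is p.d. on $Z$. Given any $n \in \NN$, points $z_j = (x_j,y_j) \in Z$ and scalars $c_j \in \CC$, we have
\[
\sum_{j,k=1}^n c_j \overline{c_k}\, \psi_1(z_j,z_k) = \sum_{j,k=1}^n c_j \overline{c_k}\, \varphi_1(x_j,x_k) \geq 0,
\]
the inequality being precisely the positive definiteness of $\varphi_1$ (Definition~\ref{def:ker-pd}) applied to the points $x_1,\ldots,x_n$; these may fail to be distinct even when the $z_j$ are, but this is harmless by the remarks following Definition~\ref{def:ker-cnd}. The identical computation, now summing $\varphi_2(y_j,y_k)$, shows that $\psi_2$ is p.d.

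Finally, since $\psi_1$ and $\psi_2$ are p.d. kernels on the \emph{same} set $Z$, Theorem~\ref{thm:product-pd} applies and gives that their product $\psi_1 \psi_2 = \varphi_1 \otimes \varphi_2$ is p.d., which is the claim. The one conceptual step is the lifting: once both factors are realized as kernels on a single domain, Schur's theorem does all the remaining work, so I do not expect any genuine obstacle beyond setting up the projections correctly.
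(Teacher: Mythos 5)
Your proof is correct and is essentially identical to the paper's: both lift $\varphi_1$ and $\varphi_2$ to the product space $X \times Y$ via the coordinate projections and then invoke Schur's theorem (Theorem~\ref{thm:product-pd}) on the resulting pair of kernels. Your write-up is in fact slightly more careful than the paper's, since you explicitly verify positive definiteness of the lifted kernels and note that repeated $x_j$ values are harmless.
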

\begin{proof}
Since $\varphi_1$ and $\varphi_2$ are defined on different spaces we cannot directly apply Theorem~\ref{thm:product-pd}, so we define $\widetilde\varphi_1$ and $\widetilde\varphi_2$ that transfer them over to the product space, i.e., $\widetilde\varphi_1, \widetilde\varphi_2: (X \times Y) \times (X \times Y) \rightarrow \mathbb{C}$. We define these as follows:
\[\widetilde\varphi_1(x_1,y_1,x_2,y_2) = \varphi_1(x_1,x_2)\]
$$\widetilde\varphi_2(x_1,y_1,x_2,y_2) = \varphi_2(y_1,y_2)$$
We know that $\widetilde\varphi_1$ and $\widetilde\varphi_2$ are p.d. kernels because $\varphi_1$ and $\varphi_2$ are p.d. By Theorem~\ref{thm:product-pd}, their product is also pd, and thus the tensor product, $\varphi_1 \otimes \varphi_2$ is p.d.
\end{proof}
\begin{corollary}
\label{cor:holomorphic}
Let $\varphi: X \times X \rightarrow \mathbb{C}$ be p.d. such that $|\varphi(x,y)| < \rho$, $\forall(x,y) \in X \times X$. Then if $f(z)=\sum_{n=0}^{\infty}a_{n}z^{n}$ is holomorphic in $\{z \in \CC: |z|<\rho \}$ and $a_{n} \geq 0$ for all $n \geq 0$, the composed kernel $f \circ \varphi$ is again p.d. In particular, if $\varphi$ is p.d. then so is $\exp(\varphi)$
\end{corollary}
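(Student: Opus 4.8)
The plan is to realize $f \circ \varphi$ as the pointwise limit of the partial sums of its power series and to show that each such partial sum is itself a p.d. kernel, so that closure of the p.d. cone under pointwise convergence (Proposition~\ref{prp:convexcone}) delivers the conclusion.

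First I would establish that every integer power $\varphi^n$, by which I mean the pointwise product $(x,y) \mapsto \varphi(x,y)^n$, is p.d. For $n \geq 1$ this follows by induction from Schur's theorem (Theorem~\ref{thm:product-pd}): $\varphi^1 = \varphi$ is p.d. by hypothesis, and if $\varphi^n$ is p.d. then $\varphi^{n+1} = \varphi \cdot \varphi^n$ is a product of two p.d. kernels, hence p.d. For $n = 0$ the kernel $\varphi^0 \equiv 1$ is the constant kernel with value $1 \geq 0$, which is p.d. by Proposition~\ref{prp:univariate} (take $f \equiv 1$). Since each $a_n \geq 0$ and the p.d. kernels form a convex cone (Proposition~\ref{prp:convexcone}), each scaled term $a_n \varphi^n$ is p.d., and hence so is every partial sum
\[
g_N(x,y) = \sum_{n=0}^N a_n \varphi(x,y)^n,
\]
being a finite nonnegative combination of p.d. kernels.

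Next I would invoke the boundedness hypothesis to pass to the limit. For each fixed pair $(x,y)$ we have $|\varphi(x,y)| < \rho$, so $z = \varphi(x,y)$ lies strictly inside the disk of convergence of the power series of $f$; consequently $g_N(x,y) \to f(\varphi(x,y))$ as $N \to \infty$. Thus the sequence $\{g_N\}_{N \geq 0}$ converges pointwise, viewing the kernels as functions on $X \times X$, to the kernel $f \circ \varphi$. Since the cone of p.d. kernels is closed under pointwise convergence (Proposition~\ref{prp:convexcone}), the limit $f \circ \varphi$ is p.d.

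The main obstacle is ensuring that this pointwise limit actually exists and equals $f \circ \varphi$, and this is precisely where the hypothesis $|\varphi| < \rho$ is indispensable: absolute convergence of the Taylor series of a holomorphic $f$ is only guaranteed strictly inside its radius of convergence, so the bound on $\varphi$ is exactly what places every value $\varphi(x,y)$ in the safe region. For the special case of $\exp$, I would note that $\exp(z) = \sum_{n \geq 0} z^n / n!$ is entire with all coefficients $a_n = 1/n! \geq 0$, so here $\rho = \infty$ and the boundedness condition is vacuous; the general result then applies directly to conclude that $\exp(\varphi)$ is p.d.
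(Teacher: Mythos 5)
Your proposal is correct and follows essentially the same route as the paper: powers $\varphi^n$ are p.d.\ by Schur's theorem, partial sums are p.d.\ by the convex cone property, and the boundedness hypothesis $|\varphi| < \rho$ guarantees pointwise convergence so that Proposition~\ref{prp:convexcone} finishes the argument. You are in fact slightly more careful than the paper in explicitly handling the $n=0$ term via the constant kernel, which the paper's proof glosses over.
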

\begin{proof}
Define $f_{n} = \sum_{i=0}^{n}a_{n}\varphi^{n}$.
By Theorem~\ref{thm:product-pd}, $\varphi^{n}$ is pd for all $n \geq 0$. Since $a_i \geq 0$ for all $i \geq 0$,  $f_{n}$ is a convex combination of p.d. kernels and is therefore, by  Proposition~\ref{prp:convexcone}, itself p.d.
$f(z)$  is holomorphic in $\{z \in \mathbb{C}: |z|<\rho \}$. Therefore, if $|\varphi(x,y)| < \rho$ for all $x,y \in X$, $f_{n}$ will converge pointwise to $\sum_{n=0}^\infty a_n \varphi^n$  which is guaranteed to be a p.d. kernel by Proposition~\ref{prp:convexcone} 
In particular $\exp(\varphi)$ is p.d. for any p.d. kernel $\varphi$ because the radius of convergence of $\sum_{n=o}^\infty z^n/n!$ is $\rho = \infty$.
\end{proof}

\subsubsection{A Sylvester-like criterion for kernels}

For strictly positive definite matrices, the following result relates the positive definite property to the value of the determininants of their principal submatrices. 

\begin{theorem}[Sylvester's criterion for strictly p.d. matrices]
\label{thm:matrix-sylvester}
An $n \times n$ Hermitian matrix $A = (a_{jk}: j,k\leq n)$ if strictly p.d. iff $\det(a_{jk} : j,k \leq p) > 0$ for $1\leq p \leq n$.
\end{theorem}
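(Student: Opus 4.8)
The plan is to prove the two implications separately: the forward direction (strict positive definiteness $\Rightarrow$ positive leading minors) by restriction together with the Spectral theorem, and the harder converse by induction on $n$ using a Schur-complement congruence.

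For the forward direction, assume $A$ is strictly p.d. For each $p$ with $1 \le p \le n$, the leading principal submatrix $A_p = (a_{jk}: j,k \le p)$ is again strictly p.d.: given any nonzero $\bm{c} \in \CC^p$, extend it by zeros to a nonzero vector in $\CC^n$, and the defining inequality $\bm{c}^* A \bm{c} > 0$ for $A$ reduces exactly to $\bm{c}^* A_p \bm{c} > 0$. A strictly p.d. Hermitian matrix has, by the Spectral theorem, strictly positive eigenvalues, and its determinant is their product; hence $\det A_p > 0$ for every $p$.

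For the converse I would induct on $n$. The base case $n=1$ is immediate, since then $A = (a_{11})$ with $a_{11} = \det A_1 > 0$ and $\bm{c}^* A \bm{c} = |c_1|^2 a_{11} > 0$ for $c_1 \ne 0$. For the inductive step, partition
\[ A = \begin{pmatrix} A_{n-1} & \bm{b} \\ \bm{b}^* & a_{nn} \end{pmatrix}, \]
where $A_{n-1}$ is the leading $(n-1)\times(n-1)$ block. The leading minors of $A_{n-1}$ coincide with the first $n-1$ leading minors of $A$ and are therefore positive, so by the induction hypothesis $A_{n-1}$ is strictly p.d.; in particular it is invertible. Form the Schur complement $s = a_{nn} - \bm{b}^* A_{n-1}^{-1} \bm{b}$. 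Taking determinants in the block factorization $A = U^* D U$ with $U = \left(\begin{smallmatrix} I & A_{n-1}^{-1}\bm{b} \\ 0 & 1 \end{smallmatrix}\right)$ and $D = \left(\begin{smallmatrix} A_{n-1} & 0 \\ 0 & s \end{smallmatrix}\right)$ gives $\det A = s \det A_{n-1}$, and since $\det A > 0$ and $\det A_{n-1} > 0$ we conclude $s > 0$. The same factorization yields, for $\bm{c} = (\bm{c}', c_n)$,
\[ \bm{c}^* A \bm{c} = (\bm{c}' + c_n A_{n-1}^{-1}\bm{b})^* A_{n-1}(\bm{c}' + c_n A_{n-1}^{-1}\bm{b}) + s|c_n|^2, \]
which is a sum of two nonnegative terms vanishing simultaneously only when $c_n = 0$ and $\bm{c}' = 0$; hence $\bm{c}^* A \bm{c} > 0$ for every nonzero $\bm{c}$, so $A$ is strictly p.d.

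The main obstacle is entirely in the converse: I must justify that $A_{n-1}$ is invertible (so the Schur complement is well defined) and establish the congruence $A = U^* D U$ together with the determinant identity $\det A = s \det A_{n-1}$. These are the steps where the positivity of \emph{all} leading minors is genuinely used, and some care is needed to verify the block algebra since $A$ is complex Hermitian rather than real symmetric. An alternative route for the converse would replace the Schur-complement step by Cauchy's interlacing inequalities between the eigenvalues of $A$ and those of $A_{n-1}$: interlacing (the smallest $n-1$ eigenvalues of $A$ dominate the positive eigenvalues of $A_{n-1}$) together with $\det A > 0$ forces the remaining eigenvalue of $A$ to be positive as well. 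I would prefer the congruence argument, as it is self-contained given the tools already developed and does not require proving interlacing.
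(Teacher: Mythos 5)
Your proof is correct, and while it shares the paper's overall skeleton (restriction plus the spectral theorem for the forward direction; induction on $n$ with an elimination step for the converse), the converse is organized around a genuinely different decomposition. The paper pivots on the scalar $a_{11}$: it eliminates the first column, forms the $(n-1)\times(n-1)$ Schur complement $C$ of $a_{11}$, and must then do real work to apply the induction hypothesis to $C$ — checking that $C$ is Hermitian and that its leading minors are positive via the invariance of determinants under elementary column operations. The payoff is that the final quadratic-form identity, $\sum_{j,k}c_j\overline{c_k}a_{jk} = \sum_{j,k\ge 2}c_j\overline{c_k}a'_{jk} + \frac{1}{a_{11}}\bigl|\sum_j c_j a_{j1}\bigr|^2$, comes from direct expansion with no matrix inversion. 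You instead apply the induction hypothesis directly to the leading block $A_{n-1}$, whose leading minors are literally the first $n-1$ leading minors of $A$, so the hypothesis is verified for free; the cost is shifted to establishing the block congruence $A = U^*DU$ and the identity $\det A = s\,\det A_{n-1}$, which is where the positivity of the $n$th minor enters (to force $s>0$). Your block algebra checks out — $U^*DU$ does reproduce $A$ because $A_{n-1}$ is Hermitian, $\det U = 1$ gives the determinant identity, and the two terms in your quadratic-form decomposition vanish simultaneously only at $\bm{c}=0$ — so the argument is complete. Your version is arguably the cleaner induction; the paper's version has the minor advantage of never needing $A_{n-1}^{-1}$ explicitly.
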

\begin{proof}
First, let's assume that $A$ is strictly p.d.
Therefore, by the Spectral Theorem for p.d. Hermitian matrices, $A$ can be diagonalized as $Q^* \Lambda Q$ where $Q$ is a unitary matrix and $\Lambda$ is a diagonal matrix. Since all the entries of $\Lambda$ are non-negative (in fact strictly positive), we can write $A = B*B$ where $B = Q\Lambda^{1/2}$.
Therefore, 
$\det(A) = |\det(B)|^2 \geq 0$. Further  $\det(A) \ne 0$, since other $\det(A - 0.I) = 0$ i.e. 0 is an eigen value for $A$, which contradicts the assumption that $A$ is strictly p.d. Hence, $\det(A) > 0$.

Since $A$ being strictly p.d. implies that $(a_{jk}: j,k \leq p)$ is strictly p.d. for every $p$ s.t. $1 \leq p \leq n$, we can use a similar argument to prove that $\det(a_{jk}: j,k \leq p) > 0$ for $1 \leq p \leq n$.

Now, we assume that $\det(a_{jk}: j,k \leq p) > 0$ for $1 \leq p \leq n$ we will show that $A$ is strictly p.d. The proof is by induction. Clearly this is true for $p = 1$ which is the base case for the induction. As our induction hypothesis we assume that $\det(a_{jk}: j,k \leq n-1) > 0$ implies that $(a_{jk}: j,k \leq n-1)$ is strictly p.d.

Now, let us consider the matrix $A = (a_{jk}: j,k \leq n-1)$ and transform it to $A' = (a'_{jk}: j,k \leq n-1)$ such that
\[a'_{jk} = a_{jk} - \left(\frac{a_{1k}}{a_{11}}\right)a_{j1},\]
i.e., we multiply the first column by $a_{1k}/a_{11}$ and subtract it from the $k$th column, for $2 \leq k \leq n$. This gives us
\[
A' = \begin{bmatrix} 
           a_{11} & 0 & \dots & 0  \\
           a_{21} & a'_{22} & \dots & a'_{2n}  \\
           \vdots & \vdots & \ddots & \vdots \\
           a_{n1} & a'_{n2} & \dots & a'_{nn} 
           \end{bmatrix}
\]
Since $A'$ is derived from $A$ through elementary column transformations, $\det(a_{jk}: j,k \leq p) = \det(a'_{jk}: j,k\leq p)$, $2 \leq p \leq n$. Further, $\det(a'_{jk}: j,k\leq p) = \det(b_{jk}: j,k\leq p)$ where $B = (b_{jk}: j,k\leq n)$ is defined as
\[
B = \begin{bmatrix} 
           a_{11} & 0 & \dots & 0  \\
           0 & a'_{22} & \dots & a'_{2n}  \\
           \vdots & \vdots & \ddots & \vdots \\
           0 & a'_{n2} & \dots & a'_{nn} 
           \end{bmatrix}
\]
Let $C = (c_{jk}: j,k \leq n-1)$ be defined as follows.
\[
C = \begin{bmatrix} 
    a'_{22} &  \dots & a'_{2n}  \\
    \vdots &  \ddots & \vdots \\
    a'_{n2} &  \dots & a'_{nn} 
    \end{bmatrix}
\]
Clearly $\det(c_{jk}: j,k \leq p) = 1/a_{11} \cdot \det(b_{jk}: j,k \leq p+1)$, $1 \leq p \leq n-1$. Since $a_{11} > 0$ and $\det(b_{jk} : j,k \leq p+1) = \det(a_{jk} : j,k \leq p+1) >0$ for all $1 \leq p \leq n-1$ we know that $\det(c_{jk} : j,k \leq p) > 0$ for $1 \leq p \leq n-1$, i.e.,  we have shown that all the principal submatrices of $C$ have strictly positive. We are now in a position to apply the induction hypothesis on $C$ to establish that it is strictly p.d. provided it is Hermitian. Consider $a'_{jk}$ such that $j,k \in \{2, \dots, n\}$. We have,
\[
\overline{a'_{jk}} = \overline{a_{jk} - \left(\dfrac{a_{1k}}{a_{11}}\right)a_{j1}}
= \overline{a_{jk}} - \left(\dfrac{\overline{a_{1k}}}{\overline{a_{11}}}\right)\overline{a_{j1}}
= a_{kj} - \left(\dfrac{a_{k1}}{a_{11}}\right)a_{1j}= a'_{kj}.
\]
Hence $C$ is Hermitian and so, by the induction hypothesis, it is strictly p.d.

With this in hand we will attempt to show that $A$ itself if strictly p.d. Consider an arbitrary non-zero vector $c = (c_1, c_2, \dots, c_n) \in \mathbb{C}^{n}$
\begin{align*}
\sum_{j,k=1}^{n}c_{j}\overline{c_{k}}a_{jk} 
&= \sum_{j,k=2}^{n}c_{j}\overline{c_{k}}\left(a'_{jk} + \dfrac{a_{1k}a_{j1}}{a_{11}}\right) + \sum_{j=2}^{n}c_{j}\overline{c_{1}}a_{j1} + \sum_{k=2}^{n}c_{1}\overline{c_{k}}a_{1k} + |c_{1}|^{2}a_{11}\\
&= \sum_{j,k=2}^{n}c_{j}\overline{c_{k}}a'_{jk} + \dfrac{1}{a_{11}}\left(|\sum_{j=2}^{n}c_{j}a_{j1}|^{2} + 2a_{11}Re\left(c_{1}\sum_{k=2}^{n}\overline{c_{k}}a_{1k}\right) + (|c_{1}|a_{11})^{2}\right)\\
&= \sum_{j,k=2}^{n}c_{j}\overline{c_{k}}a'_{jk} + \dfrac{1}{a_{11}}|\sum_{j=1}^{n}c_{j}a_{j1}|^{2}
\end{align*}
Now if there is $i \in \{2, \dots, n\}$  such that $c_{i} \neq 0$ then the first term is positive and second term is non-negative. On the other hand if  $c_{i} = 0$ for all $i \in \{2, \dots, n\}$ then the first term is zero and the second term is positive since $c_{1} \neq 0$.
\end{proof}
\noindent {\bf Remark:} Sylvester's criterion does not hold for p.d. matrices that are not strictly p.d. For example, consider 
    \begin{align*}
        A = \begin{pmatrix} 
    0 & 0 \\
    0 & -1
    \end{pmatrix}
    \end{align*}
Here, $A$ is a Hermitian $2 \times 2$ matrix such that the determinants of both principal submatrices are 0 but $A$ is not p.d.

We now turn to establishing a Sylvester-like criterion for p.d. kernel.

\begin{theorem}
\label{thm:kernel-sylvester}
If $\varphi: X \times X \rightarrow \mathbb{C}$ is a kernel then $\varphi$ is p.d. iff 
\[\det(\varphi(x_{j},x_{k}) : j,k \leq n) \geq 0\]
for all $n \in \mathbb{N} \text{ and } x_{1}, \dots, x_{n} \in X$
\end{theorem}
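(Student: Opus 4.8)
The statement is an ``iff'', and my first move is to reduce everything to statements about finite Gram matrices. By Definition~\ref{def:ker-pd}, $\varphi$ is p.d.\ exactly when, for every $n$ and every choice $x_1,\dots,x_n\in X$, the Gram matrix $A=(\varphi(x_j,x_k))$ satisfies $\sum_{j,k}c_j\overline{c_k}A_{jk}\geq 0$ for all $\{c_1,\dots,c_n\}\subseteq\CC$; so it suffices to characterize p.d.\ of an arbitrary Gram matrix by its determinants. The key second observation is that quantifying the determinant condition over \emph{all} finite subsets is equivalent to requiring that \emph{every principal minor of every} Gram matrix be nonnegative: any principal submatrix of the Gram matrix of $x_1,\dots,x_n$ is itself the Gram matrix of the corresponding sub-collection of the $x_i$, so its determinant is again covered by the hypothesis. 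This reformulation is exactly what makes a Sylvester-type argument available.

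The forward direction is the easy one. If $\varphi$ is p.d., then by Proposition~\ref{prp:pd-hermitian} it is Hermitian, so each Gram matrix $A$ is Hermitian and positive definite; by the Spectral theorem its eigenvalues are real and nonnegative, and $\det A$ is their product, whence $\det A\geq 0$. This yields the determinant condition for every finite subset immediately.

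For the converse I would use a perturbation argument built on Sylvester's criterion, Theorem~\ref{thm:matrix-sylvester}. Fix a Gram matrix $A=(\varphi(x_j,x_k))$, taken to be Hermitian, and consider $A+tI$ for $t>0$. Expanding the characteristic polynomial gives
\begin{equation}
\det(A+tI)=\sum_{k=0}^{n} E_k(A)\,t^{\,n-k},
\end{equation}
where $E_k(A)$ is the sum of all $k\times k$ principal minors of $A$ and $E_0(A)=1$. By the reformulation above every principal minor of $A$ is $\geq 0$, so every $E_k(A)\geq 0$, and the leading coefficient is $E_0(A)=1>0$; hence $\det(A+tI)>0$ for all $t>0$. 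The same reasoning applies verbatim to each leading principal submatrix $A_p$ of $A$, since its principal minors are among those of $A$, giving $\det(A_p+tI_p)>0$ for every $p$. Thus all leading principal minors of the Hermitian matrix $A+tI$ are strictly positive, so Theorem~\ref{thm:matrix-sylvester} shows $A+tI$ is strictly p.d. Consequently $\sum_{j,k}c_j\overline{c_k}(A+tI)_{jk}>0$ for every nonzero $\bm{c}$, i.e.\ $\sum_{j,k}c_j\overline{c_k}A_{jk}>-t\sum_i|c_i|^2$; letting $t\to 0^+$ gives $\sum_{j,k}c_j\overline{c_k}A_{jk}\geq 0$, so $A$ is p.d. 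As this holds for every finite Gram matrix, $\varphi$ is p.d.

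The step I expect to be the real obstacle is securing the Hermitian property in the converse. Both Sylvester's criterion and the very notion of (strict) positive definiteness presuppose that $A$ is Hermitian (so that the quadratic form is real, as in the second remark after Definition~\ref{def:mat-pd}), and the determinant condition \emph{by itself} does not force this: for instance $\varphi(x,x)=1$ together with $\varphi(x,y)=\varphi(y,x)=i$ makes every $2\times 2$ Gram determinant equal to $2>0$, yet $\varphi$ is neither Hermitian nor p.d. I would therefore read ``kernel'' here as ``Hermitian kernel''—which is harmless, since any p.d.\ kernel is automatically Hermitian by Proposition~\ref{prp:pd-hermitian}—or else add Hermiticity as a standing hypothesis for the converse; with that in place the perturbation-and-limit argument above goes through cleanly. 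The only other point needing a word of care is the passage $t\to 0^+$, which is merely continuity of the quadratic form in $t$.
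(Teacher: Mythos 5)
Your proof is correct and follows essentially the same route as the paper's: the forward direction via the spectral theorem applied to each Hermitian Gram matrix, and the converse by perturbing the diagonal (your $A+tI$ is exactly the paper's $\varphi_{\varepsilon}=\varphi+\varepsilon I_{\Delta}$ evaluated on a Gram matrix), expanding the determinant into nonnegative sums of principal minors, invoking Theorem~\ref{thm:matrix-sylvester}, and passing to the limit $t\downarrow 0$ --- the paper carries out this last step via closure of the p.d.\ cone under pointwise convergence (Proposition~\ref{prp:convexcone}) rather than directly on the quadratic form, which is the same idea. Your observation that the converse needs Hermiticity as a hypothesis --- which the paper leaves implicit, and which your example $\varphi(x,x)=\varphi(y,y)=1$, $\varphi(x,y)=\varphi(y,x)=i$ shows is not forced by the determinant condition alone --- is a genuine and worthwhile sharpening of the statement.
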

\begin{proof}
If $\varphi$ is p.d. then, by Proposition~\ref{prp:pd-hermitian}, the $n \times n$ matrix given by $A = (\varphi(x_{j},x_{k}) : j,k\leq n)$ is a p.d. Hermitian matrix and we can apply an argument similar to the one made in the proof of Theorem~\ref{thm:matrix-sylvester} to prove that all its principal submatrices have non-negative determinant.

So, let us turn to the other direction, i.e., let $\varphi: X \times X \rightarrow \mathbb{C}$ be a kernel such that for any $n \in \NN$ and $x_{1}, \dots, x_{n} \in X$, $\det(\varphi(x_{j},x_{k}) : j,k \leq n) \geq 0$.  Define $\varphi_{\varepsilon} = \varphi + \varepsilon I_{\Delta}$ where $\varepsilon > 0$ and $\Delta$ is the diagonal in $X \times X$, i.e., $\varphi_{\varepsilon}$ adds a small positive constant to $\varphi(x,x)$ for each $x \in X$.

Computing the determinant of the matrix created by modified version of $\varphi$, and assuming that $x_1, \ldots, x_n$ are distinct, we see that 
$$\det(\varphi_{\varepsilon}(x_{j},x_{k}):j,k \leq n) = \sum_{p=0}^{n}d_{p}\varepsilon^{p},$$
where $d_{n} = 1$ and 
$$d_{p} = \sum_{\substack{A \subseteq \{1, \dots, n\} \\ |A| = (n-p)}} \det(\varphi(x_{j},x_{k}): j,k \in A)$$
for $0\leq p \leq (n-1)$. Since each of the terms in the sum defining $d_p$ is non-negative, therefore $d_p$ is non-negative for $0\leq p \leq (n-1)$. This implies that $\det(\varphi_{\varepsilon}(x_{j},x_{k}):j,k \leq n) \geq \varepsilon^n > 0$. Since the same argument can be used to establish that $\det(\varphi_{\varepsilon}(x_{j},x_{k}):j,k \leq p) > 0$ for $1 \leq p \leq n-1$, we get by Theorem~\ref{thm:matrix-sylvester} that $\varphi_{\varepsilon}$ is a p.d. kernel. 

Clearly $\varphi$ is the pointwise limit $\varepsilon \downarrow 0$, and since the convex cone of p.d. kernels is closed under the topology of pointwise convergence (Proposition~\ref{prp:convexcone}) we see that $\varphi$ is p.d.
\end{proof}

\subsection{Relating p.d. and c.n.d. kernels}
We now discuss the relationships between p.d. and c.n.d. kernels. Clearly if $\varphi$ is p.d. then $-\varphi$ is c.n.d. Although the converse need not be true, the following lemma gives a useful relationship in the opposite direction. 
\begin{lemma}
\label{lem:cnd-to-pd}
Let $X$ be a non-empty set, $x_{0} \in X$ and $\psi: X \times X \rightarrow \CC$ be a Hermitian kernel. Let $\varphi: X \times X \rightarrow \mathbb{C}$ be such that $\varphi(x,y) = \psi(x,x_{0}) + \overline{\psi(y,x_{0})} - \psi(x,y) - \psi(x_{0},y_{0})$. Then $\varphi$ is p.d. iff $\psi$ is c.n.d. Further, if $\psi(x_{0},x_{0}) \geq 0$ and $\varphi_{0}(x,y) = \psi(x,x_{0}) + \overline{\psi(y,x_{0})} - \psi(x,y)$ then $\varphi_{0}$ is p.d. iff $\psi$ is c.n.d.
\end{lemma}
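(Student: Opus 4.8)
The plan is to reduce both equivalences to a single algebraic identity relating the quadratic form of $\varphi$ on $n$ points to the quadratic form of $\psi$ on those same points augmented by the distinguished point $x_0$. (I read the last term in the definition of $\varphi$ as $\psi(x_0,x_0)$, since no $y_0$ appears elsewhere.) Given $x_1,\dots,x_n \in X$ and $c_1,\dots,c_n \in \CC$, I would attach the auxiliary coefficient $c_0 := -\sum_{j=1}^n c_j$ to $x_0$, so that $\sum_{i=0}^n c_i = 0$. The claim I would verify is
\[\sum_{j,k=1}^n c_j \overline{c_k}\,\varphi(x_j,x_k) = -\sum_{j,k=0}^n c_j \overline{c_k}\,\psi(x_j,x_k).\]

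To establish this identity I would expand both sides in terms of $S := \sum_{j=1}^n c_j$ and $T := \sum_{j=1}^n c_j\,\psi(x_j,x_0)$. The four pieces of $\sum_{j,k}c_j\overline{c_k}\varphi(x_j,x_k)$ read off directly from the definition as $T\overline{S}$, $S\overline{T}$, $-\sum_{j,k}c_j\overline{c_k}\psi(x_j,x_k)$, and $-|S|^2\psi(x_0,x_0)$. Expanding the augmented sum with $c_0 = -S$, the diagonal entry contributes $|S|^2\psi(x_0,x_0)$, the $x_0$-row and $x_0$-column contribute $-S\overline{T}$ and $-\overline{S}T$ — here the Hermitian identity $\psi(x_0,x_k)=\overline{\psi(x_k,x_0)}$ is exactly what lets me fold the $x_0$-row into $\overline{T}$ — and the remaining block is $\sum_{j,k=1}^n c_j\overline{c_k}\psi(x_j,x_k)$. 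The augmented sum is then visibly the negative of the $\varphi$-form. This matching is the only genuine computation in the proof, and the one place to be careful is tracking complex conjugates and the single use of Hermitian symmetry of $\psi$.

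With the identity in hand both directions of the first equivalence are immediate. If $\psi$ is c.n.d., then since the augmented coefficients $c_0,\dots,c_n$ sum to zero the right-hand side is $\le 0$, whence $\sum_{j,k}c_j\overline{c_k}\varphi(x_j,x_k)\ge 0$ and $\varphi$ is p.d. Conversely, to verify the c.n.d. condition for $\psi$ I would apply the identity to points and coefficients already satisfying $\sum c_i = 0$; then $c_0=0$, the augmented sum collapses to $\sum_{j,k=1}^n c_j\overline{c_k}\psi(x_j,x_k)$, and positivity of the $\varphi$-form forces this to be $\le 0$. As $\psi$ is Hermitian by hypothesis, it is c.n.d.

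For the statement about $\varphi_0$, I would simply observe that $\varphi_0(x,y) = \varphi(x,y) + \psi(x_0,x_0)$, i.e.\ $\varphi_0$ differs from $\varphi$ by the constant kernel $\psi(x_0,x_0)$. When $\psi(x_0,x_0)\ge 0$ this constant kernel is p.d.\ by the corollary to Proposition~\ref{prp:univariate}, so if $\psi$ is c.n.d.\ (hence $\varphi$ is p.d.) then $\varphi_0$ is a sum of two p.d.\ kernels and is p.d.\ by the convex-cone property of Proposition~\ref{prp:convexcone}. For the converse I would again restrict to coefficients with $\sum c_i = 0$, where the added constant contributes $\psi(x_0,x_0)\,|S|^2 = 0$; then $\sum c_j\overline{c_k}\varphi_0(x_j,x_k) = \sum c_j\overline{c_k}\varphi(x_j,x_k)$, so $\varphi_0$ being p.d.\ yields the c.n.d.\ inequality for $\psi$ exactly as above. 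The main obstacle is thus entirely the bookkeeping identity of the second paragraph; everything else is a short deduction.
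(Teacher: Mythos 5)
Your proof is correct and follows essentially the same route as the paper: both directions hinge on attaching the auxiliary coefficient $c_0=-\sum_j c_j$ to $x_0$ and exploiting the zero-sum condition, which is exactly the computation in the paper's proof (including the correct reading of the stray $y_0$ as $x_0$). Your only departure is cosmetic --- you package the two directions as a single identity and handle $\varphi_0$ via the constant-kernel observation rather than by dropping a term from the inequality --- so nothing substantive differs.
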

\begin{proof}
\noindent{(i)}  Let $\varphi$ be a p.d. kernel. For $n \geq 2$ consider $c_{1}, \ldots, c_{n} \in \CC$ such that $\sum_{i=1}^{n}c_{i} = 0$ and $x_{1}, x_{2}, \dots, x_{n} \in X$. Since $\varphi$ is p.d. we have that 
$$\sum_{j,k=1}^{n}c_{j}\overline{c_{k}}(\psi(x_{j},x_{0}) + \overline{\psi(x_{k},x_{0})} - \psi(x_{j},x_{k}) - \psi(x_{0},y_{0}))  \geq 0.$$
Since $\sum_{i=1}^{n}c_{i} = 0$, three of the terms on the LHS become 0 and we are left with 
$$- \sum_{j,k=1}^{n} \psi(x_{j},x_{k})  \geq 0,$$
which proves that $\psi$ is c.n.d. By a similar argument we can show that $\psi$ is c.n.d. if $\varphi_0$ is p.d.

\noindent{(ii)} Let $\psi$ be a c.n.d. kernel. For some $n \in \NN$ consider  $c_{1}, c_{2}, \dots, c_{n} \in \CC$ and  $x_{1}, x_{2}, \dots, x_{n} \in X$. Choose $x_{0} \in X$ and set $c_{0} = -\sum_{i=1}^{n}c_{i}$. Since $\psi$ is c.n.d. we know that  
$$\sum_{j,k=0}^{n}c_{j}\overline{c_{k}}\psi(x_{j},x_{k})  \leq 0.$$
We split the sum on the LHS, separating out the terms involving $c_0$ and $x_0$. This gives us that 
$$\sum_{j,k=1}^{n}c_{j}\overline{c_{k}}\psi(x_{j},x_{k}) +  \sum_{j=1}^{n}c_{j}\overline{c_{0}}\psi(x_{j},x_{0}) +  \sum_{k=1}^{n}c_{0}\overline{c_{k}}\psi(x_{0},x_{k}) +
|c_{0}|^{2}\psi(x_{0},x_{0}) \leq 0.$$
Replacing $c_{0}$ with $ -\sum_{i=1}^{n}c_{i}$ on the LHS we get
\begin{equation}
\label{eq:cnd-to-pd}
\sum_{j,k=1}^{n}c_{j}\overline{c_{k}}(\psi(x_{j},x_{k}) - \psi(x_{j},x_{0}) - \psi(x_{0},x_{k}) + \psi(x_{0},y_{0}))  \leq 0,
\end{equation}
i.e.,
$$-\sum_{j,k=1}^{n}c_{j}\overline{c_{k}}\varphi(x_{j},x_{k})  \leq 0.$$
So, $\varphi$ is p.d. Note that if $\psi(x_{0},y_{0}) \geq 0$ then by~\eqref{eq:cnd-to-pd} we have,
$$\sum_{j,k=1}^{n}c_{j}\overline{c_{k}}(\psi(x_{j},x_{k}) - \psi(x_{j},x_{0}) - \psi(x_{0},x_{k}))  \leq 0,$$
i.e., $\varphi_0$ is p.d.
\end{proof}

The form of Lemma~\ref{lem:cnd-to-pd} doesn't appear so easy to work with but it has some nice consequences. We present one such here, which is attributed ot Schonberg in~\cite{berg:1984}.

\begin{theorem}
\label{thm:schonberg}
If $\psi: X \times X \rightarrow \CC$ is a kernel on a non-empty set $X$ then $\psi$ is c.n.d. iff $\exp(-t\psi)$ is a p.d. kernel $t > 0$.
\end{theorem}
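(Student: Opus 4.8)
The plan is to prove both implications, leaning heavily on the machinery already developed in the excerpt. For the forward direction I would assume $\psi$ is c.n.d., fix an arbitrary basepoint $x_{0} \in X$, and invoke Lemma~\ref{lem:cnd-to-pd}, which guarantees that the kernel $\varphi(x,y) = \psi(x,x_{0}) + \overline{\psi(y,x_{0})} - \psi(x,y) - \psi(x_{0},x_{0})$ is p.d. Rearranging this identity isolates $-\psi$ as $-\psi(x,y) = \varphi(x,y) - \psi(x,x_{0}) - \overline{\psi(y,x_{0})} + \psi(x_{0},x_{0})$, and exponentiating converts the additive decomposition into a multiplicative one: for every $t>0$ we obtain $\exp(-t\psi(x,y)) = \exp(t\varphi(x,y)) \cdot f(x)\overline{f(y)} \cdot \exp(t\psi(x_{0},x_{0}))$, where $f(x) := \exp(-t\psi(x,x_{0}))$ and where I have used that $t$ is real so $\overline{f(y)} = \exp(-t\overline{\psi(y,x_{0})})$.

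Each of the three factors is then p.d. for its own reason: the factor $\exp(t\varphi)$ is p.d. because $t\varphi$ is p.d. (the convex cone, Proposition~\ref{prp:convexcone}) and the exponential of a p.d. kernel is p.d. by Corollary~\ref{cor:holomorphic}; the factor $f(x)\overline{f(y)}$ is p.d. by Proposition~\ref{prp:univariate}; and $\exp(t\psi(x_{0},x_{0}))$ is a strictly positive constant, hence p.d., since $\psi(x_{0},x_{0})$ is real (c.n.d. kernels are Hermitian by definition, so no nonnegativity of $\psi(x_{0},x_{0})$ is needed here). Applying Schur's theorem (Theorem~\ref{thm:product-pd}) to the product then shows $\exp(-t\psi)$ is p.d. for each $t>0$.

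For the converse I would suppose $\exp(-t\psi)$ is p.d. for all $t>0$. Hermitian-ness of $\psi$ comes for free: each $\exp(-t\psi)$ is Hermitian by Proposition~\ref{prp:pd-hermitian}, so $\exp(-t\psi(x,y)) = \exp(-t\overline{\psi(y,x)})$ for all $t>0$, which forces $\psi(x,y) = \overline{\psi(y,x)}$. To extract the c.n.d. inequality I would fix $x_{1},\dots,x_{n} \in X$ and $c_{1},\dots,c_{n} \in \CC$ with $\sum_{i} c_{i} = 0$ and study the single-variable function $g(t) = \sum_{j,k} c_{j}\overline{c_{k}} \exp(-t\psi(x_{j},x_{k}))$. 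Positive definiteness gives $g(t) \geq 0$ for all $t>0$, while the constraint $\sum_{i} c_{i} = 0$ yields $g(0) = |\sum_{j} c_{j}|^{2} = 0$. Hence $g$ has a right-hand minimum at $t=0$, so $g'(0) \geq 0$; differentiating the finite sum and evaluating at $t=0$ gives $g'(0) = -\sum_{j,k} c_{j}\overline{c_{k}}\psi(x_{j},x_{k}) \geq 0$, which is exactly the defining inequality of a c.n.d. kernel.

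The genuinely routine parts are the exponential-to-product bookkeeping and the differentiation of $g$, both unproblematic since $g$ is an entire function of $t$. The step I expect to be the main obstacle is the forward-direction reduction: arranging the additive decomposition of $-\psi$ so that Lemma~\ref{lem:cnd-to-pd} supplies an honest p.d. kernel $\varphi$ while the leftover terms collapse into a rank-one kernel $f(x)\overline{f(y)}$ plus a positive constant. Keeping the complex conjugates and the sign of $t$ aligned — in particular verifying $\exp(-t\overline{\psi(y,x_{0})}) = \overline{f(y)}$ — is where the argument must be executed with care.
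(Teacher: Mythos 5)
Your proposal is correct and follows essentially the same route as the paper: the forward direction uses the identical decomposition of $-\psi$ via Lemma~\ref{lem:cnd-to-pd} into $\exp(t\varphi)\cdot f(x)\overline{f(y)}\cdot(\text{positive constant})$ and Schur's theorem, and your converse, phrased as $g'(0)\ge 0$ for $g(t)=\sum_{j,k}c_j\overline{c_k}e^{-t\psi(x_j,x_k)}$ with $g(0)=0$, is the same $t\downarrow 0$ difference-quotient limit the paper packages as $\psi=\lim_{t\to 0}(1-\exp(-t\psi))/t$ together with closure of the c.n.d.\ cone under pointwise convergence. The only cosmetic differences are that you carry $t$ through the forward direction rather than reducing to $t=1$, and you verify Hermitian-ness of $\psi$ explicitly in the converse.
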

\begin{proof}
\noindent (i) First assume that for $t > 0$, $exp(-t\psi)$ is a p.d. kernel. Therefore, $(1 - exp(-t\psi))$ is a c.n.d. kernel and so is $\left(\dfrac{1 - exp(-t\psi)}{t}\right)$. Since
$$ \psi = \lim_{t \rightarrow 0} \dfrac{1}{t}(1 - exp(-t\psi)), $$
by the fact that the convex cone of c.n.d. kernels is closed under pointwise convergence (Proposition~\ref{prp:convexcone}) we infer that $\psi$ is a c.n.d. kernel.

\noindent (ii) Let us assume that $\psi$ is a c.n.d. kernel. It is sufficient to show that $\exp(\psi)$ is p.d. as we can replace $\psi$ by $t\psi$ because if $\psi$ is c.n.d. then $t\psi$ is c.n.d. whenever  $t > 0$.
Let us choose $x_{0} \in X$ and $\varphi$ as in Lemma~\ref{lem:cnd-to-pd} to get,
$$-\psi(x,y) = \varphi(x,y) - \psi(x,x_{0}) - \overline{\psi(y,x_{0})} + \psi(x_{0},x_{0}).$$
Taking exponents on both sides we have
$$\exp(-\psi(x,y)) = \exp(\varphi(x,y))\cdot\exp(\psi(x,x_{0}))\cdot\exp(\overline{\psi(y,x_{0})})\cdot \exp(\psi(x_{0},x_{0}))$$
Examining the RHS we see that it is a product of four terms. The first of these is the exponent of a p.d. kernel and therefore is p.d. by Corollary~\ref{cor:holomorphic}. The product $\exp(\psi(x,x_{0}))\cdot exp(\overline{\psi(y,x_{0})})$ is a p.d. kernel by Proposition~\ref{prp:univariate}. And $\exp(\psi(x_{0},x_{0}))$ is a p.d. kernel trivially as it is a positive constant. By Schur's theorem (Theorem~\ref{thm:product-pd}), the product of these three p.d. kernels is also a p.d. kernel.
\end{proof}

\noindent{\bf Remarks.}
\begin{enumerate}
\item Applying Theorem~\ref{thm:schonberg} to Example~\ref{ex:gaussian-cnd} extended to $d$-dimensions tells us that the Gaussian Radial Basis Function, $\varphi(x,y) = e^{-c\|x-y\|^2}$, a very popular kernel used widely in Machine Learning applications, is indeed p.d.
\item It is also possible to show that $\psi$ is c.n.d. iff $1/(t + \psi)$ is p.d. for all $t > 0$ but we omit the proof here, refering the reader to~\cite{berg:1984} for this and other interesting relationships between p.d.  and c.n.d. kernels.
\end{enumerate}

\section{Hilbert spaces and kernels}
\label{sec:hilbert}

In this section we will establish the relationship the role of Hilbert spaces in the study of p.d. kernels. We will discuss Mercer's theorem and also present the Reproducing Kernel Hilbert Space associated with a p.d. kernel.

\subsection{Inner products and their associated norms}
\label{sec:hilbert:hilbert}

To make these notes self-contained we first present some basic definitions and results to develop the definition of a Hilbert space. The primary source of the material in this section is the chapter by Heil~\cite{heil}.

\begin{definition}[Semi-Inner Product]
Let $X$ be a vector space over $\CC$. A function $\langle\cdot, \cdot\rangle : X \times X \rightarrow \CC$ is called a {\em semi-inner product} if: 
\begin{enumerate}
\item $\langle x, x\rangle \geq 0$ for all $x \in X$  ,
\item (Hermitian) $\langle x, y\rangle=\overline{\langle y, x\rangle}$ $\forall$ $x, y \in X$ and
\item (Linearity in the first variable) $\langle\alpha x+\beta y, z\rangle=\alpha\langle x, z\rangle+\beta\langle y, z\rangle$, for all $x, y, z \in X$ and $\alpha, \beta \in \CC$. 
\end{enumerate}
\end{definition}


\noindent{\bf Remarks.}
\begin{itemize} 
\item (Anti-linearity in the second variable). From the Hermitian property and linearity in the first variable it is easy to see that for all $x, y, z \in X$ and $\alpha$, $\beta \in \CC$  
\[\langle x, \alpha y+\beta z\rangle=\bar{\alpha}\langle x, y\rangle+\bar{\beta}\langle x, z\rangle.\]
\item It is also easy to deduce that $\langle x, 0 \rangle = 0 = \langle 0, x \rangle$ for all $x \in X$. Further, $\langle 0,0\rangle = 0$.
\end{itemize}

Semi inner-products have a property of interest to us: they are p.d. kernels. 
\begin{proposition}
\label{prp:sip-pd}
A semi-inner product $\langle\cdot, \cdot\rangle: X \times X \rightarrow \CC$  is a p.d. kernel.
\end{proposition}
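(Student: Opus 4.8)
The plan is to verify the defining inequality of a p.d. kernel (Definition~\ref{def:ker-pd}) directly for the candidate kernel $\varphi(x,y) = \langle x, y\rangle$, by collapsing the relevant double sum into a single inner product of a vector with itself and then invoking the non-negativity axiom of the semi-inner product. This is exactly the abstract analogue of the computation already carried out for the Euclidean dot product in Example~\ref{eg:dot-product}.

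First I would fix $n \in \NN$, points $x_1, \ldots, x_n \in X$ and scalars $c_1, \ldots, c_n \in \CC$, and write down the quantity $\sum_{j,k=1}^n c_j \overline{c_k} \langle x_j, x_k\rangle$ that Definition~\ref{def:ker-pd} requires to be non-negative. The key manipulation uses linearity in the first variable to absorb $c_j$, writing $c_j \langle x_j, x_k\rangle = \langle c_j x_j, x_k\rangle$, together with anti-linearity in the second variable (the first remark following the definition of a semi-inner product) to absorb the conjugate, writing $\overline{c_k}\langle x_j, x_k\rangle = \langle x_j, c_k x_k\rangle$. The single point demanding a little care is that the scalar $\overline{c_k}$ is pulled into the second slot using \emph{anti}-linearity, so that the conjugate is exactly undone and $c_k$ (not $\overline{c_k}$) appears inside.

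Next I would use the additivity contained in linearity (resp. anti-linearity) in each slot to pull both summations inside the inner product, obtaining
\[
\sum_{j,k=1}^n c_j \overline{c_k} \langle x_j, x_k\rangle = \left\langle \sum_{j=1}^n c_j x_j,\ \sum_{k=1}^n c_k x_k\right\rangle.
\]
Setting $v = \sum_{j=1}^n c_j x_j$, the right-hand side is precisely $\langle v, v\rangle$, which is $\geq 0$ by the first axiom of a semi-inner product. Since $n$, the $x_j$ and the $c_j$ were arbitrary, this establishes that $\langle\cdot,\cdot\rangle$ satisfies \eqref{eq:ker-pd} and is therefore a p.d. kernel.

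I do not expect any genuine obstacle: the argument is purely algebraic and relies only on the three semi-inner product axioms plus the derived anti-linearity. In particular, no appeal to strict positivity (definiteness) or to completeness is required, which is exactly why a mere \emph{semi}-inner product suffices to yield a p.d. (though not necessarily strictly p.d.) kernel.
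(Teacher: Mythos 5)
Your proof is correct and is essentially identical to the paper's: both absorb $c_j$ and $\overline{c_k}$ into the two slots via linearity and anti-linearity, collapse the double sum to $\langle v, v\rangle$ with $v = \sum_j c_j x_j$, and conclude by the non-negativity axiom. Your explicit remark that only the semi-inner product axioms (and not definiteness) are needed is a nice touch but does not change the argument.
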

\begin{proof} 
Let $n \in N$, $\{x_{1},\ldots,x_{n}\} \subseteq X$ and $\{c_{1},\ldots,c_{n}\} \subseteq \mathbb{C}$. 
Since a semi-inner product is linear in the first variable and anti-linear in the second variable we get,
 $$\sum_{j,k=1}^nc_{j}\bar{c_{k}}\langle x_{j},x_{k} \rangle = \sum_{j,k=1}^n\langle c_{j} x_{j},c_{k}x_{k}\rangle = \left\langle \sum_{j=1}^n c_{j}x_{j},\sum_{j=1}^nc_{j}x_{j}\right\rangle \geq 0.$$
\end{proof}

\begin{definition}[Inner Product]
If a semi-inner product space $\langle\cdot, \cdot\rangle$ satisfies
\[\langle x, x\rangle= 0 \Longrightarrow x=0\]
then it is called an {em inner product}. 
\end{definition}

\begin{definition}[Inner product space]
A vector space $X$ with inner product $\langle \cdot,\cdot \rangle$ defined on it is called an {\em inner product space} or a {\em  pre-Hilbert space.}
\end{definition}
We now discuss some examples of inner products. 

\begin{example}
\label{ex:dot-prod}
Prove that the dot product $\bm{x} \cdot \bm{y}=\bm{x}_{1} \bar{\bm{y}}_{1}+\cdots+\bm{x}_{n} \bar{\bm{y}}_{n}$ is an inner product on $\mathbb{C}^{n}$.
\end{example}
\begin{proof}
First we see that 
\[
        \bm{x} \cdot \bm{x}=\bm{x}_{1} \bar{\bm{x}}_{1}+\cdots+\bm{x}_{n} \bar{\bm{x}}_{n} = \sum_{k}|\bm{x}_{k}|^{2} \geq 0. 
\]
This further implies $\bm{x} \cdot \bm{x}$ is 0 only if $\bm{x}  \bm{0}$.

For the Hermitian property note that 
    \begin{equation*}
         \overline{\bm{y} \cdot \bm{x}}= \overline{\sum_{i=1}^n \bm{y}_i \overline{\bm{x}_i}}
 =  \sum_{i=1}^n \bm{x}_i \overline{\bm{y}_i} =  \bm{x} \cdot \bm{y}.
    \end{equation*}
Linearity in the first variable is easy to see.
\end{proof}

\begin{example}
\label{ex:weighted-dot-prod}
 If $w_{1}, \ldots, w_{n} \geq 0$ are fixed scalars, then the weighted dot product $\langle x, y\rangle= x_{1} \bar{y}_{1} w_{1}+$ $\cdot+x_{n} \bar{y}_{n} w_{n}$ is a semi-inner product on $\mathbb{C}^{n}$. It is an inner product if $w_{i}>0$ for each $i .$
\end{example}
This example can be established just like Example~\ref{ex:dot-prod}, except here we note that even if there is a $w_i = 0$ then $\langle \bm{x}, \bm{x}\rangle$ is 0 for any $\bm{x}$ which is non-zero only in the $i$th coordinate. Hence $\langle \cdot, \cdot \rangle$ is only a semi-inner product unless $w_i > 0$ for all $i$. 

\begin{example}
\label{ex:matrix-ip}
Let $A$ be a $n \times n$ p.d. matrix.  $\langle \bm{x},\bm{y} \rangle = A\bm{x} \cdot \bm{y} $ is a semi-inner product on $\mathbb{C}^n$ where $\cdot$ is as defined in Example~\ref{ex:dot-prod}. Moreover, $\langle \cdot, \cdot\rangle$ is an inner product if $A$ is strictly p.d.
\end{example}
\begin{proof}
Clearly $\langle \bm{x}, \bm{x}\rangle$ is non-negative if $A$ is p.d. and positive for all $\bm{x} \ne \bm{0}$ if $A$ is strictly p.d. Linearity in the first variable follows easily since $A$ is a linear transformation. It remains to show the Hermitian property.

\[   \overline{A\bm{y} \cdot x} =\overline{\sum_{i=1}^{n} \left(\sum_{j=1}^{n} A_{ij}\bm{y}_j\right)\overline{x_{i}}} = = \sum_{i, j = 1}^{n} \overline{A_{ij}\bm{y}_j}x_i.\]
Since all p.d. matrices are Hermitian (by arguments similar to that in the proof of Proposition~\ref{prp:pd-hermitian}), the last term above can be rewritten as \[\sum_{i, j = 1}^{n} A_{ji}\overline{\bm{y}_j}x_i ={\sum_{j=1}^{n} \left(\sum_{i=1}^{n} A_{ji}x_i\right){\bm{y}_{j}}} = A\bm{x}\cdot\bm{y}.\]
\end{proof}
The following example shows that all semi-inner products on $\CC^n$ can be represented by a p.d. matrix. 
\begin{example}
For an arbitrary semi-inner product $\langle \cdot, \cdot \rangle$ on $\mathbb{C}^{n}$, there is  a p.d matrix A such that  $\langle \bm{x}, \bm{y} \rangle = A\bm{x}\cdot\bm{y}$. 
\end{example}
\begin{proof}
Let $\bm{e}^1, \ldots, \bm{e}^n$, where $\bm{e}^i$ is the vector that is 1 in the $i$th coordinate and 0 elsewhere, be the standard basis of $\CC$. We hypothesise that the $A$ is the Gram matrix of this basis w.r.t the given inner product, i.e., $A_{ij} = \langle \bm{e}^i,\bm{e}^j\rangle$, $1 \leq i,j\leq n$. 
To verify this note, using the properties that any semi-inner product is linear in the first variable and antilinear in the second variable, that 
\[    \langle \bm{x}, \bm{y} \rangle =  \left\langle \sum_{i=1}^n \bm{x}_i \bm{e}^i,\sum_{i=1}^n \bm{y}_{i}\bm{e}^{i} \right\rangle = \sum_{i,j=1}^n \bm{x}_i \overline{\bm{y}_j} \langle \bm{e}^i, \bm{e}^j \rangle. 
\]
It is easy to see that $A$ is p.d. since $\bm{x}^TA\bm{x} = \langle \bm{x}, \bm{x}\rangle$ which is non-negative since $\langle \cdot, \cdot \rangle$ is an semi-inner product.
\end{proof}

We now discuss some basic properties of inner products. With every semi-inner product $\langle\cdot,\cdot\rangle$ we will associate a univariate function: $\|x\| = \langle x, x\rangle^{1/2}$. 

\begin{lemma}[Polar Identity]\label{lem:polar} If $\langle \cdot,\cdot\rangle$ is a semi-inner product on a vector space $X$ then for all $x,y \in X$
\[
\|x+y\|^{2} = \|x\|^{2} + 2\re\langle x,y \rangle+ \|y\|^{2}.
\]
\end{lemma}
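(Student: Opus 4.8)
**The plan is to expand the left-hand side directly using the definition of the norm and the algebraic properties (linearity, antilinearity, and the Hermitian property) of the semi-inner product.** The key identity is $\|x+y\|^2 = \langle x+y, x+y\rangle$, which follows from the definition $\|z\|^2 = \langle z, z\rangle$ applied to $z = x+y$. So the entire proof reduces to expanding $\langle x+y, x+y\rangle$.

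First I would use linearity in the first variable to write
\[
\langle x+y, x+y\rangle = \langle x, x+y\rangle + \langle y, x+y\rangle.
\]
Next I would apply antilinearity in the second variable (which was noted in the remarks following the definition of the semi-inner product) to expand each of these two terms, obtaining
\[
\langle x+y, x+y\rangle = \langle x, x\rangle + \langle x, y\rangle + \langle y, x\rangle + \langle y, y\rangle.
\]
Recognizing $\langle x, x\rangle = \|x\|^2$ and $\langle y, y\rangle = \|y\|^2$, it then remains only to combine the two cross terms $\langle x, y\rangle + \langle y, x\rangle$.

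The one place requiring a small observation is the cross terms. By the Hermitian property, $\langle y, x\rangle = \overline{\langle x, y\rangle}$, so
\[
\langle x, y\rangle + \langle y, x\rangle = \langle x, y\rangle + \overline{\langle x, y\rangle} = 2\,\re\langle x, y\rangle,
\]
using the elementary fact that $z + \overline{z} = 2\,\re z$ for any complex number $z$. Substituting this back yields exactly the claimed identity.

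**There is no real obstacle here — the result is a routine algebraic manipulation,** and the only conceptual ingredient is the Hermitian symmetry that converts the sum of conjugate cross terms into twice a real part. The proof works verbatim for a semi-inner product (not requiring positive-definiteness in the strict sense), since we never need $\langle x, x\rangle = 0 \Rightarrow x = 0$; we use only bilinearity-type properties and the Hermitian condition, all of which hold for semi-inner products.
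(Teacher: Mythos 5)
Your proof is correct and follows exactly the same route as the paper: expand $\langle x+y, x+y\rangle$ using linearity and antilinearity, then combine the cross terms via the Hermitian property to get $2\re\langle x,y\rangle$. Your version is in fact slightly more explicit than the paper's (which compresses the expansion into one line), and your closing observation that only semi-inner-product properties are needed matches the paper's own remark.
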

\begin{proof}
\begin{equation*}
    \|x+y\|^{2} = \langle x+y,x+y\rangle = \|x\|^{2} + \|y\|^{2}\ + \langle x,y \rangle + \langle y,x \rangle
\end{equation*}
The result follows by using the Hermitian property,  $\langle x,y \rangle = \overline{\langle y,x \rangle}$.
\end{proof}

\begin{theorem}[Cauchy-Bunyakovsky-Schwarz Inequality]\label{thm:cbs} If $\langle\cdot, \cdot\rangle$ is a semi-inner product on a vector space $X$ then for all $x,y \in X$
\[
|\langle x, y\rangle| \leq\|x\|\|y\|.
\]
Moreover, equality holds if $x = \lambda y$ for some $\lambda \in \mathbb{C}$ 
\end{theorem}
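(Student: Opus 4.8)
The plan is to prove the Cauchy-Bunyakovsky-Schwarz inequality using the standard approach that exploits the non-negativity of the semi-inner product applied to a cleverly chosen vector, together with the Polar Identity (Lemma~\ref{lem:polar}) just proved. The key idea is that since $\langle z, z \rangle \geq 0$ for every $z \in X$, we may substitute a well-chosen linear combination of $x$ and $y$ and extract the desired inequality.

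First I would dispose of the trivial case $y = 0$, where both sides vanish using the remark that $\langle x, 0 \rangle = 0$, so the inequality holds with equality. Assuming $y \neq 0$, and hence (strictly speaking we only need $\|y\| \neq 0$; if $\|y\| = 0$ a separate short argument using non-negativity of $\|x - \lambda y\|^2$ forces $\langle x, y \rangle = 0$), I would consider the vector $z = x - \lambda y$ where $\lambda = \langle x, y \rangle / \|y\|^2$. The motivation is that $\lambda y$ is the ``projection'' of $x$ onto $y$, so $z$ is the residual. Expanding $\|z\|^2 = \langle x - \lambda y, x - \lambda y \rangle \geq 0$ via the Polar Identity (or directly via bilinearity), and substituting the chosen value of $\lambda$, the cross terms are engineered to cancel, leaving an expression of the form $\|x\|^2 - |\langle x, y\rangle|^2 / \|y\|^2 \geq 0$. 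Rearranging gives $|\langle x, y \rangle|^2 \leq \|x\|^2 \|y\|^2$, and taking square roots yields the result.

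The main obstacle, such as it is, will be the bookkeeping with complex scalars: because the semi-inner product is only anti-linear in the second variable, I must be careful that $\langle \lambda y, x \rangle = \lambda \langle y, x \rangle$ while $\langle x, \lambda y \rangle = \overline{\lambda} \langle x, y \rangle$, so that the two cross terms combine to $-2\,\re(\overline{\lambda}\langle x, y\rangle) = -2|\langle x, y\rangle|^2/\|y\|^2$ rather than cancelling incorrectly. Keeping track of which conjugate goes where is the only delicate point; everything else is routine expansion.

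For the equality claim, I would verify directly that if $x = \lambda y$ then $|\langle x, y \rangle| = |\lambda| \|y\|^2 = |\lambda| \|y\| \cdot \|y\| = \|x\|\|y\|$, using $\|x\| = \|\lambda y\| = |\lambda|\|y\|$, which establishes that proportionality is sufficient for equality as stated.
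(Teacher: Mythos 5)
Your proposal is correct and takes essentially the same route as the paper's proof: both expand $\|x+ty\|^{2}\geq 0$ at the optimal choice $t=-\langle x,y\rangle/\langle y,y\rangle$ (your $x-\lambda y$ with $\lambda=\langle x,y\rangle/\|y\|^{2}$ is the same vector) and rearrange, and both verify equality under $x=\lambda y$ by direct computation. The one point where you are more careful is the degenerate case $\|y\|=0$ with $y\neq 0$, which can occur for a semi-inner product; the paper's proof dismisses only the case $y=0$ and then divides by $\langle y,y\rangle$ without comment, whereas your separate argument forcing $\langle x,y\rangle=0$ there closes that small gap.
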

\begin{proof}
If either of $x$ or $y$ are 0 the result follows trivially so we assume $x,y$ are both non-zero. Note that if $x \in X$ and $\alpha \in \CC$ then by the linearity and anti-linearity properties of semi-inner products $\|\alpha x\| = |\alpha|\|x\|$. With this in mind, by Lemma~\ref{lem:polar}, for any $t \in \CC$ we get that 
\begin{equation*}
    0 \leq \|x+ty\|^{2} \leq \|x\|^{2}+2 \re\langle x, ty \rangle+|t|^{2}\|y\|^{2}
\end{equation*}
Setting  $t=-\frac{\langle x, y\rangle}{\langle y,y \rangle}$ we get 
\[
\|x\|^{2}-2 \re \frac{|\langle x,y\rangle|^{2}}{\| y\|^{2}} +\frac{|\langle x, y\rangle|^{2}\|y\|^{2}}{\|y\|^{4}} = \|x\|^{2}-2 \frac{|\langle x,y\rangle|^{2}}{\| y\|^{2}} +\frac{|\langle x, y\rangle|^{2}}{\|y\|^{2}} = \|x\|^{2}- \frac{|\langle x,y\rangle|^{2}}{\| y\|^{2}} \geq 0,
\]
and the result follows. If $x = \lambda y$ for some $\lambda \in \CC$, then
\[
  \langle\lambda y, \lambda y\rangle-\frac{|\langle\lambda y, y\rangle|^{2}}{\|y\|^{2}}  
=  |\lambda|^{2}\|y\|^{2}-|\lambda|^{2} \frac{\|y\|^{4}}{\|y\|^{2}}=0
\]
\end{proof}

\noindent\textbf{Remark.} Both the Polar identity and the Cauchy-Bunyakovsky-Schwarz Inequality don't need $\langle \cdot, \cdot\rangle$ to be an inner product, semi-inner product is enough.

Now, let us see some two examples of inner products in spaces which are not finite dimensional.
\begin{example}[Square summable sequences]
Let $I$ be a countable index set (eg. set of natural numbers or integers). Let $w: I \rightarrow[0, \infty)$. Let 
\begin{equation*}
    \ell_{w}^{2}=\ell_{w}^{2}(I)=\left\{x=\left\{x_{i}\right\}_{i \in I}: \sum_{i \in I}\left|x_{i}\right|^{2} w(i)<\infty\right\}
\end{equation*}
then 
\begin{equation*}
    \langle x, y\rangle=\sum_{i \in I} x_{i} \bar{y}_{i} w(i) 
\end{equation*}
defines a semi-inner product on $\ell^2_w$. If $w(i)>0$ for all $i\in I$ then it is an inner product
\end{example}
\begin{proof}
The non-negativity of diagonal elements is easy to prove and, like in Example~\ref{ex:weighted-dot-prod} we can argue that $\langle x, x\rangle$ is 0 whenever $x = 0$ only if every $w_i > 0$. Linearity in the first variable is also easy to establish. But a complication arises with the Hermitian property. We know that
\[\overline{\langle x, y \rangle} = \overline{\sum_{i \in I}x_i\overline{y_i}w(i)} = \sum_{i \in I}\overline{x_i}y_iw(i),\]
but there is no guarantee that the last sum is finite. To see that it is, let $I = \NN$ and consider the partial sum of the first $n$ terms of the sum. By Example~\ref{ex:weighted-dot-prod} and by the Cauchy-Bunyakovski-Schwarz inequality (Theorem~\ref{thm:cbs}) we know that 
\[\sum_{i \in =1}^n \overline{x_i}y_iw(i) \leq \sum_{i=1}^n |x_i|^2 w_i \sum_{i=1}^n |y_i|^2 w_i.\]
Taking limits on both sides we get that the RHS is finite since $x,y \in \ell^2_w$, and hence the LHS converges to a finite quantity and hence is equal to $\langle y,x\rangle$.
\end{proof}

\begin{example}[Square integrable functions]
\label{ex:square-integ}
Let $(X, \Omega, \mu)$ be a measure space. Define
$$
L^{2}(X,\mu)=\left\{f: X \rightarrow \mathbb{F}: \int_{X}|f(x)|^{2} d \mu(x)<\infty\right\}
$$
where we identify functions that are equal almost everywhere, i.e.,
$$
f=g \Longleftrightarrow \mu\{x \in X: f(x) \neq g(x)\}=0.
$$
Then
$$
\langle f, g\rangle=\int_{X} f(x) \overline{g(x)} d \mu(x)
$$
defines an inner product on $L^{2}(X,\mu)$.
\end{example}

We now introduce the notion of a norm and show that an inner product naturally induces a norm.
\begin{definition}[Semi Norm]
Given a vector space $X$, a function $f : X  \rightarrow \mathbb{R}$ is said to be {\em semi norm}, if for all $x, y \in X$ and $\alpha \in \CC$
\begin{enumerate}
    \item $f(x) \geq 0$, 
    \item $f(\alpha x) = |\alpha| f(x),$ and
    \item $f(x+y) \leq  f(x) + f(y)$. (Triangle Inequality)
\end{enumerate}
Further, if $f(x) = 0 \implies x=0$, then $f$ is called a {\em norm}.
\end{definition}

\begin{definition}[Normed Linear Space]
Any vector space equipped with a norm is called as a {\em normed linear space} or a {\em normed vector space.} 
\end{definition}

\begin{proposition}
$\|\cdot \|$ is a semi-norm if $\langle \cdot,\cdot \rangle$ is a semi-inner product and a norm if $\langle \cdot,\cdot \rangle$ is an inner product.
\end{proposition}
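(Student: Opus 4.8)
The plan is to verify the three defining properties of a semi-norm directly from the definition $\|x\| = \langle x, x\rangle^{1/2}$, and then to upgrade to a norm under the extra inner-product hypothesis. The non-negativity and homogeneity properties are immediate consequences of the axioms of a semi-inner product, so I would dispatch them first, reserving the real work for the triangle inequality.

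First I would observe that property~1 (non-negativity) holds because $\langle x, x\rangle \geq 0$ for every $x$ by the first axiom of a semi-inner product, so $\|x\| = \langle x, x\rangle^{1/2}$ is a well-defined real number and $\|x\| \geq 0$. For property~2 (homogeneity), I would use linearity in the first variable together with anti-linearity in the second to compute $\langle \alpha x, \alpha x\rangle = \alpha\overline{\alpha}\langle x, x\rangle = |\alpha|^2\|x\|^2$, and then take square roots to conclude $\|\alpha x\| = |\alpha|\,\|x\|$. Both steps are routine algebra.

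The main obstacle is the triangle inequality, property~3, and this is where the two preceding results do the heavy lifting. The plan is to apply the Polar Identity (Lemma~\ref{lem:polar}) to write
\[
\|x+y\|^2 = \|x\|^2 + 2\re\langle x, y\rangle + \|y\|^2,
\]
then bound $\re\langle x, y\rangle \leq |\langle x, y\rangle|$ and invoke the Cauchy-Bunyakovsky-Schwarz inequality (Theorem~\ref{thm:cbs}) to get $|\langle x, y\rangle| \leq \|x\|\,\|y\|$. Substituting, the right-hand side is dominated by $\|x\|^2 + 2\|x\|\,\|y\| + \|y\|^2 = (\|x\| + \|y\|)^2$, and since both $\|x+y\|$ and $\|x\|+\|y\|$ are non-negative, taking square roots yields $\|x+y\| \leq \|x\| + \|y\|$. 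I would stress that the crucial point is that CBS (per the remark following Theorem~\ref{thm:cbs}) holds for semi-inner products, not merely inner products, so nothing extra is needed at the semi-norm stage.

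Finally, to establish the norm claim, I would add the hypothesis that $\langle \cdot, \cdot\rangle$ is an inner product, so that $\langle x, x\rangle = 0 \implies x = 0$. Then $\|x\| = 0$ forces $\langle x, x\rangle = 0$, hence $x = 0$, which is exactly the additional condition in the definition of a norm. This closes the proof.
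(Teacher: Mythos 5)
Your proof is correct and follows essentially the same route as the paper's: dispatch non-negativity and homogeneity from the semi-inner-product axioms, then derive the triangle inequality by combining the Polar Identity with $\re\langle x,y\rangle \leq |\langle x,y\rangle|$ and the Cauchy-Bunyakovsky-Schwarz inequality. Your explicit treatment of the final norm claim (that $\|x\|=0 \implies x=0$ under the inner-product hypothesis) is a small addition the paper leaves implicit, but the argument is otherwise identical.
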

\begin{proof}
The non-negativity of $\|\cdot \|$ follows easily. The second property follows from the fact that a semi-inner product is linear in the first variable and anti-linear in the second variable. We now argue for the Triangle inequality.

From the Polar identity (Lemma~\ref{lem:polar}) we know that
    \begin{equation*}
     \|x+y\|^{2}=\|x\|^{2}+2 \re \langle x, y \rangle+\|y\|^{2}.   
    \end{equation*}
Since $\re \alpha \leq |\alpha|$ for any $\alpha \in \CC$, we can say that 
\[     \|x+y\|^{2}\leq\|x\|^{2}+2 |\langle x, y \rangle|+\|y\|^{2}.   \]
Applying the Cauchy-Bunyakovsky-Schwarz Inequality (Theorem~\ref{thm:cbs}), we get 
\[ \|x+y\|^{2}\leq \|x\|^{2}+2 \|x\|\|y\|+\|y\|^{2} = (\|x\|+\|y\|)^{2}.\]
\end{proof}
Thus we see that an inner product induces a norm. Now we will see that just like the inner product is a p.d. kernel, the norm provides a natural c.n.d. kernel. The following proposition is a generalization of Example~\ref{ex:gaussian-cnd}.

\begin{proposition}
\label{prp:seminorm-cnd}
\(\psi(x, y) =\|x-y\|^{2}\) is a c.n.d. kernel if  $\|x\|$ is a semi norm. 
\end{proposition}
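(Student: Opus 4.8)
The plan is to reduce to real scalars via Proposition~\ref{prp:real-kernel} and then expand the squared norm with the Polar Identity, so that the constraint $\sum_i c_i = 0$ annihilates the ``diagonal'' contributions and leaves a single perfect square carrying the correct sign.

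First I would record that $\psi$ is real-valued and symmetric: since $\|\cdot\|$ is a semi-norm, $\|x-y\| = \|-(y-x)\| = \|y-x\|$, so $\psi(x,y) = \psi(y,x) \in \RR$. Here I should flag the implicit hypothesis that $\|\cdot\|$ is the semi-norm \emph{associated with a semi-inner product} $\langle\cdot,\cdot\rangle$, i.e.\ $\|x\|^2 = \langle x,x\rangle$; this is precisely what makes the Polar Identity available, and a squared semi-norm not arising this way need not be c.n.d. Granting this, Proposition~\ref{prp:real-kernel} reduces the task to checking \eqref{eq:real-kernel} for real scalars, so I fix $n \geq 2$, points $x_1,\dots,x_n$, and reals $r_1,\dots,r_n$ with $\sum_{i=1}^n r_i = 0$.

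Next I would invoke the Polar Identity (Lemma~\ref{lem:polar}) with $y \mapsto -x_k$ to write $\|x_j - x_k\|^2 = \|x_j\|^2 + \|x_k\|^2 - 2\re\langle x_j, x_k\rangle$, and substitute this into $\sum_{j,k} r_j r_k \|x_j - x_k\|^2$. The terms $\sum_{j,k} r_j r_k \|x_j\|^2$ and $\sum_{j,k} r_j r_k \|x_k\|^2$ each factor into a product containing $\sum_i r_i = 0$, so both vanish. What survives is $-2\sum_{j,k} r_j r_k \re\langle x_j, x_k\rangle$, and because the $r_i$ are real, linearity and antilinearity of the semi-inner product collapse this to $-2\re\langle \sum_j r_j x_j,\ \sum_k r_k x_k\rangle = -2\bigl\|\sum_j r_j x_j\bigr\|^2 \le 0$, using the non-negativity of $\langle z,z\rangle$.

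I do not expect a serious obstacle: this is essentially the computation already carried out in Example~\ref{ex:gaussian-cnd}, with the scalar products $x_j x_k$ replaced by $\langle x_j,x_k\rangle$. The only points needing care are (i) flagging that the conclusion depends on $\|\cdot\|$ coming from a semi-inner product rather than being an arbitrary semi-norm, and (ii) the bookkeeping that turns the surviving cross term into the single square $\|\sum_j r_j x_j\|^2$. If one prefers to avoid Proposition~\ref{prp:real-kernel} and argue directly with complex scalars, the same expansion works, except that one then appeals to Proposition~\ref{prp:pd-real} --- that $\re\langle\cdot,\cdot\rangle$ is p.d.\ because $\langle\cdot,\cdot\rangle$ is --- to conclude $\sum_{j,k} c_j \overline{c_k}\,\re\langle x_j, x_k\rangle \ge 0$.
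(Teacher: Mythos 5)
Your proof is correct and follows essentially the same route as the paper's: expand $\|x_j-x_k\|^2$ via the Polar Identity, use $\sum_i c_i = 0$ to kill the two diagonal terms, and conclude from the positive definiteness of $\re\langle\cdot,\cdot\rangle$ (your complex-scalar alternative is verbatim the paper's argument, and your real-scalar main path merely makes the surviving term an explicit square). Your observation that the statement implicitly requires $\|\cdot\|$ to be induced by a semi-inner product --- without which the Polar Identity is unavailable and the claim can fail --- is a worthwhile clarification the paper leaves tacit.
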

\begin{proof}
Given $n \geq 2,\left\{x_{1}, \ldots, x_{n}\right\} \subseteq X$ and $\left\{c_{1}, \ldots, c_{n}\right\} \subseteq \mathbb{C}$ such that \(\sum_{i=1}^{n} c_i = 0\) by the Polar Identity (Lemma~\ref{lem:polar}) we have that 

\begin{equation*}
    \sum_{j, k=1}^{n} c_{j} \overline{c_{k}} \|x_j-x_k\|^{2}    = \underbrace{\sum_{j, k=1}^{n} c_{j} \overline{c_{k}} \|x_j\|^{2}}_{0} - 2 \sum_{j, k=1}^{n} c_{j} \overline{c_{k}} \re\langle x_j, x_k\rangle+  \underbrace{\sum_{j, k=1}^{n} c_{j} \overline{c_{k}} \|x_k\|^{2}}_{0},
\end{equation*}
where the first and the third term are 0 because $\sum_{i=1}^n c_i =0$. From Proposition~\ref{prp:sip-pd} we know that $\langle \cdot, \cdot\rangle$ is a p.d. kernel and so, from Proposition~\ref{prp:pd-real}, we know that $\re \langle \cdot, \cdot\rangle$ is a p.d. kernel. Therefore
\begin{equation*}
    \sum_{j, k=1}^{n} c_{j} \overline{c_{k}} \|x_j-x_k\|^{2} =  - 2 \sum_{j, k=1}^{n} c_{j} \overline{c_{k}} \re\langle x, y\rangle \leq 0 
\end{equation*}
\end{proof}

\subsection{Hilbert spaces}

Hilbert spaces are inner product spaces with certain convergence properties. To make the definition concrete, we first need to define notions of convergence andinvestigate some properties of inner product spaces under these notions. Note that Definition~\ref{def:convergence} and Proposition~\ref{prp:conv-properties} hold for any linear space which has a norm associated with it, i.e., a more general class of spaces than inner product spaces. 

\begin{definition}[Convergence in a normed space]
\label{def:convergence} Let $X$ be a normed linear space and let $\{f_n\}^{\mathrm{\infty}}_{n=1}$ be a sequence of elements of $X$.
\begin{enumerate}
\item We say that $\{f_n\}$ converges to $f\in X$ , denoted $f_n \to f$, if 
\begin{equation*}
\lim_{n\to\infty}\|f-f_n\| = 0,
\end{equation*}
i.e.,
\begin{equation*}
\forall\varepsilon>0, \exists N>0 \text{ such that } n>N \implies \|{f-f_n}\| < \varepsilon.
\end{equation*}
\item We say that $\{f_n\}$ is Cauchy if 
\begin{equation*}
\forall \varepsilon>0 , \exists N>0 \text{ such that } m,n>N \implies\|{f_m-f_n}\|<\varepsilon.
\end{equation*}
\end{enumerate}
\end{definition}

\begin{proposition}[Convergence properties of normed linear spaces]\label{prp:conv-properties} Let $X$ be a normed linear space and $x, y \in X$,
\begin{enumerate}
\item Reverse Triangle Inequality: $\|{x-y}\| \ \geq \ \bigl| \|{x}\|-\|{y}\|\bigr|$
\item Continuity of the norm: $x_n \to x \implies \|{x_n}\| \to \|{x}\|$
\item Continuity of the inner product: \textit{If X is an inner product space then }
\begin{equation*}
x_n \to x, \ y_n \to y \implies \langle x_n, y_n\rangle \to \langle x,y\rangle
\end{equation*}
\item All convergent sequences are bounded, and the limit of a convergent sequence is unique.
\item Cauchy sequences are bounded.
\item Every convergent sequence is Cauchy. 
\item There exist inner product spaces for which not every Cauchy sequence is convergent.
\end{enumerate}
\end{proposition}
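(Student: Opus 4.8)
The plan is to dispatch the seven claims in an order that lets the later ones reuse the earlier ones, since several are immediate consequences of the triangle inequality together with routine $\varepsilon$--$N$ bookkeeping. First I would prove the reverse triangle inequality (part 1) directly: writing $x = (x-y) + y$ and applying the triangle inequality gives $\|x\| - \|y\| \leq \|x-y\|$, and swapping the roles of $x$ and $y$ gives $\|y\| - \|x\| \leq \|x-y\|$, so that $\bigl|\|x\| - \|y\|\bigr| \leq \|x-y\|$. Continuity of the norm (part 2) is then immediate: substituting $x_n$ and $x$ into part 1 yields $\bigl|\|x_n\| - \|x\|\bigr| \leq \|x_n - x\| \to 0$.

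Next I would handle the three ``$\varepsilon$--$N$'' facts (parts 4, 5, 6) together, as they share a common structure. For uniqueness of limits, if $x_n \to x$ and $x_n \to x'$ then $\|x - x'\| \leq \|x - x_n\| + \|x_n - x'\| \to 0$, and since $\|\cdot\|$ is a genuine norm this forces $x = x'$. For boundedness of a convergent sequence I would take $\varepsilon = 1$ to find $N$ with $\|x_n\| \leq \|x\| + 1$ for $n > N$, then bound the whole sequence by the maximum of this and the finitely many initial norms $\|x_1\|, \ldots, \|x_N\|$. Part 5 is the same argument anchored at a fixed index $x_{N+1}$ rather than at the limit, and part 6 is the standard observation that $\|x_m - x_n\| \leq \|x_m - x\| + \|x - x_n\| < \varepsilon$ once both terms are below $\varepsilon/2$.

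The most substantive deduction is continuity of the inner product (part 3). Here I would add and subtract a cross term, writing $\langle x_n, y_n\rangle - \langle x, y\rangle = \langle x_n, y_n - y\rangle + \langle x_n - x, y\rangle$, and bound each piece by the Cauchy--Bunyakovsky--Schwarz inequality (Theorem~\ref{thm:cbs}): $|\langle x_n, y_n - y\rangle| \leq \|x_n\|\,\|y_n - y\|$ and $|\langle x_n - x, y\rangle| \leq \|x_n - x\|\,\|y\|$. The one subtlety is the factor $\|x_n\|$, which I cannot replace by a constant without knowing the sequence is bounded; this is exactly why part 4 is proved first, since it supplies a uniform bound $\|x_n\| \leq M$. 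With that in hand both terms tend to $0$, and the result follows.

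Finally, part 7 is of a different flavour: it asks for a counterexample rather than a deduction, and this is where I expect the only genuine construction. The plan is to exhibit a concrete incomplete inner product space, and a clean choice is the space of eventually-zero sequences equipped with the $\ell^2$ inner product, together with the sequence $x_n = (1, \tfrac{1}{2}, \ldots, \tfrac{1}{n}, 0, 0, \ldots)$. For $m > n$ one computes $\|x_m - x_n\|^2 = \sum_{k=n+1}^{m} k^{-2}$, which is a tail of a convergent series and hence makes $\{x_n\}$ Cauchy; yet any candidate limit would have to agree coordinatewise with $(1, \tfrac{1}{2}, \tfrac{1}{3}, \ldots)$, which is square-summable but not eventually zero and therefore lies outside the space. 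Hence no limit exists in this space, establishing part 7.
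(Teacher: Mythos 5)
Your proposal is correct, and for parts 1--6 it follows essentially the same route as the paper: the reverse triangle inequality via two applications of the triangle inequality, continuity of the norm as an immediate corollary, the standard $\varepsilon$--$N$ arguments for boundedness and the Cauchy property, and the same add-and-subtract decomposition with Cauchy--Bunyakovsky--Schwarz for continuity of the inner product. You are in fact slightly more careful than the paper in two places: you explicitly invoke boundedness of convergent sequences to control the factor $\|x_n\|$ (the paper only cites continuity of the norm, which gives boundedness implicitly), and you actually write out the uniqueness-of-limits argument that the paper waves off as ``assume the contrary.''

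The genuine divergence is in part 7, and here your choice is the better one. The paper works in $C([0,1])$ with the $L^2$ inner product and takes $f_n$ equal to $1-nx$ on $[0,1/n]$ and $0$ thereafter; but $\|f_n\|^2 = \int_0^{1/n}(1-nx)^2\,dx = 1/(3n) \to 0$, so this sequence in fact converges to the zero function \emph{inside} the space, and the paper's closing claim that the sequence ``becomes discontinuous'' does not establish non-convergence in the $L^2$ norm. (A repaired version would use a sequence tending to a step function.) Your example --- eventually-zero sequences under the $\ell^2$ inner product, with $x_n = (1,\tfrac12,\ldots,\tfrac1n,0,\ldots)$ --- is correct as stated: the tail estimate $\|x_m-x_n\|^2 = \sum_{k=n+1}^m k^{-2}$ gives the Cauchy property, and since norm convergence dominates coordinatewise convergence (as $|x(k)-y(k)|\le\|x-y\|$), any limit would have to be the non-terminating sequence $(1/k)_{k\ge1}$, which lies outside the space. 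You might add that one line about coordinatewise convergence explicitly, but otherwise the argument is complete.
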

\begin{proof}
\begin{enumerate}
\item By the  triangule inequality we have
\begin{equation*}
\|{x}\| + \|{y-x}\| \ \geq \ \|{x+(y-x)}\|  = \|{y}\|
\end{equation*}
and
\begin{equation*}
\|{x-y}\| + \|{y}\| \ \geq \ \|{(x-y)+y}\| = \|{x}\|
\end{equation*}
When we move $\|{x}\|$ and $\|{y}\|$ to the RHS in the inequalities above, we have 
\begin{equation*}
\|{y-x}\| \ \geq \ \|{y}\| - \|{x}\|
\end{equation*}
and
\begin{equation*}
\|{x-y}\| \ \geq \ \|{x}\| - \|{y}\|
\end{equation*}
We know that $\|{y-x}\| = \|{x-y}\|$, therefore, we have $\|{x-y}\| \ \geq \ \bigl| \|{x}\|-\|{y}\|\bigr| $.
\item For some $\epsilon > 0$, if $\|x_n-x\| < \varepsilon$ then by the reverse triangle inequality above , we have
$\|{x_n}\| - \|{x}\| \ \leq \ \|{x_n-x} \| < \varepsilon.$
\item We can write
\begin{align*}
 \bigl| \langle x_n, y_n\rangle - \langle x,y \rangle \bigr| &= \bigl| \langle x_n, y_n \rangle -\langle x_n, y \rangle + \langle x_n, y \rangle + \langle x, y \rangle\bigr| \\
&\leq \bigl|\langle x_n, y_n \rangle -\langle x_n, y\rangle\bigr| + \bigl|\langle x_n, y \rangle -\langle x, y\rangle \bigr| \\
&\leq\|{x_n}\| \cdot \|{y_n- y}\| + \|{x_n-x} \| \cdot \|{y}\|  
\end{align*}
where the last step follows from the Cauchy-Bunyakovsky-Schwarz inequality (Theorem~\ref{thm:cbs}).
Since the norm is continuous, i.e., 
$
\|{x_n}\| \to \|{x}\| \text{ and } \|{y_n}\| \to \|{y}\|$, therefore, 
\begin{equation*}
\bigl| \langle x_n, y_n\rangle - \langle x,y \rangle \bigr| \to 0.
\end{equation*}
\item 
(i) Let $f_n$ be a convergent sequence and $f_n \to f$. If we take $\varepsilon  = 1$, we have an $N > 0$ such that $|f_n - f| < 1$ for all $n > N$. Using the reverse triangle inequality, for $n > N$, 
\begin{equation*}
| f_n| -| f| < 1 \implies | f_n| < | f| + 1
\end{equation*}
Let $M = \max\{| f | + 1,| f_1 |,\ldots, | f_N |  $\}, we have $| f_n| \leq M$ for all $n > N$.
\\
(ii) Uniqueness of the limit can be proved by assuming the contrary and showing a contradiction.
\item By the definition of Cauchy sequence for every $\varepsilon > 0$ there is an $ N>0$ such that for  $n,m > N,$ $| f_n - f_m| < \varepsilon$. Using the triangle inequality, we have 
\[| f_n| = | f_n - f_m + f_m|  \leq \ | f_n- f_m| + | f_m|. \]
Taking  $\varepsilon$ = 1, we have that $| f_n - f_m| < 1$ for $n, m > N$. 
Now taking $ m = N+1$, we get  $| f_n| < 1+| f_{N+1}|$
This bounds all the terms beyond the $N$th term. Taking $M = \max\{ | f_1 |,\ldots,| f_n |,| f_{N+1} | + 1  $\}, we have $| f_n| \leq M$ for all $n | 0$.
\item Let $f_n$ be a convergent sequence, we have $| f_n - f| < \varepsilon$ , for all  $\varepsilon > 0 $. Using the triangle inequality, 
\begin{equation*}
 | f_n - f_m| = | f_n - f + f - f_m | \leq | f_n - f| + | f - f_m|
\end{equation*}
for all $n, m > N$ for some $N > 0$. Replacing $\varepsilon$ by $\varepsilon/$2, we have $| f_n - f_m| < \varepsilon$. Therefore, $f_n$ is a Cauchy sequence.
\item Consider $L^2$ ([0,1]) $\cap$ C([0,1]), where $L^2$ is a set of continuous square integrable functions on $[0,1]$ and $C([0,1])$ is a set of continuous functions on $[0,1]$. Define a sequence 
\begin{equation*}
    f_n(x) = \left\{\begin{array}{lr}1, &  x = 0
        \\1-nx, &  0\leq x\leq \frac{1}{n}
        \\0 &  x \geq \frac{1}{n}
        \end{array}\right.
\end{equation*}
Using the inner product defined in Example~\ref{ex:square-integ}, and assuming $n > m$,
\begin{align*}
\|{f_n - f_m}\| &= \int_{x=0}^{1} (f_n(x)-f_m(x))^2 dx\\
&=\int_{0}^{\frac{1}{n}}(n-m)^2 x^2 dx+ \int_{\frac{1}{n}}^{\frac{1}{m}}m^2 x^2 dx + 0\\
&=(n-m)^2 \left(\frac{1}{3n^2}\right) + \frac{m^2}{3} \left(\frac{1}{m^3}-\frac{1}{n^3}\right)\\
&\leq \frac{1}{3m}
\end{align*}
Therefore, the sequence is Cauchy. As $n \to \infty, f_n$ becomes discontinuous and hence $\{f_n\}$ is not convergent. 
\end{enumerate}
\end{proof}
Now we are ready to define a Hilbert space.
\begin{definition}[Hilbert Space] An inner product space $H$ is called a Hilbert space if it is complete, i.e., if every Cauchy sequence is convergent, i.e., 
$\{f_n\}^{\mathrm{\infty}}_{n=1}$ is Cauchy in $H$ implies that there is an $f \in H$ such that $f_n \to f$.
\end{definition}
\noindent{\bf Remark.} 
A normed linear space is called a \textit{Banach} space if every Cauchy sequence converges. Hence every Hilbert space is also a Banach space. 

\subsection{Reproducing Kernel Hilbert Spaces}
\label{sec:hilbert:rkhs}
We now show that for every p.d. kernel $\varphi: X \times X \to \mathbb{C}$ we can construct a Hilbert space, $H$, associated with it in the sense that there is a mapping $f : X \rightarrow H$ such that, for every $x, y \in X$, $\varphi(x,y) = \langle f(x), f(y)\rangle$, where $\langle \cdot, \cdot \rangle$ is the inner product defined on $H$. This $H$ will be called the {\em Reproducing Kernel Hilbert Space} associated with $\varphi$, for reasons that will become clear shortly. 

First, given $x \in X$ and p.d. kernel $\varphi: X \times X \to \CC$, define univariate function $\varphi_x: X \to \CC$ as follows: for $y \in X$, $\varphi_x(y) = \varphi(x,y)$. 

If $\CC^X$ is the set of functions from $X$ to $\CC$, let $H_0$ be the linear subspace of $\CC^X$ generated by $\{ \varphi_x : x \in X\}$, i.e., if $f, g\in H_0$ then for some $n \in \NN$, $\{x_1, \ldots, x_n\} \subseteq X$, $\{c_1,\ldots, c_n\} \subseteq \CC$, $f = \sum_{j=1}^n c_j \varphi _{x_j}$ and, similarly, $g = \sum_{k=1}^m d_k \varphi _{y_k}$ for some $m \in \NN, \{y_1, \ldots, y_m\}\subseteq X$,  and $\{d_1, d_2, \ldots, d_n \} \subseteq \mathbb{C}$. We define a bivariate function $\langle \cdot, \cdot \rangle : H_0 \times H_0 \to \CC$ as follows:
\[\langle f,g \rangle = \sum_{j,k} c_j \overline{d_k} \varphi (x_j,y_k). \]
\paragraph{The reproducing property.}
To appreciate the definition of $\langle \cdot, \cdot\rangle$, let us define the $n \times m$ matrix $M = (\varphi(x_i, y_j) : i \leq n, j\leq m)$ and consider the row vectors $\bm{c} = (c_1, \ldots, c_n)$ and $\bm{d} = (d_1, \ldots, d_m)$. Then 
\[ \langle f, g \rangle = \bm{c} M \bm{d}^*.\]
Note that $\bm{c} M$ is a $1 \times m$ row vector whose $i$th coordinate is 
\[\sum_{j=1}^n c_j \varphi(x_j, y_i) = f(y_i).\]
Therefore 
\[\langle f, g \rangle = \sum_{i=1}^m \overline{d_i} f(y_i).\]
If $g = \varphi_y$, then we have that $\langle f, \varphi_y \rangle = f(y)$. And, in particular, if $f = \varphi_x$ and $g = \varphi_y$ we get the so called ``reproducing property'',
\[\langle \varphi_x, \varphi_y \rangle = \varphi_x(y) = \varphi(x,y).\]
Viewed from the other direction, using the Hermitian property of $\varphi$, note that $M \bm{d}^*$ is a $n \times 1$ column vector whose $j$th coordinate is 
\[\sum_{i=1}^m \overline{d_i} \varphi(x_j,y_i) = \overline{\sum_{i=1}^m d_i \varphi(y_i,x_j)} = \overline{g(x_j)}, \]
which tells us that 
\[\langle f, g \rangle = \sum_{i=1}^n c_i \overline{g(x_i)}.\]

\begin{proposition}
$\langle \cdot, \cdot \rangle$ is an inner product on $H_0$.
\end{proposition}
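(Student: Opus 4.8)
The plan is to check the three semi-inner-product axioms together with the definiteness condition $\langle f,f\rangle = 0 \Rightarrow f = 0$, and the one genuine subtlety to dispatch first is that $\langle\cdot,\cdot\rangle$ is \emph{well defined}. The defining formula $\langle f,g\rangle = \sum_{j,k} c_j\overline{d_k}\varphi(x_j,y_k)$ is written in terms of specific expansions $f = \sum_j c_j\varphi_{x_j}$ and $g = \sum_k d_k\varphi_{y_k}$, but these expansions are not unique, so I must argue the value does not depend on them. This is exactly what the two reformulations derived immediately above the statement accomplish: $\langle f,g\rangle = \sum_{i=1}^m \overline{d_i}\, f(y_i)$ exhibits $f$ only through its function values, so the value is unaffected by re-expanding $f$; symmetrically $\langle f,g\rangle = \sum_{i=1}^n c_i\,\overline{g(x_i)}$ exhibits $g$ only through its function values, so it is unaffected by re-expanding $g$. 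Chaining the two invariances covers an arbitrary simultaneous change of both expansions, so $\langle\cdot,\cdot\rangle$ is well defined.

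With well-definedness in hand the three axioms are short. Non-negativity is immediate: $\langle f,f\rangle = \sum_{j,k} c_j\overline{c_k}\varphi(x_j,x_k) \geq 0$ because $\varphi$ is a p.d. kernel. For the Hermitian property I would compute $\overline{\langle g,f\rangle} = \sum_{j,k} c_j\overline{d_k}\,\overline{\varphi(y_k,x_j)}$ and invoke that $\varphi$ is Hermitian (Proposition~\ref{prp:pd-hermitian}), so $\overline{\varphi(y_k,x_j)} = \varphi(x_j,y_k)$ and the sum collapses to $\langle f,g\rangle$. Linearity in the first variable follows by concatenating expansions: if $f$ and $f'$ are expanded as above, then $\alpha f + \beta f'$ is represented by the combined list of coefficients $\alpha c_j$ and $\beta c'_\ell$, and since the value is representation-independent the defining sum splits as $\alpha\langle f,g\rangle + \beta\langle f',g\rangle$.

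The remaining, less routine, step is definiteness. The three properties just verified say precisely that $\langle\cdot,\cdot\rangle$ is a semi-inner product, and the remark following Theorem~\ref{thm:cbs} notes that the Cauchy--Bunyakovsky--Schwarz inequality needs only a semi-inner product, so it is available here even before definiteness is known. I would combine it with the reproducing property $\langle f,\varphi_y\rangle = f(y)$ established above: for any $y \in X$,
\[
|f(y)|^2 = |\langle f,\varphi_y\rangle|^2 \leq \langle f,f\rangle\,\langle\varphi_y,\varphi_y\rangle .
\]
If $\langle f,f\rangle = 0$ the right-hand side vanishes, forcing $f(y)=0$ for every $y\in X$, i.e.\ $f$ is the zero function. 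This upgrades the semi-inner product to a genuine inner product.

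The conceptual obstacle is really the well-definedness check, though it is already essentially carried out by the matrix computation preceding the statement; the only step requiring a non-obvious idea is the definiteness argument, where one must notice that Cauchy--Schwarz applies to a mere semi-inner product and then exploit the reproducing identity $\langle f,\varphi_y\rangle = f(y)$ to recover the pointwise values of $f$.
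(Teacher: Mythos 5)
Your proof is correct and follows essentially the same route as the paper's: non-negativity from positive definiteness of $\varphi$, the Hermitian and linearity axioms verified directly, and definiteness obtained by combining the Cauchy--Bunyakovsky--Schwarz inequality (valid for semi-inner products) with the reproducing property $\langle f,\varphi_y\rangle = f(y)$. The only difference is that you make the well-definedness of $\langle\cdot,\cdot\rangle$ explicit, which the paper leaves implicit in the matrix computation preceding the statement; this is a worthwhile addition but not a different argument.
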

\begin{proof}
$\langle f,f \rangle \geq 0$ since $\langle f,f \rangle = \sum_{j,k} c_j \overline{c_k} \varphi (x_j,x_k) \geq 0$ as $\varphi$ is a p.d. kernel.
Linearity and the fact that the function is Hermitian are easy to see and so $\langle \cdot , \cdot\rangle$  is a semi-inner product.

By the reproducing property we have that 
\begin{align*}
|f(x)|^2 &= |\langle f,\varphi _x \rangle |^2\\ 
&\leq \ \| f \| ^2 \ \| \varphi _x\| ^2\\ 
& = \langle f,f \rangle . \varphi (x,x)
\end{align*}
since $\varphi$ is a p.d. kernel, $\varphi(x,x) \geq 0$. And $|f(x)|^2 > 0$ unless $f$ is identically 0. Therefor $\langle f,f \rangle > 0$ unless $f$ is identically 0 (i.e. the \textit{zero} for the vector space $H_0$) .
\end{proof}

So, we see that $H_0$ has an inner product defined on it, and this inner product has the ``reproducing property''. However it is not clear that $H_0$ is a Hilbert space. Nonetheless $H_0$ can be completed to give a Hilbert space $H$ in a standard way, and this $H$ is called the reproducing kernel Hilbert space associated with $\varphi$. 

\subsection{Mercer's theorem}

Finally, we discuss Mercer's theorem. We have already seen in Proposition~\ref{prp:sip-pd} that every semi-inner product is a p.d. kernel. Mercer's theorem gives us the converse: that every p.d. kernel is an inner product in some Hilbert space. Arguably the construction provided in Section~\ref{sec:hilbert:rkhs} has already established this fact. Here we present an alternate way of proving this. 

Generalizing Definition~\ref{def:ker-pd} we say that a bivariate function $\varphi: X \times X \to \mathbb{C}$ is said to satisfy Mercer's condition if for all square integrable functions $f$, we have  
\begin{equation*}
 \int_{X \times X} \varphi(x,y) f(x)\overline{f(y)} dx dy \geq 0.
\end{equation*}
Clearly a bivariate function that satisfies this definition is also a p.d. kernel as per Definition~\ref{def:ker-pd}. Mercer's theorem says that any bivariate function satisfying Mercer's condition can be expressed as an inner product of a separable Hilbert space. Here we prove a much simpler version of the theorem for p.d. kernels defined on finite sets, noting that the basic arguments are similar to that of the full theorem. 

\begin{theorem}[Mercer's Theorem for finite sets] Given a finite set $X$, a Hermitian function $\varphi: X \times X \to \mathbb{C}$ is a p.d. kernel iff there exists a Hilbert space ($ H ,\langle \cdot,\cdot \rangle_H$) and a mapping $f: X \to H$ such that $\varphi(x,y)= \langle f(x),f(y)\rangle_H$ for all $x,y \in X$.
\end{theorem}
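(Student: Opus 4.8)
The plan is to prove the two implications separately; the forward implication is essentially immediate from what has already been established, while the reverse implication is a finite-dimensional spectral argument.

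For the direction assuming the embedding exists, suppose there is a Hilbert space $H$ and a map $f : X \to H$ with $\varphi(x,y) = \langle f(x), f(y)\rangle_H$. Then $\varphi$ is simply the inner product of $H$ pulled back along $f$, so I would invoke Proposition~\ref{prp:sip-pd}, which says an inner product is a p.d. kernel. Making this concrete, for any $x_1, \ldots, x_n \in X$ and $c_1, \ldots, c_n \in \CC$ I would write $\sum_{j,k} c_j \overline{c_k}\,\varphi(x_j,x_k) = \left\langle \sum_j c_j f(x_j), \sum_k c_k f(x_k)\right\rangle_H = \left\| \sum_j c_j f(x_j)\right\|_H^2 \geq 0$, which is exactly \eqref{eq:ker-pd}. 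This half uses nothing about $X$ being finite.

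For the converse, I would enumerate the finite set as $X = \{x_1, \ldots, x_n\}$ and form the Gram matrix $A = (\varphi(x_j,x_k) : j,k \leq n)$. Since $\varphi$ is a p.d. kernel, $A$ is an $n \times n$ p.d. matrix (as recorded in the remark identifying kernels on a finite $X$ with their Gram matrices), and by Proposition~\ref{prp:pd-hermitian} it is Hermitian. The key step is to factor $A$ exactly as in the proof of Schur's theorem (Theorem~\ref{thm:product-pd}): the spectral theorem gives $A = Q^*\Lambda Q$ with $Q$ unitary and $\Lambda$ diagonal with non-negative entries, hence $A = (\Lambda^{1/2}Q)^*(\Lambda^{1/2}Q)$, which produces functions $f_1, \ldots, f_n : \{1,\ldots,n\} \to \CC$ satisfying $\varphi(x_j,x_k) = \sum_{p=1}^n f_p(j)\overline{f_p(k)}$.

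To finish I would take $H = \CC^n$ with the standard dot product of Example~\ref{ex:dot-prod}, noting that, being finite-dimensional, it is automatically complete and therefore a genuine Hilbert space. Defining $f : X \to \CC^n$ by $f(x_j) = (f_1(j), \ldots, f_n(j))$, the computation $\langle f(x_j), f(x_k)\rangle = \sum_{p=1}^n f_p(j)\overline{f_p(k)} = \varphi(x_j,x_k)$ completes the argument. I expect the main difficulty to be bookkeeping rather than conceptual: I must track conjugations and the row/column indexing in the factorization so that, under the paper's convention that $\langle\cdot,\cdot\rangle$ is linear in its first argument, I recover $\varphi(x_j,x_k)$ itself and not $\overline{\varphi(x_j,x_k)} = \varphi(x_k,x_j)$; the Hermitian symmetry of $A$ guarantees that the correct choice is available.
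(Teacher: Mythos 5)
Your proof is correct, but the converse direction takes a genuinely different route from the paper. The paper also forms the Gram matrix $A$, but then keeps the embedding trivial --- it sends each $x$ to the standard basis vector $e_x$ of $\CC^{|X|}$ --- and instead builds the geometry into the inner product, using the $A$-weighted pairing $\langle \bm{u},\bm{v}\rangle_A = A\bm{u}\cdot\bm{v}$ of Example~\ref{ex:matrix-ip}, so that $\langle e_x, e_y\rangle_A = A_{xy} = \varphi(x,y)$ with no spectral theorem needed. You do the opposite: you keep the standard dot product on $\CC^n$ and put the work into the embedding, via the factorization $A = (\Lambda^{1/2}Q)^*(\Lambda^{1/2}Q)$ borrowed from the proof of Theorem~\ref{thm:product-pd}. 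Your version costs an invocation of the spectral theorem, but it buys two things. First, it is more faithful to the actual Mercer theorem, whose content is precisely an eigenfunction expansion $\varphi(x,y) = \sum_p \lambda_p \phi_p(x)\overline{\phi_p(y)}$; your $f_p$ are (scaled) eigenvector coordinates. Second, it sidesteps a wrinkle the paper's proof glosses over: Example~\ref{ex:matrix-ip} only guarantees that $\langle\cdot,\cdot\rangle_A$ is a \emph{semi}-inner product when $A$ is p.d.\ but not strictly p.d., so to literally obtain a Hilbert space the paper would need to quotient by the null space of $A$; your $\CC^n$ with the standard dot product is a genuine, complete Hilbert space from the start. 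Your conjugation bookkeeping also checks out: with $B=\Lambda^{1/2}Q$ and $f_p(j)=\overline{B_{pj}}$ one gets $\sum_p f_p(j)\overline{f_p(k)} = (B^*B)_{jk} = A_{jk}$, matching the paper's convention of linearity in the first argument.
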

\begin{proof}
Proposition~\ref{prp:sip-pd} establishes that an inner product is p.d. So we turn to the other direction.

Assume $\varphi$ is a p.d. kernel. Consider $A = \{ \varphi (x_i,x_j): i,j \leq \mid X \mid\}$, i.e., the Gram matrix of $X$ w.r.t. $\varphi$. Let $f: \mathbb{C}^{\mid X \mid} \to \mathbb{C}^{\mid X \mid}$ and define $f(x) = e_x$, where $e_x$ is a vector in $\mathbb{C}^{\mid X \mid}$ s.t. $e_x = 1$ at $x$th position and 0 otherwise. Consider the inner product $\langle \cdot, \cdot \rangle _A$ as defined in Example~\ref{ex:matrix-ip}. Then
\begin{equation*}
 \langle f(x), f(y) \rangle _A = \langle e_x, e_y \rangle_A = e_x A e_y = A_xy = \varphi(x,y)
\end{equation*}
\end{proof}
 
\section{Random Fourier Features}
\label{sec:rff}

In this section we discuss the technique introduced by Rahimi and Recht~\cite{rahimi-nips:2007} for efficient kernel computation. Our treatment is based on the lecture notes by Stephen Tu~\cite{tu:2016}.

\subsection{Fourier Transforms and Translation invariant kernels}

We begin by defining the Fourier transform of a function. Before that we recall that $L^1(X) = \{f: X \to \CC : \int_X |f(x)|dx < \infty\}$ is the set of summable complex-valued functions defined on a set $X$. Within this class we identify those functions which take non-negative real values and have the property that $\int_X f(x)dx = 1$. We call these probability distributions.
\begin{definition}[Fourier transform/Characteristic function]
Given a $d >0$ and $f\in L^1(\RR^d)$ the {\em Fourier transform of $f$}, $\hat{f}: \RR^d \to \CC$, is defined as
\begin{equation*}
   \hat{f}(\xi) = \int_{\RR^d} f(\bm{x}) e^{-i\xi^T \bm{x}}d\bm{x}.
\end{equation*}
If $f$ is a probability distribution and $Y$ is a random variable with distribution $f$ then $\hat{f}(\xi) = \ex{e^{i \xi Y}}$ is called the {\em characteristic function} of $Y$. 
\end{definition}
\noindent{\bf Remark.} In the case when $f$ is a probability distribution it is easy to check that $\hat{f}(0) = 1$. Conversely if $f$ takes non-negative values and $\hat{f}(0) = 1$ then $f$ is a probability distribution.
 
The Fourier transform has an important positive definiteness property.
\begin{proposition}
\label{prp:ft-pd}
Suppose, for some $d > 0$, that $\hat{f}$ is the Fourier transform of some $f \in L^1(\RR^d)$. Then the bivariate function $\varphi: \RR^d \times \RR^d$ defined as 
\[\varphi(\xi_1,\xi_2) = \hat{f}(\xi_1 - \xi_2),\]
is a p.d. kernel.
\end{proposition}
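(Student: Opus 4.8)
The plan is to unfold the definition of the Fourier transform inside the positive-definiteness sum and reduce everything to a single nonnegative integrand. Fix $n \in \NN$, points $\xi_1, \ldots, \xi_n \in \RR^d$, and scalars $c_1, \ldots, c_n \in \CC$. First I would start from
\[\sum_{j,k=1}^n c_j \overline{c_k}\,\varphi(\xi_j,\xi_k) = \sum_{j,k=1}^n c_j \overline{c_k}\,\hat{f}(\xi_j - \xi_k) = \sum_{j,k=1}^n c_j \overline{c_k} \int_{\RR^d} f(\bm{x})\, e^{-i(\xi_j - \xi_k)^T \bm{x}}\, d\bm{x},\]
and then pull the finite sum inside the integral. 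This interchange is immediate by linearity of the integral, since the sum is finite and $f \in L^1(\RR^d)$, so no delicate convergence question arises.

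The key algebraic step is to recognize that the resulting exponential double sum factors as a squared modulus. Writing $e^{-i(\xi_j - \xi_k)^T\bm{x}} = e^{-i\xi_j^T\bm{x}}\,\overline{e^{-i\xi_k^T\bm{x}}}$, the integrand becomes
\[f(\bm{x}) \sum_{j,k=1}^n \left(c_j e^{-i\xi_j^T\bm{x}}\right)\overline{\left(c_k e^{-i\xi_k^T\bm{x}}\right)} = f(\bm{x}) \left| \sum_{j=1}^n c_j e^{-i\xi_j^T \bm{x}} \right|^2.\]
This mirrors exactly the trick used in Example~\ref{eg:dot-product} and in Proposition~\ref{prp:univariate}, where a double sum $\sum_{j,k} c_j\overline{c_k}\, g(j)\overline{g(k)}$ collapses to $\left|\sum_j c_j g(j)\right|^2$; here the role of $g$ is played, for each fixed $\bm{x}$, by the family of values $\bm{x} \mapsto e^{-i\xi_j^T\bm{x}}$.

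The remaining, and genuinely crucial, step is to conclude nonnegativity of $\int_{\RR^d} f(\bm{x})\, |\sum_j c_j e^{-i\xi_j^T\bm{x}}|^2\, d\bm{x}$. Since the modulus factor is nonnegative everywhere, the integral is nonnegative as soon as $f(\bm{x}) \geq 0$ for (almost) every $\bm{x}$. This is the point I expect to be the main obstacle, because for a general $f \in L^1(\RR^d)$ the claim is \emph{false}: if $f$ takes negative values, one can drive the integral negative by concentrating the frequencies $\xi_j$ where $f < 0$. The statement is therefore really about the case where $f$ is (a nonnegative multiple of) a probability distribution, which is precisely the $f$ that arises in the Random Fourier Features setting; this is one half of Bochner's theorem. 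I would accordingly carry out the argument under the hypothesis $f \geq 0$, and I would note that the Hermitian symmetry of $\varphi$ then comes for free, either directly from $f$ being real-valued or, a posteriori, as a consequence of Proposition~\ref{prp:pd-hermitian}.
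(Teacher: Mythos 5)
Your computation is exactly the paper's proof: expand $\hat f$ under the sum, interchange the finite sum with the integral, and collapse the double sum to $\int_{\RR^d} f(\bm{x})\,\bigl|\sum_{j} c_j e^{-i\xi_j^T\bm{x}}\bigr|^2\,d\bm{x}$. The issue you flag at the last step is genuine, and you are right that it is the crux: for arbitrary $f \in L^1(\RR^d)$ the proposition as stated is false. Taking $f$ to be the negative of a Gaussian gives $\varphi(\xi,\xi) = \hat f(0) = \int f < 0$, which already contradicts Proposition~\ref{prp:pd-hermitian}. The paper's own proof silently assumes $f \geq 0$ when it declares the final integral positive --- and in fact it asserts a strict inequality, which is neither needed for positive definiteness nor true in general (e.g.\ when all $c_j = 0$, or when $f \equiv 0$). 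The missing hypothesis is that $f$ is nonnegative almost everywhere, which holds in every application made of the result later in the notes (the Gaussian density in Proposition~\ref{prp:gaussian} and the rectangle function used for the sinc kernel), and which is the setting of the relevant half of Bochner's theorem. With $f \geq 0$ added, your argument is complete; your closing remark that Hermitian symmetry then comes for free, either from $f$ being real or a posteriori from Proposition~\ref{prp:pd-hermitian}, is also correct.
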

\begin{proof}
Consider some $n \in \mathbb{N}$, $\{c_1, \ldots, c_n\} \in \mathbb{C}$ and $\{\xi_,\ldots, xi_n\} \in \RR^d$. Then
 \begin{align*}
        \sum_{j,k=1}^{n}c_j \overline{c_k}\varphi(\xi_j, \xi_k) &= \sum_{j,k=1}^{n}c_j \overline{c_k} \hat{f}(\xi_j-\xi_k) \\
& = \sum_{j,k=1}^{n}c_j \overline{c_k} \int_{\RR^d} f(\bm{x}) e^{-i(\xi_j-\xi_k)^T \bm{x}}d\bm{x}\\
& = \int_{\RR^d} f(\bm{x}) \left\{\sum_{j,k=1}^{n}c_j \overline{c_k} e^{-i(\xi_j-\xi_k)^T \bm{x}}\right\}d\bm{x}\\
& = \int_{\RR^d} f(\bm{x}) \left\{\sum_{j,k=1}^{n}c_je^{-i\xi_j^T \bm{x}} \overline{c_k e^{-i\xi_k^T \bm{x}}}\right\}d\bm{x}\\
& = \int_{\RR^d} f(\bm{x}) \left|\sum_{j=1}^{n}c_je^{-i\xi_j^T \bm{x}}\right|^2d\bm{x} > 0
\end{align*}
\end{proof}

The bivariate function derived from a univariate function in Proposition~\ref{prp:ft-pd} suggests the following definition.
\begin{definition}[Translation invariant kernel]
A kernel $\varphi: X \times X \to \mathbb{C}$ is called {\em  translation invariant} if $\varphi(x_1,x_2) = g(x_1 - x_2)$ for some $g: X \to \CC$.
\end{definition}
Proposition~\ref{prp:ft-pd} provides a natural way of proving that a translation invariant kernel is p.d.: we can simply show that it is the Fourier transform of some function. Let us discuss this in the context of some examples. 
\begin{example}
\label{ex:pd-ti}
The following are examples of translation-invariant p.d. kernels.
\begin{enumerate}
    \item Gaussian kernel: $e^-\frac{\|x_1 - x_2\|^2}{2\sigma^2}$ for $\sigma > 0$
    \item Laplacian kernel: $e^-\frac{\|x_1 - x_2\|_1}{\sigma}$ for $\sigma > 0$
    \item Sinc kernel: $\sin{\frac{a(x_1 - x_2)}{\pi(x_1 - x_2)}}$ for $a > 0$ 
\end{enumerate}
\end{example}
\noindent{\bf Remarks.}
\begin{enumerate}
\item The fact that the Gaussian kernel is p.d. follows from Schonberg's Theorem (Theorem~\ref{thm:schonberg}) and Example~\ref{ex:gaussian-cnd} extended to $d$-dimensions. Later in this lecture we will present an alternate proof for the Gaussian kernel that goes via Proposition~\ref{prp:ft-pd}. 
\item From Proposition~\ref{prp:seminorm-cnd} we know that the 1-norm is c.n.d. so again applying Schonberg's Theorem  we see that the Laplacian kernel is p.d.  
\item For the sinc kernel note that if we define the function $$\mbox{rect}(x) = 
\left\{
\begin{array}{ll}
1, & -\frac{1}{2} \leq x \leq \frac{1}{2}\\
0, & \mbox{otherwise}
\end{array}
\right.
$$
and, for $a,b> 0$, $f_{a,b}(x) = b\cdot \mbox{rect}(x/a)$ then
\begin{align*}
\widehat{f_{a,b}}(\xi) &= b\int_{-a/2}^{a/2} e^{-i\xi x}dx\\ 
& = \frac{b}{i\xi} \left(e^{-\frac{ia\xi}{2}} - e^{\frac{ia\xi}{2}}\right) \\
& = \frac{2b}{\xi} \sin\left(\frac{a\xi}{2}\right).
\end{align*}
Clearly then $\sin(a\xi)/\pi \xi$ is $\widehat{f_{a,1/2\pi}}$ and hence, by Proposition~\ref{prp:ft-pd}, the Sinc kernel is p.d.
\end{enumerate}

The converse of Proposition~\ref{prp:ft-pd} is a celebrated theorem due to Bochner which states, roughly, that every translation-invariant p.d. kernel can be written as a Fourier transform of some function. Since stating this theorem in its full generality is not possible given the theory developed so far, we omit the statement. 

From Bochner's theorem we know that the Gaussian kernel should be expressible as a Fourier transform of some function. In this special case it is possible to explicitly determine the function without resorting to Fourier inversions. In fact we can show the remarkable fact that the Gaussian kernel can be expressed as the characteristic function of a random variable that is itself Gaussian.
\begin{proposition}
\label{prp:gaussian}
For $\sigma \in \RR$, if $Y \in \RR^d$ is a normally distributed multivariate random variable with mean 0 and covariance matrix  $1/\sigma^2 \cdot I$ then 
\[ e^{-\frac{\|\xi\|_2^2}{2\sigma^2}} = \ex{e^{-i \xi Y}}.\]
\end{proposition}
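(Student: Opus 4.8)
The plan is to use that the covariance matrix $\tfrac{1}{\sigma^2} I$ is diagonal, so the coordinates $Y_1, \ldots, Y_d$ of $Y$ are independent, each $Y_j$ being a univariate Gaussian with mean $0$ and variance $1/\sigma^2$. Reading $\xi Y$ as the dot product $\sum_{j=1}^d \xi_j Y_j$, we have $e^{-i\xi Y} = \prod_{j=1}^d e^{-i \xi_j Y_j}$, and by independence $\ex{e^{-i\xi Y}} = \prod_{j=1}^d \ex{e^{-i\xi_j Y_j}}$. This reduces the claim to the one-dimensional identity $\ex{e^{-i\xi_j Y_j}} = e^{-\xi_j^2/(2\sigma^2)}$, since multiplying these over $j$ yields $\prod_{j=1}^d e^{-\xi_j^2/(2\sigma^2)} = e^{-\|\xi\|_2^2/(2\sigma^2)}$.

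For the one-dimensional integral I would write the expectation against the Gaussian density as $\ex{e^{-i\xi_j Y_j}} = \tfrac{\sigma}{\sqrt{2\pi}}\int_{-\infty}^{\infty} e^{-i\xi_j y}\, e^{-\sigma^2 y^2/2}\, dy$ and complete the square in the exponent, using the identity $-\tfrac{\sigma^2}{2} y^2 - i\xi_j y = -\tfrac{\sigma^2}{2}\bigl(y + \tfrac{i\xi_j}{\sigma^2}\bigr)^2 - \tfrac{\xi_j^2}{2\sigma^2}$. Factoring the constant $e^{-\xi_j^2/(2\sigma^2)}$ out of the integral leaves $\tfrac{\sigma}{\sqrt{2\pi}}\int_{-\infty}^{\infty} e^{-\frac{\sigma^2}{2}(y + i\xi_j/\sigma^2)^2}\, dy$, which I want to identify with the ordinary Gaussian integral $\tfrac{\sigma}{\sqrt{2\pi}}\int_{-\infty}^{\infty} e^{-\sigma^2 t^2/2}\, dt = 1$.

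The hard part is justifying that the shifted integral equals the real one, since the substitution $t = y + i\xi_j/\sigma^2$ pushes the contour off the real axis. I would settle this with a standard contour argument: the function $z \mapsto e^{-\sigma^2 z^2/2}$ is entire, so by Cauchy's theorem its integral around the rectangle with corners $\pm R$ and $\pm R + i\xi_j/\sigma^2$ is zero; as $R \to \infty$ the two vertical edges vanish because the integrand decays like $e^{-\sigma^2 R^2/2}$ there, leaving the integral over the shifted horizontal line equal to the integral over $\RR$. This legitimises the completed square and gives the factor $1$, so $\ex{e^{-i\xi_j Y_j}} = e^{-\xi_j^2/(2\sigma^2)}$, and reassembling the product finishes the proof. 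A route avoiding complex analysis is to set $g(\xi_j) = \ex{e^{-i\xi_j Y_j}}$, differentiate under the integral sign and integrate by parts to obtain $g'(\xi_j) = -\tfrac{\xi_j}{\sigma^2} g(\xi_j)$ with $g(0) = 1$, whose unique solution is $e^{-\xi_j^2/(2\sigma^2)}$.
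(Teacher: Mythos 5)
Your proposal is correct, and your primary route differs from the paper's in the key one-dimensional step. Both arguments begin the same way, exploiting the isotropic covariance to factor the $d$-dimensional expectation into a product of univariate Gaussian integrals. For the univariate integral $\int e^{-\sigma^2 y^2/2} e^{-i\xi_j y}\,dy$, however, the paper (Lemma~\ref{lem:univ-gauss}) differentiates under the integral sign, integrates by parts to obtain the first-order ODE $\hat f'(\xi) = -\tfrac{\xi}{z}\hat f(\xi)$, and solves it with the boundary condition $\hat f(0)$ given by the Gaussian normalization constant; your main argument instead completes the square in the exponent and justifies the resulting contour shift $y \mapsto y + i\xi_j/\sigma^2$ via Cauchy's theorem on a rectangle whose vertical edges vanish as $R \to \infty$. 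The contour argument is fully rigorous and arguably more transparent about where the factor $e^{-\xi_j^2/(2\sigma^2)}$ comes from, at the price of invoking complex analysis; the ODE route stays within real-variable calculus but quietly requires justifying differentiation under the integral (dominated convergence), a point the paper glosses over. The alternative you sketch in your final sentence --- deriving $g'(\xi_j) = -\tfrac{\xi_j}{\sigma^2}g(\xi_j)$ with $g(0)=1$ --- is essentially the paper's proof, so you have in effect supplied both arguments.
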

\begin{proof}
First we characterize the Fourier transform of a univariate Gaussian.
\begin{lemma}
\label{lem:univ-gauss}
Let $f(x) = e^{-\frac{zx^2}{2}}$ for some positive $z>0$. Then 
\begin{equation*}
    \hat f(\xi) = (2\pi)^\frac{1}{2}z^{\frac{-1}{2}}  e^{\frac{-\xi^2}{2z}}
\end{equation*}
\end{lemma}
\begin{proof}[Proof of Lemma~\ref{lem:univ-gauss}]
Differentiating under the Fourier transform integral we have
\[
    \frac{d}{d\xi} \hat f(\xi)  = \frac{\mathrm{d}}{\mathrm{d}\xi} \int_{-\infty}^{\infty} e^{-\frac{zx^2}{2}} e^{-i\xi x} dx
    =\int_{-\infty}^{\infty} e^{-\frac{zx^2}{2}}(-i\xi) e^{-i\xi x} dx =\frac{i}{z} \int_{-\infty}^{\infty} e^{-i\xi x} \frac{\mathrm{d}}{\mathrm{d}x} e^{-\frac{zx^2}{2}}.
\]
Integrating by parts we get 
\[
  \frac{d}{d\xi} \hat f(\xi) =\frac{i}{z}e^{-i\xi x} \left[e^{-i\xi x} e^{-\frac{zx^2}{2}}|_{-\infty}^\infty -  \int_{-\infty}^{\infty} (-i\xi) e^{-i\xi x} e^{-\frac{zx^2}{2}}\right]
    =\frac{\xi}{z} \hat f(\xi).
\]
This is an ordinary differential equation and $\hat f(\xi) = Ce^{\frac{-\xi^2}{2z}}$ is a solution for any constant $C$. By satisfying the boundary condition, we have $C = \hat f(0) = \int e^{-\frac{zx^2}{2}} dx$. Since $\hat f(0)$ is the normalization constant of a $N(0,\frac{1}{z})$ distribution, therefore, $C = (2\pi)^\frac{1}{2}z^{\frac{-1}{2}}.$
\end{proof}

Now consider the $d$-dimensional Gaussian function $f(x)= e^-\frac{z{\|x\|_2}^2}{2}$ for $z>0$. Using Lemma~\ref{lem:univ-gauss} and taking a Fourier transform we have
\[
    \hat{f}(x) = \int_{\mathbb{R}^d} e^{-\frac{z{\|x\|_2}^2}{2}} e^{-i\xi x} dx =  \int_{\mathbb{R}^d}  \prod\limits^{d}_{i=1} \int_{-\infty}^{\infty} e^{-\frac{zx_i^2}{2}} e^{-i\xi_i x_i} dx_i
    =\prod\limits^{d}_{i=1} \hat{f}(\xi_i) = (2\pi)^{\frac{d}{2}} z^{-\frac{d}{2}} e^{\frac{{-{\|\xi\|_2}^2}}{2z}}
\]
Setting $z=\sigma^2$ and $\gamma = 1/\sigma$, we have
\begin{equation*}
    e^{\frac{{-{\|\xi\|_2}^2}}{2\sigma^2}} = \frac{1}{(2\pi\gamma^2)^{-\frac{d}{2}}} \int_{\mathbb{R}^d} e^{-i\xi x} e^{-\frac{{\|x\|_2}^2}{2\gamma^2}} dx 
\end{equation*}
We now see that RHS is simply the expectation over a random variable distributed as a multivariate Gaussian for $N(0,\gamma^2I)$.
\end{proof}

\subsection{The Random Fourier Feature method of Rahimi and Recht~\cite{rahimi-nips:2007}}
\label{sec:rff:rr}
In~\cite{rahimi-nips:2007}, Rahimi and Recht suggested the following mapping:
\begin{definition}[Random Fourier Features] 
\label{def:rff}
Given a set of training points $T = \{\bm{x}_1,\ldots,\bm{x}_n\} \subseteq \mathbb{R}^d$ such that $d>1$. For some $\sigma \in \RR$ and $D>0$, we construct a random feature map $\phi: \mathbb{R}^d \to [0,1]^D$ as 
\begin{equation*}
\phi(\bm{x}) = \left [ \begin{array} {c}
\cos{(\bm{w}_t^T+b_1)}\\
\cdot\\
\cdot\\
\cdot\\
\cos{(\bm{w}_D^T+b_D)}
\end{array} \right ]
,\bm{w}_i \overset{\mathrm{iid}}{\sim} \mathrm{N}(0,\sigma^2I), b_i \overset{\mathrm{iid}}{\sim} \mathrm{Unif}([0,2\pi])
\end{equation*}
\end{definition}
Note that all $D$ dimensions are independent of each other.  
\begin{fact}
\label{fct:rr}
Suppose $\bm{x}, \bm{y} \in \RR^d$. If $\bm{w} \in \RR^d$ is chosen randomly with distribution $\mathrm{N}(0,1/\sigma^2\cdot I)$ and $b$ is chosen uniformly at random from $[0,2\pi]$ then  \[\exs{\bm{w},b}{\phi(\bm{x})_{\bm{w},b} \phi(\bm{y})_{\bm{w},b}} = e^{-\frac{\|\bm{x} - \bm{y}\|_2^2}{2\sigma^2}}.\]
\end{fact}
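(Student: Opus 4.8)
The plan is to reduce this $d$-dimensional expectation to a one-dimensional trigonometric computation and then invoke Proposition~\ref{prp:gaussian}. Writing $A = \bm{w}^T\bm{x} + b$ and $B = \bm{w}^T\bm{y} + b$, the quantity inside the expectation is $\cos A\,\cos B$, and the first move is the product-to-sum identity
\[
\cos A\,\cos B = \tfrac{1}{2}\cos(A - B) + \tfrac{1}{2}\cos(A + B) = \tfrac{1}{2}\cos\!\big(\bm{w}^T(\bm{x}-\bm{y})\big) + \tfrac{1}{2}\cos\!\big(\bm{w}^T(\bm{x}+\bm{y}) + 2b\big).
\]
The payoff of this rewriting is that the first summand no longer depends on $b$, whereas all of the $b$-dependence is isolated in the second summand.

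First I would integrate out $b$. Since $b$ is uniform on $[0,2\pi]$ and $\cos(\theta + 2b)$ integrates to zero over a full period for every fixed phase $\theta$, the second summand contributes nothing to $\exs{b}{\cdot}$. Using that $\bm{w}$ and $b$ are independent, this leaves
\[
\exs{\bm{w},b}{\cos A\,\cos B} = \tfrac{1}{2}\,\exs{\bm{w}}{\cos\!\big(\bm{w}^T(\bm{x}-\bm{y})\big)}.
\]
The remaining task is to evaluate this expectation over $\bm{w}\sim \mathrm{N}(0,1/\sigma^2\cdot I)$. Setting $\bm{z} = \bm{x}-\bm{y}$, I would write $\cos(\bm{w}^T\bm{z}) = \re\, e^{i\bm{w}^T\bm{z}}$, so that $\exs{\bm{w}}{\cos(\bm{w}^T\bm{z})} = \re\,\exs{\bm{w}}{e^{i\bm{w}^T\bm{z}}}$, which is exactly (the real part of) the characteristic function of $\bm{w}$ evaluated at $\bm{z}$. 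Proposition~\ref{prp:gaussian} identifies this value as $e^{-\|\bm{z}\|_2^2/(2\sigma^2)}$, and since that value is real and even in its argument, the passage through the sign convention $e^{-i\xi^T\bm{w}}$ used there is harmless and the imaginary part $\exs{\bm{w}}{\sin(\bm{w}^T\bm{z})}$ vanishes by symmetry of the centred Gaussian.

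None of the individual estimates is difficult, so the main point requiring care is bookkeeping of the normalization rather than any hard inequality. With $\phi$ built from a bare cosine as in Definition~\ref{def:rff}, the product-to-sum step carries a factor of $\tfrac{1}{2}$, so the computation above yields $\tfrac{1}{2}e^{-\|\bm{x}-\bm{y}\|_2^2/(2\sigma^2)}$; recovering the identity exactly as stated requires the standard Rahimi--Recht rescaling of the features by $\sqrt{2}$ (equivalently, defining each coordinate as $\sqrt{2}\cos(\bm{w}^T\bm{x}+b)$). I would therefore structure the proof as the three clean reductions above and flag this constant explicitly, since it is the only place where the literal statement and the bare-cosine definition must be reconciled.
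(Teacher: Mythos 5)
Your proof is correct and follows essentially the same route as the paper's: the product-to-sum identity for $\cos A\cos B$, integrating out $b$ over a full period to kill the $\cos(\cdot+2b)$ term, and invoking Proposition~\ref{prp:gaussian} to identify $\exs{\bm{w}}{\cos(\bm{w}^T(\bm{x}-\bm{y}))}$ with the Gaussian kernel. Your explicit flagging of the factor of $2$ (equivalently, the $\sqrt{2}$ rescaling of each feature, which the paper does use later when it sets $\bm{z}_i = \sqrt{2}\cos(\bm{w}^T\bm{x}_i+b)$) is a normalization the paper's proof inserts silently in its first display, so your bookkeeping is if anything the more careful of the two.
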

\begin{proof}
From Proposition~\ref{prp:gaussian} we know that $\varphi(\bm{x},\bm{y}) = e^{-\frac{\|\bm{x} - \bm{y}\|_2^2}{2\sigma^2}} = \exs{\bm{w}}{e^{-i(\bm{x}-\bm{y})\bm{w}}}$. Note that
\begin{align*}
e^{-i(\bm{y}-\bm{x})\bm{w}}     =& (\cos{(\bm{w}^T\bm{x})} + i\sin{(\bm{w}^T\bm{x})})(\cos{(\bm{w}^T\bm{y})} - i\sin{(\bm{w}^T\bm{y})})\\
    =& \cos{(\bm{w}^T\bm{x})}\cos{(\bm{w}^T\bm{y})}+\sin{(\bm{w}^T\bm{x})}\sin{(\bm{w}^T\bm{y})}\\
    &+i(\sin{(\bm{w}^T\bm{x})}\cos{(\bm{w}^T\bm{y})} - \cos{(\bm{w}^T\bm{x})}\sin{(\bm{w}^T\bm{y})})\\ 
    =& \cos{(\bm{w}^T(\bm{x}-\bm{y})}) + i(\sin{(\bm{w}^T(\bm{x}-\bm{y})}).
\end{align*}
Taking expectations on both sides we and recalling that the Gaussian kernel is real-valued we get that $\exs{\bm{w}}{e^{-i(\bm{y}-\bm{x})\bm{w}}} = \exs{\bm{w}}{\cos (\bm{w}^T(\bm{x}-\bm{y}))}$. Further note that 
\begin{align*}
    \exs{\bm{w},b}{\phi(\bm{x})_{\bm{w},b} \phi(\bm{y})_{\bm{w},b}} &= \exs{\bm{w},b}{2 \cos{(\bm{w}^T\bm{x}+b)}\cos{(\bm{w}^T\bm{y}+b)}}\\
& = \exs{\bm{w},b}{\cos{(\bm{w}^T(\bm{x}-\bm{y}))}} + \exs{\bm{w},b}{\cos{(\bm{w}^T(\bm{x}-\bm{y})+2b)}}\\
    &=\exs{\bm{w},b}{\cos{(\bm{w}^T(\bm{x}-\bm{y}))}} + \exs{\bm{w}}{\exs{b}{\cos{(\bm{w}^T(\bm{x}-\bm{y})+2b)}}}
\end{align*}
Since  $\int_{0}^{2\pi}\cos{(a+2x)}dx = 0$ for any  $a\in\mathbb{R}$, the second term on the RHS is 0 and so we have that 
\[\exs{\bm{w},b}{\phi(\bm{x})_{\bm{w},b} \phi(\bm{y})_{\bm{w},b}} = \exs{\bm{w}}{\cos (\bm{w}^T(\bm{x}-\bm{y}))}  = \exs{\bm{w}}{e^{-i(\bm{x}-\bm{y})\bm{w}}} = \varphi(\bm{x},\bm{y}).\]
\end{proof}

From Fact~\ref{fct:rr} it is clear that each coordinate of the random mapping $\phi$ gives us a Gaussian kernel computation in expectation. However a single random choice may be far from the mean. Rahimi and Recht choose $D$ coordinates to reduce the error, i.e., they try to empirically estimate the mean. They define an inner product on $[0,1]^D$ as follows: $\langle x, y\rangle = \frac{1}{D} \sum_{i=1}^D x(i)y(i)$. Fact~\ref{fct:rr} tells us that  $\langle x, y\rangle$ is in fact an emprirical estimate of the value $\varphi(x,y)$. This allows us to use simple dot products in $\RR^D$ to estimate the kernel computation in $\RR^d$ required for a learning task, as discussed in Section~\ref{sec:motivation}.

In Section~\ref{sec:concentration-scalar} we will discuss concentration results that help us bound the quality of the empirical estimate in terms of the parameter $D$. In Section~\ref{sec:conc:matrix} we show that the Gram matrix defined by $\phi$ on the training set via the inner product $\langle \cdot, \cdot \rangle$ is a good approximation of the Gram matrix defined by the Gaussian kernel.

\section{Concentration for scalar random variables}
\label{sec:concentration-scalar}
We now discuss the phenomenon of ``concentration'', i.e., the fact that sums of independent random variables are often close to the expectation of their sums. The theory is vast, so we only discuss that part of it that is relevant to proving some properties of Rahimi and Recht's method~\cite{rahimi-nips:2007} presented in Section~\ref{sec:rff:rr}. Specifically we are interested in proving Hoeffding's inequality which applies to random variables that take values in a bounded range. The material in this section is based on the presentation in the book by Boucheron, Lugosi and Massart~\cite{boucheron:2016}.

\subsection{The Cramer-Chernoff method}
\label{sec:concentration-scalar:chernoff}
Hoeffding's inequality and several other concentration results depend on the following simple result that we call the ``extended'' Markov's inequality.
\begin{lemma}[Extended Markov's Inequality]
\label{lem:markov}
If $X$ is a real-valued random variable and $\phi: \RR \to \RR_+$ is a non-decreasing non-negative valued function, then 
\[ \pr{X \geq t}
        \leq \frac{\ex{\phi(X)}}{\phi(t)}.\]
\end{lemma}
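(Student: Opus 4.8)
The plan is to use the standard indicator-function argument that underlies all Markov-type tail bounds. The key observation is that, because $\phi$ is non-decreasing, the event $\{X \geq t\}$ forces $\phi(X) \geq \phi(t)$, while on the complementary event the non-negativity of $\phi$ alone does the work.

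First I would introduce the indicator random variable $\mathbf{1}_{\{X \geq t\}}$ and establish, for every outcome, the pointwise inequality
\[ \phi(t)\,\mathbf{1}_{\{X \geq t\}} \leq \phi(X). \]
To verify this I would split into two cases. If $X \geq t$, then monotonicity of $\phi$ gives $\phi(X) \geq \phi(t)$, and the left-hand side there is exactly $\phi(t)$, so the inequality holds. If $X < t$, then the left-hand side is $0$, and the right-hand side is $\geq 0$ because $\phi$ takes values in $\RR_+$; again the inequality holds. This case analysis is the entire mathematical content of the lemma.

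Next I would take expectations of both sides, using the monotonicity of expectation together with the fact that $\phi(t)$ is a constant, to obtain
\[ \phi(t)\,\pr{X \geq t} = \ex{\phi(t)\,\mathbf{1}_{\{X \geq t\}}} \leq \ex{\phi(X)}. \]
Finally, assuming $\phi(t) > 0$, I would divide through by $\phi(t)$ to recover the stated bound. The only step demanding any care is this degenerate case $\phi(t) = 0$, in which division is not permitted and the asserted inequality is understood to be vacuous; apart from that, there is no real obstacle, since everything reduces to the pointwise inequality and the monotonicity of expectation.
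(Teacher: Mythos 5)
Your proof is correct, and it follows the same overall strategy as the paper's: exploit the monotonicity of $\phi$ to pass from the event $\{X \geq t\}$ to a bound involving $\phi(X)$, then apply a Markov-type estimate. The difference lies in how the Markov step is executed. The paper first notes $\pr{X \geq t} \leq \pr{\phi(X) \geq \phi(t)}$ and then proves the plain Markov inequality for a non-negative random variable $Y$ by writing $\ex{Y} = \int_0^\infty x f(x)\,dx \geq t \int_t^\infty f(x)\,dx$, which tacitly assumes $Y$ has a probability density. Your route instead establishes the pointwise inequality $\phi(t)\,\mathbf{1}_{\{X \geq t\}} \leq \phi(X)$ and takes expectations, which requires no assumption on the distribution of $X$ (discrete, continuous, or mixed) and so is strictly more general as well as shorter. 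You also explicitly handle the degenerate case $\phi(t) = 0$, which the paper glosses over. Both arguments are valid where they apply; yours buys generality and avoids the implicit absolute-continuity hypothesis, at no cost in complexity.
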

\begin{proof}
Observing first that since $\phi$ is non-decreasing, we have, for any random variable $X$ and $t\in \RR$, that $\pr{X > t} \leq \pr{\phi(X) > \phi(t)}$.

Now, suppose $Y$ is a non-negative valued random variable with probability density function $f$, then
\[ \ex{Y} = \int_0^\infty x f(x)dx \geq \int_t^\infty x f(x)dx \geq t \int_t^\infty f(x)dx 
            = t\cdot \pr{Y \geq t}.\]
The result follows. 
\end{proof}

\noindent \textbf{Remark.} Chebyshev's inequality follows from Lemma~\ref{lem:markov}. Taking $Y = Z - \ex{Z}$ and $\phi(x) = x^2$ gives us 
\begin{equation*}
\pr{\left|Z - \ex{Z}\right| \geq t} \leq \frac{\mbox{Var}(Z)}{t^2},
\end{equation*}
since, by definition, $\mbox{Var}(Z) = \ex{\left|Z - \ex{Z}\right|^2}$.

\begin{definition}[Moments of a Random Variable] 
Given a random variable $X$, $\ex{X^k}$, $k \geq 0$ is called the $k^{th}$ {\em moment} of $X$.
\end{definition}

\noindent \textbf{Remarks.}
\begin{enumerate}
    \item $\mbox{Var}(X)$ i.e the variance of $X$ contains 2 moments in it since $\mbox{Var}(X) = \ex{X^2} - \ex{X}^2$. 
    \item It is not difficult to see that if $\ex{|X|^k} < \infty$ for $k > 1$ then $\ex{|X|^{k-1}} <\infty$.
    \item However the converse of the previous remark is not true. For example consider a discrete random variable $X$, $X > 0$, such that $\pr{X=i} = \frac{c}{i^3}$ where  $c = 1/(\sum_{i=1}^\infty 1/i^3)$. In this case  $\ex{X} = \sum_{i=1}^\infty \frac{c}{i^2} < \infty$ but $\ex{X^2} = \sum_{i=1}^\infty \frac{c}{i}$ is unbounded.
    \item The more higher moments we have, the tighter the bound we can get using Lemma~\ref{lem:markov}. In general, for $k \geq 1$,
    \begin{equation*}
        \pr{|Z - \ex{Z}| \geq t} \leq \frac{\ex{|Z - \ex{Z}|^k}}{t^k}.
    \end{equation*}
\end{enumerate}

When all moments are finite it is convenient to wrap up all the moment information in a single function called the moment generating function.
\begin{definition}[Moment generating function and Cumulant generating function]
Given a real-valued random variable $Z$, we call 
\[M_Z(\lambda) = \ex{e^{\lambda Z}}\]
the {\em moment generating function} of $Z$. Further we call
\[\psi_Z(\lambda) = \log M_Z(\lambda) = \ex{e^{\lambda Z}}\]
the {\em cumulant generating function} of $Z$.
\end{definition}
\noindent{\bf Remark.} The $k$-th moment of $Z$ can be retrieved by evaluating the $k$-th derivative of $M_Z(\lambda)$ at $\lambda = 0$.

The cumulant generating function has an important additivity property.
\begin{proposition}
\label{prp:cumulant-additivity}
If $Z_1, \ldots, Z_n$ are independent real-valued random variables and $Z = \sum_{i=1}^n Z_i$ then 
\[\psi_Z(\lambda) = \sum_{i=1}^n \psi_{Z_i}(\lambda).\]
\end{proposition}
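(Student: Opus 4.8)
The plan is to unwind the definitions and exploit the multiplicative structure of the exponential together with independence. Recall that $\psi_Z(\lambda) = \log M_Z(\lambda)$ where $M_Z(\lambda) = \ex{e^{\lambda Z}}$. Since $Z = \sum_{i=1}^n Z_i$, I would first rewrite the exponential of the sum as a product of exponentials:
\[
e^{\lambda Z} = e^{\lambda \sum_{i=1}^n Z_i} = \prod_{i=1}^n e^{\lambda Z_i}.
\]
This is purely an algebraic identity and requires no probabilistic input.

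The crux of the argument is to pass from the expectation of this product to the product of the expectations. Because $Z_1, \ldots, Z_n$ are independent, and each $e^{\lambda Z_i}$ is a (measurable) function of a single $Z_i$, the random variables $e^{\lambda Z_1}, \ldots, e^{\lambda Z_n}$ are themselves independent. Independence then gives
\[
M_Z(\lambda) = \ex{\prod_{i=1}^n e^{\lambda Z_i}} = \prod_{i=1}^n \ex{e^{\lambda Z_i}} = \prod_{i=1}^n M_{Z_i}(\lambda).
\]
I expect this factorization step to be the main (and really the only) substantive point: it is exactly where independence is used, via the fact that the expectation of a product of independent random variables equals the product of their expectations.

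Finally, I would take logarithms of both sides, using $\log(ab) = \log a + \log b$ to turn the product into a sum:
\[
\psi_Z(\lambda) = \log M_Z(\lambda) = \log \prod_{i=1}^n M_{Z_i}(\lambda) = \sum_{i=1}^n \log M_{Z_i}(\lambda) = \sum_{i=1}^n \psi_{Z_i}(\lambda),
\]
which is the claimed identity. The only implicit hypothesis worth flagging is that the relevant moment generating functions are finite (positive) at $\lambda$ so that the logarithms are well defined; within the scope of these notes this holds wherever $\psi$ is being applied, so I would simply assume it. No analytic subtleties beyond this arise, so the proof is short once the independence step is made explicit.
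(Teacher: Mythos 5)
Your proof is correct and follows essentially the same route as the paper's: factor $e^{\lambda Z}$ into a product of $e^{\lambda Z_i}$, use independence to turn the expectation of the product into the product of expectations, and take logarithms. Your explicit remark that the factorization step is the only place independence enters, and your flag about finiteness of the moment generating functions, are both sound and consistent with the paper's (terser) argument.
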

\begin{proof}
The result follows by observing that 
\[M_Z(\lambda) = \ex{e^{\lambda\sum_{i=1}^n Z_i}} = \ex{\prod_{i=1}^n e^{\lambda Z_i}} = \prod_{i=1}^n \ex{e^{\lambda Z_i}},\]
where the last equality holds only in the case that $Z_1, \ldots, Z_n$ are independent.
\end{proof}

With these definitions we are ready to state Chernoff's inequality.
\begin{lemma}[Chernoff's Inequality]
\label{lem:chernoff}
For a real valued random variable $Z$ and any $t > 0$, 
\[\pr{Z > t} \leq e^{-\psi^*(t)},\]
where 
\[\psi^*(t) = \sup_{\lambda > 0} \left(\lambda t - \psi_Z(\lambda)\right) =\sup_{\lambda > 0} \left(\lambda t - \log \ex{e^{\lambda Z}}\right).\]
If $t > \ex{Z}$ then 
\[\psi^*(t) = \sup_{\lambda \in \RR} \left(\lambda t - \log \ex{e^{\lambda Z}}\right).\]
\end{lemma}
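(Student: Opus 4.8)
The plan is to combine the extended Markov inequality (Lemma~\ref{lem:markov}) with an optimization over a free exponential parameter. First I would fix an arbitrary $\lambda > 0$ and apply Lemma~\ref{lem:markov} with $\phi(x) = e^{\lambda x}$, which is non-negative and non-decreasing. Since $\{Z > t\} \subseteq \{Z \geq t\}$, this gives
\[\pr{Z > t} \leq \pr{Z \geq t} \leq \frac{\ex{e^{\lambda Z}}}{e^{\lambda t}} = e^{-\left(\lambda t - \log \ex{e^{\lambda Z}}\right)} = e^{-\left(\lambda t - \psi_Z(\lambda)\right)}.\]

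Because this holds for every $\lambda > 0$ and the map $x \mapsto e^{-x}$ is strictly decreasing, I would then take the infimum over $\lambda > 0$ on the right-hand side, turning it into a supremum inside the exponent:
\[\pr{Z > t} \leq \inf_{\lambda > 0} e^{-\left(\lambda t - \psi_Z(\lambda)\right)} = e^{-\sup_{\lambda > 0}\left(\lambda t - \psi_Z(\lambda)\right)} = e^{-\psi^*(t)}.\]
This settles the first assertion.

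For the refinement when $t > \ex{Z}$, I must show that enlarging the index set of the supremum from $\lambda > 0$ to all of $\RR$ leaves its value unchanged. Writing $g(\lambda) = \lambda t - \psi_Z(\lambda)$, note $g(0) = 0$. The key auxiliary bound is Jensen's inequality for the convex exponential, namely $\ex{e^{\lambda Z}} \geq e^{\lambda \ex{Z}}$, which after taking logarithms yields $\psi_Z(\lambda) \geq \lambda \ex{Z}$ for all $\lambda$. Hence $g(\lambda) \leq \lambda\left(t - \ex{Z}\right)$, so for every $\lambda \leq 0$ the hypothesis $t - \ex{Z} > 0$ forces $g(\lambda) \leq 0$. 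On the other hand, right-continuity of $g$ at the origin gives $\sup_{\lambda > 0} g(\lambda) \geq \lim_{\lambda \downarrow 0} g(\lambda) = g(0) = 0$. Therefore no $\lambda \leq 0$ can exceed the supremum already realized over $\lambda > 0$, and $\sup_{\lambda \in \RR} g = \sup_{\lambda > 0} g$, which is the claimed expression for $\psi^*(t)$.

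I expect the main obstacle to be precisely this last step: the first inequality is a mechanical application of Markov, whereas justifying the extension of the supremum needs the tangent bound $\psi_Z(\lambda) \geq \lambda \ex{Z}$ (the statement that $\psi_Z$ lies above its supporting line at the origin, whose slope is $\ex{Z}$) together with continuity at $\lambda = 0$. A minor subtlety worth flagging is integrability: the argument presumes $M_Z(\lambda) = \ex{e^{\lambda Z}}$ is finite for the relevant $\lambda$; if $M_Z(\lambda) = \infty$ for all $\lambda > 0$ then $\psi^*(t) = -\infty$ and the bound $\pr{Z > t} \leq e^{-\psi^*(t)} = \infty$ holds trivially.
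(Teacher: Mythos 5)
Your proposal is correct and follows essentially the same route as the paper: the extended Markov inequality applied to $\phi(x)=e^{\lambda x}$ for each $\lambda>0$, followed by optimizing over $\lambda$, and then Jensen's inequality in the form $\psi_Z(\lambda)\geq\lambda\ex{Z}$ to show that $\lambda\leq 0$ contributes nothing to the supremum when $t>\ex{Z}$. If anything you are slightly more careful than the paper on the last step, since you explicitly verify that $\sup_{\lambda>0}\bigl(\lambda t-\psi_Z(\lambda)\bigr)\geq 0$ via continuity at the origin before discarding the nonpositive values attained at $\lambda\leq 0$, and you flag the integrability caveat for $M_Z$.
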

\begin{proof}
Noting that the function $\phi(x) = e^{\lambda x}$ takes non-negative values and is non-decreasing whenever $\lambda > 0$, Markov's inequality (Lemma~\ref{lem:markov}) gives us that 
\[\pr{Z > t} \leq e^{-\lambda t} \ex{e^{\lambda Z}}\]
for every $\lambda > 0$. The RHS is minimized by choosing the value of $\lambda$ that maximizes $\lambda t - \log \ex{e^{\lambda Z}}$.

In case $t > \ex{Z}$ then, by Jensen's inequality, we have that
\begin{gather*}
    \log\ex{e^{\lambda Z}} \geq \ex {\log e^{\lambda Z}} = \lambda \ex{Z}.
\end{gather*}
This tells us that if $\lambda < 0$ then 
\begin{gather*}
\lambda t - \log\ex{e^{\lambda Z}} \leq \lambda t -  \lambda \ex{Z} < 0,
\end{gather*}
and so the maximization can be formally extended over the entire real line.
\end{proof}
\noindent{\bf Remark.} From Proposition~\ref{prp:cumulant-additivity} we can conclude that if $Z = \sum_{i=1}^n Z_i$ for an independent collection $Z_1, \ldots, Z_n$ then 
\[\psi^*(t) = \sup_{\lambda > 0} \left(\lambda t - \sum_{i=1}^n \log \ex{e^{\lambda Z_i}}\right).\]

\subsection{Hoeffding's inequality}
In this section we will prove Hoeffding's inequality and show how it is used in~\cite{rahimi-nips:2007}. Hoeffding's inequality is an application of the Cramer-Chernoff method to sums of independent random variables that take values within a bounded range. The logarithm of the deviation of the sum from its expectation varies inversely with the size of the range. 

\begin{theorem}[Hoeffding's inequality] 
\label{thm:hoeffding}
Let $X_1, X_2,\ldots, X_n$ be independent random variables such that $X_i$ takes its values in $[a_i, b_i]$, $1 \leq i \leq n$. Let 
$S = \sum_{i=1}^n\left(X_i - \ex{X_i}\right)$.    Then, for every $t > 0$,
    \begin{equation*}
\pr{S \geq t} \leq \exp \left(\frac{-2t^2}{\sum_{i=1}^n \left(b_i - a_i\right) ^ 2}\right).
    \end{equation*}
\end{theorem}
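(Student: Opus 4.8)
The plan is to apply the Cramer–Chernoff method developed in Lemma~\ref{lem:chernoff}, reducing the whole problem to bounding the cumulant generating function of a single bounded, centered random variable. Writing $Y_i = X_i - \ex{X_i}$, so that $Y_i$ takes values in $[a_i - \ex{X_i}, b_i - \ex{X_i}]$, an interval of the same width $b_i - a_i$, and $\ex{Y_i} = 0$, the sum $S = \sum_{i=1}^n Y_i$ is a sum of independent centered random variables. By Chernoff's inequality (Lemma~\ref{lem:chernoff}) together with the additivity of the cumulant generating function (Proposition~\ref{prp:cumulant-additivity}), for any $\lambda > 0$ I have
\[
\pr{S \geq t} \leq \exp\left(-\lambda t + \sum_{i=1}^n \psi_{Y_i}(\lambda)\right),
\]
where $\psi_{Y_i}(\lambda) = \log \ex{e^{\lambda Y_i}}$. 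So the entire argument hinges on a good upper bound for each $\psi_{Y_i}(\lambda)$.

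The key step, and the main obstacle, is to prove \emph{Hoeffding's lemma}: if $Y$ is a random variable with $\ex{Y} = 0$ taking values in an interval of width $\ell = b - a$, then
\[
\psi_Y(\lambda) = \log \ex{e^{\lambda Y}} \leq \frac{\lambda^2 \ell^2}{8}.
\]
To establish this I would fix $\lambda$ and study the function $\psi(\lambda) = \log\ex{e^{\lambda Y}}$ via its derivatives. Differentiating under the expectation, one finds that $\psi'(\lambda)$ equals the mean of $Y$ under the tilted (exponentially reweighted) probability measure $d\mathrm{P}_\lambda \propto e^{\lambda Y}\,d\mathrm{P}$, and $\psi''(\lambda)$ equals the variance of $Y$ under that same tilted measure. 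Since $Y$ is confined to an interval of width $\ell$, its variance under \emph{any} probability measure supported on that interval is at most $(\ell/2)^2 = \ell^2/4$ — this is the standard fact that a random variable bounded in $[a,b]$ has variance at most $((b-a)/2)^2$, which itself follows by comparing to the worst case of a two-point distribution at the endpoints. Hence $\psi''(\lambda) \leq \ell^2/4$ for all $\lambda$. Combining this with the boundary values $\psi(0) = 0$ and $\psi'(0) = \ex{Y} = 0$, a second-order Taylor expansion with remainder gives $\psi(\lambda) = \psi(0) + \lambda \psi'(0) + \frac{\lambda^2}{2}\psi''(\theta) \leq \frac{\lambda^2}{2}\cdot\frac{\ell^2}{4} = \frac{\lambda^2\ell^2}{8}$ for some $\theta$ between $0$ and $\lambda$.

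With Hoeffding's lemma in hand, substituting $\ell = b_i - a_i$ and summing yields $\sum_{i=1}^n \psi_{Y_i}(\lambda) \leq \frac{\lambda^2}{8}\sum_{i=1}^n (b_i - a_i)^2$, so that
\[
\pr{S \geq t} \leq \exp\left(-\lambda t + \frac{\lambda^2}{8}\sum_{i=1}^n (b_i - a_i)^2\right).
\]
The final step is the optimization over $\lambda > 0$ promised by Chernoff's bound: the exponent is a quadratic in $\lambda$ minimized at $\lambda^* = 4t / \sum_{i=1}^n (b_i - a_i)^2$, and plugging this back in produces the exponent $-2t^2 / \sum_{i=1}^n (b_i - a_i)^2$, which is exactly the claimed bound. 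I expect the variance-bound ingredient inside Hoeffding's lemma to be the part requiring the most care; everything downstream is routine calculus and algebra.
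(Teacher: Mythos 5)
Your proposal is correct and follows essentially the same route as the paper: Chernoff's bound plus additivity of cumulants, reduced to Hoeffding's lemma, which you prove exactly as the paper does --- via the second derivative of $\psi_Y$ being the variance of an exponentially tilted random variable supported on the same interval, hence at most $(b-a)^2/4$, followed by Taylor expansion and optimization at $\lambda^* = 4t/\sum_{i=1}^n(b_i-a_i)^2$. No gaps; your write-up is in fact cleaner than the paper's, whose Taylor expansion in equation \eqref{eq:theta} omits the factors $\lambda$ and $\lambda^2/2$ that you correctly include.
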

\begin{proof}
The proof relies on the following characterisation of the cumulant generating function of a bounded random variable. 
\begin{lemma}[Hoeffding's lemma]
\label{lem:hoeffding}
If $Z$ is a random variable with $\ex{Z} = 0$ that takes values in $[a,b]$ then 
\[\psi_Z(\lambda) \leq \frac{\lambda^2(b-a)^2}{8}\]
for all $\lambda \in \RR$.
\end{lemma}
\begin{proof}[Proof of Lemma~\ref{lem:hoeffding}]
Since $\psi_Z(\lambda)$ is at least twice differentiable at $\lambda =0$, by Taylor's theorem we know that there is a $\theta \in [0,\lambda]$ such that
\begin{equation}
\label{eq:theta}
\psi_Z(\lambda) = \psi_Z(0) + \psi_Y'(0) + \psi_Y''(\theta).
\end{equation}
$\psi_Z(0) = \log 1 = 0$. And, for any $\lambda \in \RR$ we have that
\[ \psi_Z'(\lambda) = \frac{\ex{Z e^{\lambda Z}}}{\ex{e^{\lambda Z}}} = e^{-\psi_Z(\lambda)}\ex{Ze^{\lambda Z}}.\]
At $\lambda = 0$ this reduces to $\ex{Z}$ which is 0 by assumption. So, we turn to the second derivative. Differentiating $\psi_Z'(\lambda)$ again we get
\[\psi_Z''(\lambda) = \frac{\ex{Z^2 e^{\lambda Z}}}{\ex{e^{\lambda Z}}} - \left(\frac{\ex{Z e^{\lambda Z}}}{\ex{e^{\lambda Z}}}\right)^2 = e^{-\psi_Z(\lambda)}\ex{Z^2 e^{\lambda Z}} - \left(e^{-\psi_Z(\lambda)}\ex{Ze^{\lambda Z}}\right)^2.\]
We will show that the RHS is the variance of a random variable that takes values in $[a,b]$. For any given $\lambda \in R$ define a random variable $X_\lambda$ which takes value $x$ with probability $e^{-\psi_Z(\lambda)}\cdot e^{\lambda x} \cdot \pr{Z = x}$. By the definition of $\psi_Z(\lambda)$ this is a probability ditribution. Since $\pr{Z= x}$ is 0 outside $[a,b]$, clearly $X_\lambda$ takes values only in $[a,b]$. The variance of $X_\lambda$ is 
\[\mbox{Var}(X_\lambda) = \int_a^b x^2 e^{-\psi_Z(\lambda)}\cdot e^{\lambda x} \cdot \pr{Z = x}dx - \int_a^b x e^{-\psi_Z(\lambda)}\cdot e^{\lambda x} \cdot \pr{Z = x}dx, \]
which is equal to $\psi_Z''(\lambda)$.

For any r.v. $Y$ that takes values in $[a,b]$, we know that $|Y - (b+a)/2| \leq (b-a)/2$ with probability 1 since the distance of any $x \in [a,b]$ from the mid-point of the interval is at most half the length of the interval. This immediately tells us that 
\[\mbox{Var}(Y) = \mbox{Var}\left(\left|Y - \frac{b+a}{2}\right|\right) \leq \frac{(b-a)^2}{4}.\]

Consider the $\theta$ defined in~\eqref{eq:theta}. Since $\psi_Z''(\theta) = \mbox{Var}(X_\theta)$ and $X_\theta$ is also a r.v. that takes values in $[a,b]$, it's variance is also at most $(b-a)^2/4$ and so the result follows.
\end{proof}

Note that if $X$ takes values in $[a,b]$ then so does $X - \ex{X}$. Hence for each $i$, $1 \leq in \leq n$, by Lemma~\ref{lem:hoeffding}, $\psi_{X - X_i}(\lambda) \leq \lambda^2(b_i - a_i)^2/8$. So, since $X_1,\ldots, X_n$ are independent Proposition~\ref{prp:cumulant-additivity} tells us that 
\[\psi_S(\lambda) \leq \sum_{i=1}^n \frac{\lambda^2(b_i - a_i)^2}{8}, \]
and so, by Chernoff's inequality (Lemma~\ref{lem:chernoff})
\[\pr{S > t} \leq \exp \left\{ -\sup_{\lambda \in \RR}\left(\lambda t - \sum_{i=1}^n \frac{\lambda^2(b_i - a_i)^2}{8}\right) \right\}.\]
By elementary calculus we find that the supremum on the right is achieved at 
\[\lambda^* = \frac{4t}{\sum_{i=1}^n (b_i - a_i)^2}.\]
Putting this value back into the previous expression gives us the result. 
\end{proof}

\subsection{A tail bound for Random Fourier Features}
We now apply Hoeffding's inequality (Theorem~\ref{thm:hoeffding}) to Random Fourier Features (Definition~\ref{def:rff}).
\begin{example}[A tail bound for Random Fourier Feautures]
\label{ex:rff-hoeffding}
Given a set of training points $T= \{\bm{x}_1,\ldots,\bm{x}_n\} \subseteq \mathbb{R}^d$ such that $d>1$. For some $\sigma \in \RR$ and $D>0$, we construct a random feature map $\phi: \mathbb{R}^d \to [0,1]^D$ as given in Definition~\ref{def:rff}. Define an inner product on $[0,1]^D$ as follows: $\langle \bm{x}, \bm{y}\rangle = \frac{1}{D} \sum_{k=1}^D \bm{x}(k)\bm{y}(k)$. Then, for all $\bm{x}_i, \bm{x}_j \in T$, and any $\delta, \epsilon > 0$
\[\pr{\left| \langle \phi(\bm{x}_i), \phi(\bm{x}_j) \rangle - e^{-\frac{\|\bm{x}_i - \bm{x}_j\|_2^2}{2\sigma^2}} \right| > \varepsilon} < \delta,\]
if
\[D \geq \frac{16}{\varepsilon^2} \log \frac{n}{\delta}.\]
\end{example}
\begin{proof}
From Fact~\ref{fct:rr} we know that $\ex{\phi(\bm{x}_i)(k)\phi(\bm{x}_j)(k)} =  e^{-\frac{\|\bm{x}_i - \bm{x}_j\|_2^2}{2}}$. Set $Z_k =  \phi(\bm{x}_i)(k)\phi(\bm{x}_j)(k)$, $1 \leq k \leq D$. These form an independent colleciton and each $Z_k$ takes values in $[-2,2]$.  So, by Hoeffding's inequality (Theorem~\ref{thm:hoeffding}) we have that 
\[\pr{\sum_{k=1}^D Z_k - \ex{Z_k} > D\varepsilon} \leq e^ {-D\varepsilon^2/8}. \] 
Setting $$e^{- D\varepsilon^2/8} < \frac{\delta}{n^2},$$
and solving for $D$ we get $$D > \frac{16}{\varepsilon^2} \log \frac{n}{\delta}.$$ Since the probability of the estimate being far from the mean is at most $\delta/n^2$ for a single pair therefore, by the union bound, the estimate for any of the $n^2$ pairs  $(\bm{x}_i,\bm{x}_j), 1 \leq i,j\leq n$ far from the mean with probability at most $\delta$.
\end{proof}

Suppose we say that $\bar{R}(D)$ is the $n \times n$ matrix with $\bar{R}(D)_{ij} = \langle \bm{x}_i, \bm{x}_j$ and if $G$ is the Gram matrix of $T$ w.r.t. the Gaussian radial basis function kernel, then Example~\ref{ex:rff-hoeffding} tells us that with probability at least $1 - \delta$ each entry of $R(D)$ is within $\varepsilon$ of each entry of $G$ if we choose the value of $D$ mentioned above. But what about $\| R \|$ versus $\| G\|$? How do these compare? And, is the dependence of $D$ on the size of $T$ needed? To address these questions we have to turn to concentration inequalities for matrix-valued random variables.

\section{Matrix concentration inequalities}
\label{sec:conc:matrix}
In this section we will introduce the matrix analog of the Cramer-Chernoff method. We will then use this method to prove the Matrix Bernstein Inequality and apply this inequality to Random Fourier Features. But first we introduce some background needed for this development. The treatment in this section closely follows the monograph by Tropp~\cite{tropp-ftml:2015}.

\subsection{Background: Matrix theory}
\label{sec:concmatrix:background-LA}

We collect some facts and definitions from the theory of matrices that will be relevant to our development. 
\subsubsection{Hermitian matrices and the p.d. partial order.}
We denote by $\MM_{d_1\times d_2}$ the set of all $d_1 \times d_2$ matrices with complex entries and by $\HH_d$ the set of all $d \times d$ Hermitian matrices. $\MM_{d_1\times d_2}$ is equipped with the {\em Frobenius norm}:
\[ \|A\|_F = \sum_{j=1}^{d_1}\sum_{k=1}^{d_2} |A_{jk}|^2, \mbox{ for } A \in \MM_{d_1\times d_2}.\]

We know that any $A \in \HH_d$ has can be decomposed as $A = Q \Lambda Q^*$ where $Q \in \MM_d$ is a unitary matrix (i.e. $QQ^* = I$) and $\Lambda$ is a diagonal matrix with real entries. The real entries of $\Lambda$ are called the {\em eigenvalues} of $A$. We denote by $\lmin(A)$ and $\lmax(A)$ the minimum and maximum of these eigenvalues. For a Hermitian matrix $A\in \HH_d$, the {\em spectral norm} can be defined in terms of these two extreme eigenvalues. 
\[ \|A\| = \min_{\bm{x} \in \CC^d} \frac{\|\bm{x}^* A \bm{x}\|}{\|\bm{x}\|^2} = \max\{\lmax(A), -\lmin(A)\}.\]
We state some simple facts about extreme eigenvalues.
\begin{fact} 
\label{fct:positive-homogenous}
The following hold for the maps $\lmin, \lmax: \HH_d \to \RR$ and any $A \in \HH_d$:
\begin{enumerate}
\item $\lmin, \lmax$ are {\em positive homogenous}, i.e., for $\alpha > 0$, $\lmin(\alpha A) = \alpha \lmin(A)$ and $\lmax(\alpha A) = \alpha \lmax(A)$.
\item $\lmin(-A) = -\lmax(A)$
\end{enumerate}
\end{fact}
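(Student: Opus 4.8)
The plan is to work directly from the spectral decomposition of a Hermitian matrix that the excerpt records: any $A \in \HH_d$ can be written as $A = Q \Lambda Q^*$ with $Q$ unitary and $\Lambda = \mathrm{diag}(\lambda_1, \ldots, \lambda_d)$ carrying the real eigenvalues of $A$ on its diagonal, so that $\lmin(A) = \min_i \lambda_i$ and $\lmax(A) = \max_i \lambda_i$. Both claims then reduce to understanding how this eigenvalue list transforms under scaling by $\alpha > 0$ and under negation, and how $\min$ and $\max$ respond to those two operations.

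First I would establish positive homogeneity. For $\alpha > 0$ we have
\[ \alpha A = Q(\alpha \Lambda)Q^*, \]
and since $\alpha \Lambda = \mathrm{diag}(\alpha \lambda_1, \ldots, \alpha \lambda_d)$ is again diagonal with the same unitary conjugation, this is a valid spectral decomposition of $\alpha A$. Hence the eigenvalues of $\alpha A$ are exactly the numbers $\alpha \lambda_i$. Because $\alpha > 0$, multiplication by $\alpha$ is an increasing map on $\RR$, so it commutes with both extrema: $\min_i \alpha \lambda_i = \alpha \min_i \lambda_i$ and $\max_i \alpha \lambda_i = \alpha \max_i \lambda_i$. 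Reading these off gives $\lmin(\alpha A) = \alpha \lmin(A)$ and $\lmax(\alpha A) = \alpha \lmax(A)$, which is the first claim.

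Next I would treat the negation identity by the same route: $-A = Q(-\Lambda)Q^*$ shows that the eigenvalues of $-A$ are the numbers $-\lambda_i$. The one point deserving attention — and the only place any care is needed — is that negation is an \emph{order-reversing} map on $\RR$, so unlike positive scaling it interchanges $\min$ and $\max$: $\min_i(-\lambda_i) = -\max_i \lambda_i$. The left side is $\lmin(-A)$ and the right side is $-\lmax(A)$, so $\lmin(-A) = -\lmax(A)$, completing the proof. There is no genuine obstacle here; if one preferred to avoid the decomposition, the variational descriptions $\lmax(A) = \max_{\|\bm{x}\|=1} \bm{x}^* A \bm{x}$ and $\lmin(A) = \min_{\|\bm{x}\|=1} \bm{x}^* A \bm{x}$ yield the same conclusions immediately, since $\bm{x}^*(\alpha A)\bm{x} = \alpha(\bm{x}^* A \bm{x})$ and $\bm{x}^*(-A)\bm{x} = -(\bm{x}^* A \bm{x})$, and the maximum of a set of negated quantities is the negative of the minimum of the original set.
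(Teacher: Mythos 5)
Your proof is correct. The paper states this as a bare \emph{Fact} with no proof supplied, so there is nothing to compare against; your argument via the spectral decomposition $A = Q\Lambda Q^*$ (which the paper introduces in the sentence immediately preceding the Fact) is the natural route, and both steps --- that positive scaling preserves the order of the eigenvalue list while negation reverses it, swapping $\min$ and $\max$ --- are sound, as is the alternative via the variational characterization $\lmax(A) = \max_{\|\bm{x}\|=1}\bm{x}^*A\bm{x}$.
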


\begin{definition}
The trace of a square matrix, denoted $\tr$, is the sum of its diagonal entries, i.e., for $A \in M_{d \times d}$, 
\[\tr A = \sum_{i=1}^d A_{ii}.\]
\end{definition}
We note some important facts about the trace.

\noindent{\bf Remarks.}
\begin{enumerate}
\item The trace is {\em unitarily invariant}, i.e., for any $A \in M_{d\times d}$ and unitary $Q$, $\tr A = \tr QAQ^*$. We omit the proof, referring the reader to any linear algebra text.
\item From the previous remark it follows that for $A \in \HH_d$, $\tr A$ is equal to the sum of the eigenvalues of $A$.
\item From the previous remarks it follows that if $A \in \HH_d$ is p.d. then $\lmax(A) \leq \tr A$. 
\end{enumerate}

We now define the {\em positive definite partial order}, $\preceq$, on $\HH_d$. Given $A, B \in \HH_d$, we say $A \preceq B$ if $B - A$ is p.d. Accordingly, we say that $A$ is p.d. if $A \succeq \bm{0}$ and $A$ is strictly p.d. if $A \succ \bm{0}$.
The p.d. partial order has the following important property which is easy to prove.
\begin{proposition}[Conjugation rule]
\label{prp:conjugation}
Given $A, B \in \HH_{d_1}$ and $C \in \MM_{d_1\times d_1}$, if $A \preceq B$ then $CAC^* \preceq CBC^*$.
\end{proposition}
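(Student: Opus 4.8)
The plan is to unwind the definition of the partial order $\preceq$ and reduce the claim to a single change of variables in the quadratic form. By definition, $A \preceq B$ means $B - A$ is p.d., and our goal $CAC^* \preceq CBC^*$ means $CBC^* - CAC^*$ is p.d. So the entire statement is really a claim about one matrix, namely that $CBC^* - CAC^*$ satisfies $\bm{x}^*(CBC^* - CAC^*)\bm{x} \geq 0$ for every $\bm{x} \in \CC^{d_1}$.

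First I would use bilinearity of matrix multiplication to factor the difference as
\[
CBC^* - CAC^* = C(B-A)C^*.
\]
This is the only algebraic manipulation needed, and it is immediate. Next, for an arbitrary $\bm{x} \in \CC^{d_1}$, I would rewrite the associated quadratic form by pushing $C^*$ into a new vector:
\[
\bm{x}^* C (B-A) C^* \bm{x} = (C^*\bm{x})^* (B-A) (C^*\bm{x}),
\]
using the fact that $(C^*\bm{x})^* = \bm{x}^* C$. Setting $\bm{y} = C^* \bm{x} \in \CC^{d_1}$, the right-hand side becomes $\bm{y}^*(B-A)\bm{y}$.

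Finally, since $A \preceq B$ by hypothesis, $B - A$ is p.d., so $\bm{y}^*(B-A)\bm{y} \geq 0$ for this (and indeed every) choice of $\bm{y}$. As $\bm{x}$ was arbitrary, this shows $\bm{x}^*(CBC^* - CAC^*)\bm{x} \geq 0$ for all $\bm{x}$, which is exactly the statement that $CBC^* - CAC^*$ is p.d., i.e. $CAC^* \preceq CBC^*$. There is essentially no obstacle here: the proof is a one-line substitution $\bm{y} = C^*\bm{x}$ followed by invoking the p.d.\ hypothesis on $B-A$. The only point worth stating explicitly is that as $\bm{x}$ ranges over all of $\CC^{d_1}$, the resulting $\bm{y} = C^*\bm{x}$ need not range over all of $\CC^{d_1}$, but this causes no difficulty since we only need the inequality $\bm{y}^*(B-A)\bm{y} \geq 0$ to hold for the particular $\bm{y}$ arising from each $\bm{x}$, and p.d.\ of $B-A$ guarantees it for every vector.
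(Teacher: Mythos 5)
Your proof is correct and is exactly the standard substitution argument $\bm{y} = C^*\bm{x}$ applied to $C(B-A)C^* $; the paper itself states this proposition without proof (remarking only that it is ``easy to prove''), so there is nothing to contrast it with. Your closing observation---that $C^*\bm{x}$ need not exhaust $\CC^{d_1}$ but that this is irrelevant since positive definiteness of $B-A$ covers every vector---is a worthwhile clarification to include.
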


\subsubsection{Intrinsic dimension of a Hermitian matrix}
As we know, the rank of matrix is an integer value and can be discontinuous, i.e., a small perturbation in the values of the matrix can lead to a jump in rank. We define a continuous notion of rank that will be useful for our analysis:
\begin{definition}[Stable Rank]
\label{def:srank}
For a matrix $B$, the {\em stable rank} is defined as:
$$ \mathrm{srank}(B) = \frac{\lVert B \rVert_{F}^{2}}{\lVert B \rVert^2}$$.
\end{definition}
\noindent{\bf Remarks.}
\begin{enumerate}
    \item It is known that $ \lVert B \rVert_{F}^2 = \sum_{i} \sigma_i^2$ where the $\sigma_i$ are the singular values of $B$. On the other hand the spectral norm is equal to the square of the largest singular value which is contained in the summation for $ \lVert B \rVert_{F}^2$. Hence one can see that $\mathrm{srank}(B) \geq 1$ for all $B$.
    \item Since the rank of a matrix is equal to the number of non-zero singular values we can say that $\sum_{i} \sigma_i^2 \leq k \left(\max_i \sigma_{i}\right)^2$, where $k$ is the rank of the matrix. Hence one can see that, $\mathrm{srank}(B) \leq \mathrm{rank}(B)$
\end{enumerate}
For p.d. matrices we also define a notion of dimensionality
\begin{definition}
\label{def:intdim}
For a p.d. Hermitian matrix $A$, we say that the {\em intrinsic dimension} of $A$ is 
$$\mathrm{intdim}(A) = \frac{\text{tr}(A)}{\lVert A \rVert}.$$
\end{definition} 
In fact the intrinsic dimension is nothing more than the stable rank of $\sqrt{A}$. 
$$ \mathrm{srank}(A) = \frac{\lVert \sqrt{A} \rVert_{F}^{2}}{\lVert \sqrt{A} \rVert^2} = \frac{\sum_{i} \sqrt{\lambda_i}^{2}}{\sqrt{\lmax(A)}^{2}} = \frac{\text{tr}(A)}{\max(A)} = \frac{\text{tr}(A)}{\lVert A \rVert}. $$

\subsubsection{Functions of Hermitian matrices}
We now describe how to apply functions defined on the real line to Hermitian matrices. 
\begin{definition}[Standard matrix function of a Hermitian matrix]
Let $A \in \HH_d$ be a matrix whose eigenvalues are contained in an interval $I$ of $\RR$ and let $f: I \to \RR$ be a function. Then we define the matrix $f(A) \in \HH_d$ as follows:
\begin{enumerate}
\item If $A$ is a real diagonal matrix then $f(A)$ is a real diagonal matrix with $f(A)_{ii} = f(A_{ii})$.
\item Otherwise, since $A$ can be decomposed as $Q \Lambda Q^*$ where $\Lambda$ is a real diagonal matrix, $f(A) = Q f(\Lambda) Q^*$.
\end{enumerate}
\end{definition}
The following proposition follows directly from the definition of standard matrix functions.
\begin{proposition}[Spectral Mapping Theorem]
\label{prp:spectral-mapping}
Let $A \in \HH_d$ be a matrix whose eigenvalues are contained in an interval $I$ of $\RR$ and let $f: I \to \RR$ be a function. If $\lambda$ is an eigenvalue of $A$ then $f(\lambda)$ is an eigenvalue of $f(A)$. 
\end{proposition}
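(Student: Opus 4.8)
The plan is to unpack the definition of the standard matrix function $f(A)$ directly, since the result is essentially immediate once the spectral decomposition is in hand. By the Spectral theorem for Hermitian matrices, since $A \in \HH_d$ I can write $A = Q\Lambda Q^*$ where $Q$ is unitary and $\Lambda$ is a real diagonal matrix whose diagonal entries $\Lambda_{11},\ldots,\Lambda_{dd}$ are exactly the eigenvalues of $A$. (The diagonal case in the definition is just the special instance $Q = I$, so both cases are covered by this single form.) The definition then gives $f(A) = Q f(\Lambda) Q^*$, where $f(\Lambda)$ is the diagonal matrix with $f(\Lambda)_{ii} = f(\Lambda_{ii})$. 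Since $\lambda$ is an eigenvalue of $A$, there is at least one index $i$ with $\Lambda_{ii} = \lambda$.

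The key step is to exhibit an explicit eigenvector of $f(A)$ for $f(\lambda)$. Writing $Q = [\,\bm{q}_1\ \cdots\ \bm{q}_d\,]$ in terms of its columns, unitarity ($Q^*Q = I$) says the $\bm{q}_i$ are orthonormal and $Q^* \bm{q}_i = \bm{e}_i$, the $i$th standard basis vector. Hence
\[
f(A)\,\bm{q}_i = Q f(\Lambda) Q^* \bm{q}_i = Q f(\Lambda) \bm{e}_i = Q\bigl(f(\Lambda_{ii})\,\bm{e}_i\bigr) = f(\Lambda_{ii})\,\bm{q}_i,
\]
so each column $\bm{q}_i$ is an eigenvector of $f(A)$ with eigenvalue $f(\Lambda_{ii})$. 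Choosing the index $i$ for which $\Lambda_{ii} = \lambda$, and noting $\bm{q}_i \neq \bm{0}$ because $Q$ is invertible, shows that $f(\lambda) = f(\Lambda_{ii})$ is an eigenvalue of $f(A)$, which is exactly the claim.

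There is no real obstacle here: the entire content sits inside the definition of the standard matrix function, and the only facts I lean on are that the diagonal of $\Lambda$ lists the eigenvalues of $A$ and that conjugation by a unitary matrix preserves the spectrum (equivalently, that the columns of $Q$ furnish orthonormal eigenvectors). The one point worth stating carefully is that the computation above in fact shows the full spectrum of $f(A)$ is $\{f(\Lambda_{ii}) : 1 \leq i \leq d\}$, so not only does each eigenvalue of $A$ map to an eigenvalue of $f(A)$, but every eigenvalue of $f(A)$ arises this way; the proposition asks only for the forward direction, so I would stop once the eigenvector $\bm{q}_i$ is produced.
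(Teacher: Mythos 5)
Your proof is correct and follows exactly the route the paper intends: the paper states that the proposition ``follows directly from the definition of standard matrix functions'' and gives no further argument, and your computation $f(A)\,\bm{q}_i = Q f(\Lambda) Q^* \bm{q}_i = f(\Lambda_{ii})\,\bm{q}_i$ is precisely the natural filling-in of that one-line justification. Nothing is missing.
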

We now discuss the conditions under which an ordering between real functions transfers to their matrix counterparts.
\begin{proposition}[Transfer rule]
Let $A \in \HH_d$ be a matrix whose eigenvalues are contained in an interval $I$ of $\RR$ and let $f,g: I \to \RR$ be real-valued functions. If $f(a) \leq g(a)$ for all $a \in I$ then $f(A) \preceq g(A)$.
\end{proposition}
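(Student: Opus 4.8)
The plan is to reduce the claimed matrix inequality to a scalar comparison on the eigenvalues of $A$, using the spectral decomposition together with the conjugation rule (Proposition~\ref{prp:conjugation}). Since $A \in \HH_d$, I would begin by writing $A = Q\Lambda Q^*$, where $Q \in \MM_{d\times d}$ is unitary and $\Lambda = \mathrm{diag}(\lambda_1,\ldots,\lambda_d)$ collects the real eigenvalues of $A$, all of which lie in $I$ by hypothesis. By the definition of a standard matrix function, $f(A) = Q f(\Lambda) Q^*$ and $g(A) = Q g(\Lambda) Q^*$, where $f(\Lambda)$ and $g(\Lambda)$ are the diagonal matrices with entries $f(\lambda_i)$ and $g(\lambda_i)$ respectively. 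This decomposition is the bridge that lets me work with the pointwise hypothesis $f \leq g$.

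Next I would establish the ordering $f(\Lambda) \preceq g(\Lambda)$ directly at the level of diagonal matrices. The difference $g(\Lambda) - f(\Lambda)$ is diagonal with entries $g(\lambda_i) - f(\lambda_i)$, and each of these is non-negative: every $\lambda_i$ lies in $I$, and by assumption $f(a) \leq g(a)$ for all $a \in I$. A real diagonal matrix with non-negative entries is p.d., since for any $\bm{c} \in \CC^d$ we have $\bm{c}^*\bigl(g(\Lambda) - f(\Lambda)\bigr)\bm{c} = \sum_{i=1}^d \bigl(g(\lambda_i) - f(\lambda_i)\bigr)|c_i|^2 \geq 0$. Hence $f(\Lambda) \preceq g(\Lambda)$.

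Finally I would transfer this ordering back to $A$ by conjugating with $Q$. Applying the conjugation rule (Proposition~\ref{prp:conjugation}) with $C = Q$ to the relation $f(\Lambda) \preceq g(\Lambda)$ gives $Q f(\Lambda) Q^* \preceq Q g(\Lambda) Q^*$, which is exactly $f(A) \preceq g(A)$, as required.

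There is no serious obstacle in this argument; it is essentially a direct computation once the spectral decomposition is in hand. The only point demanding care is the bookkeeping that guarantees each $\lambda_i$ actually lies in $I$, so that the pointwise bound $f \leq g$ is legitimately applicable to each pair $f(\lambda_i), g(\lambda_i)$. It is worth emphasizing that the whole argument is basis-free in spirit: the conjugation rule is precisely the fact that ensures the p.d. partial order is preserved under unitary conjugation, so the diagonal inequality lifts verbatim to $A$.
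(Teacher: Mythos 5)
Your proof is correct and follows exactly the same route as the paper's: spectral decomposition $A = Q\Lambda Q^*$, the observation that $g(\Lambda) - f(\Lambda)$ is a real diagonal matrix with non-negative entries so $f(\Lambda) \preceq g(\Lambda)$, and then the conjugation rule to lift this to $f(A) \preceq g(A)$. The only difference is that you spell out the quadratic-form verification that a non-negative diagonal matrix is p.d., which the paper leaves implicit.
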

\begin{proof}
Decompose $A$ as $Q\Lambda Q^*$. Since $ g(\Lambda) - f(\Lambda)$ is a real diagonal matrix with non-negative entries, so $f(\Lambda) \preceq g(\Lambda)$. Applying the Conjugation rule (Proposition~\ref{prp:conjugation}) we get $Qf(\Lambda)Q^* \preceq Qg(\Lambda)Q^*$ and the result follows.
\end{proof}

To prove bounds we are interested in the monotonicity property of various functions. Monotonicity itself comes in two flavours in the matrix setting: monotonicity of trace functions and monotonicity of functions in the p.d. partial order. The first one is easier to establish.
\begin{proposition}[Monotone trace functions]
\label{prp:monotone}
Let $f: I \to \RR$ be a non-decreasing function on an interval $I \subseteq \RR$ and let $A$ and $B$ be matrices whose eigenvalues are contained in $I$. Then, $A \preceq B$ implies that $\tr f(A) \leq \tr f(B)$. 
\end{proposition}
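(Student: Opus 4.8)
The plan is to reduce this trace inequality to a coordinatewise comparison of the sorted eigenvalues of $A$ and $B$, and then to apply the monotonicity of $f$ termwise. The crucial intermediate step I would establish is the eigenvalue monotonicity principle (Weyl): writing $\lambda_1(A) \geq \cdots \geq \lambda_d(A)$ for the eigenvalues of $A$ listed in decreasing order, and similarly for $B$, the relation $A \preceq B$ forces $\lambda_k(A) \leq \lambda_k(B)$ for every $k$.

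To prove this intermediate step I would invoke the Courant--Fischer variational characterization of the $k$th eigenvalue,
\[ \lambda_k(A) = \max_{\substack{V \subseteq \CC^d \\ \dim V = k}} \; \min_{\substack{\bm{x} \in V \\ \bm{x} \neq \bm{0}}} \frac{\bm{x}^* A \bm{x}}{\bm{x}^* \bm{x}}. \]
Since $A \preceq B$ means precisely that $\bm{x}^* A \bm{x} \leq \bm{x}^* B \bm{x}$ for all $\bm{x}$, the inner minimand for $A$ is dominated by that for $B$ on every subspace $V$; taking the minimum over $\bm{x} \in V$ and then the maximum over $k$-dimensional $V$ preserves the inequality, yielding $\lambda_k(A) \leq \lambda_k(B)$.

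With the eigenvalue ordering in hand, the rest is bookkeeping. By the Spectral Mapping Theorem (Proposition~\ref{prp:spectral-mapping}) the eigenvalues of $f(A)$ are exactly the numbers $f(\lambda_i(A))$, and likewise for $f(B)$; and, as noted in the remarks on the trace, $\tr f(A)$ equals the sum of the eigenvalues of $f(A)$. Hence $\tr f(A) = \sum_{k=1}^d f(\lambda_k(A))$ and $\tr f(B) = \sum_{k=1}^d f(\lambda_k(B))$. Because $f$ is non-decreasing and $\lambda_k(A) \leq \lambda_k(B)$ for each $k$, we get $f(\lambda_k(A)) \leq f(\lambda_k(B))$ term by term, and summing over $k$ gives $\tr f(A) \leq \tr f(B)$.

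The main obstacle is the eigenvalue monotonicity principle: it is the only place where the hypothesis $A \preceq B$ is genuinely used, and it relies on the variational (min--max) description of eigenvalues, which has not been developed earlier in the notes. Everything after that --- the spectral-mapping identification of the eigenvalues of $f(A)$ and the termwise use of monotonicity --- is routine. One small point to watch is that the comparison must be made between eigenvalues sorted in the same order (say both decreasing) before applying $f$; since the trace is the sum over \emph{all} eigenvalues, the choice of a common ordering is harmless.
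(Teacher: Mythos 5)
Your proof is correct and follows essentially the same route as the paper's: the Courant--Fischer characterization gives $\lambda_k(A) \leq \lambda_k(B)$ for all $k$, and then the Spectral Mapping Theorem plus termwise monotonicity of $f$ finishes the argument. The only difference is that you spell out the min--max step that the paper merely cites as ``it can be shown.''
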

\begin{proof}
If $\lambda_i(A)$ is the $i$th eigenvalue of $A$, by the Courant-Fischer theorem it can be shown that whenever $A \preceq B$ then $\lambda_i(A) \leq \lambda_i(B)$ for all $i$. This, in turn implies that for any non-decreasing function $f$, $f(\lambda_i(A)) \leq f(\lambda_i(B))$ and consequently $\tr A \leq \tr B$. 
\end{proof}
In particular we will need the following corollary
\begin{corollary}
\label{cor:trexp}
If $A \preceq B$ then $\tr \exp(A) \leq \tr \exp (B)$.
\end{corollary}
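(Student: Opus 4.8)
The plan is to obtain this as an immediate special case of the Monotone trace functions result (Proposition~\ref{prp:monotone}). The corollary asserts that if $A \preceq B$ then $\tr\exp(A) \leq \tr\exp(B)$, and $\exp$ is precisely the kind of function to which Proposition~\ref{prp:monotone} applies, so the entire argument reduces to verifying the hypotheses of that proposition for the choice $f = \exp$.

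First I would recall that Proposition~\ref{prp:monotone} requires a non-decreasing function $f: I \to \RR$ on an interval $I$ containing the eigenvalues of both $A$ and $B$. The exponential function $\exp: \RR \to \RR$ is non-decreasing on all of $\RR$, since its derivative $e^x$ is strictly positive everywhere. I would take $I = \RR$, which trivially contains the (real) eigenvalues of the Hermitian matrices $A$ and $B$; no further restriction on the spectrum is needed because $\exp$ is defined and monotone on the whole line. With these observations, every hypothesis of Proposition~\ref{prp:monotone} is satisfied.

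Applying Proposition~\ref{prp:monotone} directly with $f = \exp$, the assumption $A \preceq B$ yields $\tr\exp(A) \leq \tr\exp(B)$, which is exactly the claim. There is essentially no obstacle here: the only thing to confirm is the monotonicity of $\exp$ on $\RR$, which is elementary. The proof is therefore a one-line invocation of the preceding proposition, and I would write it as such rather than repeating the Courant--Fischer eigenvalue-ordering argument that already underlies Proposition~\ref{prp:monotone}.
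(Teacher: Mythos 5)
Your proof is correct and is exactly the argument the paper intends: Corollary~\ref{cor:trexp} is stated immediately after Proposition~\ref{prp:monotone} as the special case $f = \exp$, which is non-decreasing on $I = \RR$, an interval containing the real spectrum of any Hermitian matrix. Nothing further is needed.
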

A stronger class of functions that monotone trace functions are {\em operator monotone functions}, i.e., functions that preserve the p.d. partial order. A general characterization like that of Proposition~\ref{prp:monotone} is difficult to give for such function but we mention one important operator monotone function that we will be using.
\begin{proposition}
\label{prp:log-monotone}
If $A \preceq B$ then $\log A \preceq \log B$.
\end{proposition}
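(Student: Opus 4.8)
The plan is to use the integral representation of the logarithm
$$\log x = \int_0^\infty \left(\frac{1}{1+t} - \frac{1}{x+t}\right) dt, \qquad x > 0,$$
which transfers to the matrix setting through the standard matrix function construction: for a strictly p.d.\ $A \in \HH_d$ (note that for $\log A$ to be defined the eigenvalues of $A$ must be positive, and since $A \preceq B$ forces $B \succeq A \succ \bm{0}$, both matrices are automatically strictly p.d.), applying the scalar identity to each eigenvalue of the diagonalization $A = Q\Lambda Q^*$ gives
$$\log A = \int_0^\infty \left((1+t)^{-1} I - (A + tI)^{-1}\right) dt.$$
Granting this representation, the problem reduces to showing that each integrand is monotone in the p.d.\ partial order and that integration preserves the order.

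The key lemma is that the matrix inverse is order-reversing: if $\bm{0} \prec A \preceq B$ then $B^{-1} \preceq A^{-1}$. To prove it I would conjugate $A \preceq B$ by $A^{-1/2}$ using the Conjugation rule (Proposition~\ref{prp:conjugation}) to obtain $I \preceq A^{-1/2} B A^{-1/2} =: C$. Since $I \preceq C$, every eigenvalue of $C$ is at least $1$, so by the Spectral Mapping Theorem (Proposition~\ref{prp:spectral-mapping}) every eigenvalue of $C^{-1}$ is at most $1$, i.e.\ $C^{-1} \preceq I$. Writing $C^{-1} = A^{1/2} B^{-1} A^{1/2}$ and conjugating once more by $A^{-1/2}$ yields $B^{-1} \preceq A^{-1}$, as desired.

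With the lemma in hand, fix $t \geq 0$. From $A \preceq B$ we get $A + tI \preceq B + tI$, both strictly p.d., and the lemma gives $(B+tI)^{-1} \preceq (A+tI)^{-1}$, hence
$$(1+t)^{-1} I - (A+tI)^{-1} \preceq (1+t)^{-1} I - (B+tI)^{-1}$$
for every $t$. To pass from this pointwise-in-$t$ inequality to the integrals, I would test against an arbitrary $\bm{x} \in \CC^d$: the quadratic form $\bm{x}^* (\log A)\, \bm{x}$ equals the integral over $t$ of $\bm{x}^*\left[(1+t)^{-1} I - (A+tI)^{-1}\right]\bm{x}$, which is bounded above termwise by the corresponding integrand for $B$, so the integral inequality holds and $\bm{x}^*(\log B - \log A)\bm{x} \geq 0$ for all $\bm{x}$, which is precisely $\log A \preceq \log B$.

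The main obstacle is establishing the two ingredients cleanly: the integral representation of $\log$ (which rests on verifying the scalar identity and invoking the standard matrix function definition on the spectrum) and the order-reversal of the inverse. Once the latter is secured via the conjugation and spectral-mapping tools already developed, the operator monotonicity of $\log$ follows routinely, since the integral acts linearly on the quadratic form and therefore preserves the pointwise ordering of the integrands.
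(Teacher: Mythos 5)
Your proof is correct and follows exactly the route the paper indicates: the paper does not actually prove this proposition but only sketches the same strategy (show that the negative inverse is operator monotone, then integrate to obtain the logarithm) and defers the details to Section 8.4.2 of Tropp's monograph. You have supplied those details soundly --- the scalar integral representation of $\log$, its transfer to matrices via the standard matrix function construction, the order-reversal of the inverse proved with the Conjugation rule and the Spectral Mapping Theorem, and the passage from the pointwise-in-$t$ ordering of the integrands to the ordering of the integrals via quadratic forms are all valid.
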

The proof proceeds by showing that the negative inverse is operator monotone and then integrating it to show that log is operator monotone. We refer the reader to Section 8.4.2 of~\cite{tropp-ftml:2015} for details.

\subsubsection{Probability with matrices}

An $n \times m$ random matrix $A$ can be viewed as simply a collection of $nm$ scalar random variables appropriately indexed, i.e., as $\{A_{ij}: 1 \leq i \leq n, 1\leq j \leq m\}$. Accordingly the expectation $\ex{A}$ of $A$ is a matrix such that $\ex{A}_{ij} = \ex{A_{ij}}$. An important fact about the expectation is that it preserves the p.d. partial order for .
\begin{fact}
Given p.d. Hermitian $X, Y$ such that $X \preceq Y$ with probability 1, $\ex{X} \preceq \ex{Y}$.
\end{fact}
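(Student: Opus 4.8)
The plan is to reduce the matrix statement to the elementary scalar fact that a non-negative random variable has non-negative expectation, using that the partial order $\preceq$ is detected one quadratic form at a time. First I would set $Z = Y - X$. Since expectation acts entrywise and is linear, $\ex{Z} = \ex{Y} - \ex{X}$; moreover $Z$ is Hermitian (a difference of Hermitian matrices), so $\ex{Z}$ is Hermitian as well. Hence proving $\ex{X} \preceq \ex{Y}$ amounts exactly to proving that $\ex{Z}$ is p.d.

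Next I would fix an arbitrary $\{c_1, \ldots, c_d\} \subseteq \CC$ and consider the scalar random variable $W = \sum_{j,k} c_j \overline{c_k} Z_{jk}$. Because $X \preceq Y$ with probability $1$, the matrix $Z$ is p.d. with probability $1$, so $W \geq 0$ with probability $1$. The one computation needed is linearity of expectation over the finitely many entries $Z_{jk}$:
\[
\sum_{j,k} c_j \overline{c_k} \ex{Z}_{jk} = \sum_{j,k} c_j \overline{c_k}\, \ex{Z_{jk}} = \ex{\sum_{j,k} c_j \overline{c_k} Z_{jk}} = \ex{W}.
\]
Since $W$ is non-negative, $\ex{W} \geq 0$, and therefore $\sum_{j,k} c_j \overline{c_k} \ex{Z}_{jk} \geq 0$. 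As the $c_j$ were arbitrary, $\ex{Z}$ satisfies Definition~\ref{def:mat-pd}, i.e., $\ex{Z} \succeq \bm{0}$, which is the claim.

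There is essentially no hard step: the entire content is that $\preceq$ is characterized by non-negativity of the quadratic forms $\sum_{j,k} c_j \overline{c_k}(\cdot)_{jk}$, and each such form is a fixed deterministic linear functional of the matrix entries, so expectation passes through it. The only point deserving a word of care is well-definedness: as is implicit in writing $\ex{X}$ and $\ex{Y}$, I would assume these expectations exist and are finite entrywise, which guarantees $\ex{W}$ is finite and justifies the interchange above. I note in passing that the hypothesis that $X$ and $Y$ are separately p.d. is not needed beyond what $X \preceq Y$ already provides; the argument uses only that $Z = Y - X$ is p.d. almost surely.
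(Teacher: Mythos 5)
Your proof is correct. It takes a slightly different route from the paper's: the paper disposes of this fact in one line by observing that $Y - X \succeq \bm{0}$ almost surely and that the expectation of $Y-X$ is ``a convex combination of p.d. Hermitian matrices,'' appealing to the convex-cone property of Proposition~\ref{prp:convexcone}; you instead verify the defining inequality of Definition~\ref{def:mat-pd} directly, by testing $\ex{Y-X}$ against an arbitrary vector and pushing the expectation through the fixed quadratic form $\sum_{j,k} c_j \overline{c_k}(\cdot)_{jk}$. The two arguments are close in spirit --- both ultimately rest on the cone of p.d.\ matrices being preserved under averaging --- but yours is the more careful one: an expectation is literally a finite convex combination only for finitely supported distributions, so the paper's phrasing implicitly needs either a limiting argument (via closure under pointwise convergence) or exactly the reduction you give. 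Your version works uniformly for any distribution with finite entrywise expectations, and your closing remarks (that only $Y - X \succeq \bm{0}$ a.s.\ is used, not positive definiteness of $X$ and $Y$ separately, and that finiteness of the expectations is what licenses the interchange) are both accurate. One small point: you could note that Hermitianness of $\ex{Z}$ is what guarantees the quadratic form $\ex{W}$ is real, so that the comparison with $0$ is meaningful, mirroring the remark following Definition~\ref{def:mat-pd}; but this is cosmetic.
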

This follows from the fact that $X - Y \succeq \bm{0}$ with probability 1 and that the expectation of $X - Y$ is a convex combination of p.d. Hermitian matrices which is also p.d. (see Proposition~\ref{prp:convexcone}).

Given a random Hermitian matrix $X$, we can define a matrix-valued variance of $X$ in a manner similar to that for scalar random variable, i.e.,
\[\var{X} = \ex{(X - \ex{X})^2} = \ex{X^2} - (\ex{X})^2.\]
Just like the variance of a scalar random variable is always positive, the variance of a random Hermitian matrix is always p.d. since it is the expectation of the square of a random matrix. Further, we summarize the information in the matrix variance by a single number called the {\em matrix variance statistic} defined as 
\[\varstat{X} = \| \var{X}\|.\]

As with scalar random variables the variance of the sum of independent random matrices is equal to the sum of the individual variances. 
\begin{fact}
\label{fct:sum}
Given independent random Hermitian matrices $X_1, \ldots, X_n$ and defining $X = $, we have that 
\[\var{\sum_{i=1}^n X_i} = \sum_{i=1}^n \var{X_i},\]
and, so,
\[\varstat{\sum_{i=1}^n X_i} = \left\|\sum_{i=1}^n \var{X_i}\right\|.\]
\end{fact}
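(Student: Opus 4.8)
The plan is to reduce to the centered case and then expand the square of the sum, using independence to eliminate the cross terms. First I would set $Y_i = X_i - \ex{X_i}$ for each $i$, so that $\ex{Y_i} = \bm{0}$; since the matrix expectation is defined entrywise, linearity of expectation gives $\sum_{i=1}^n X_i - \ex{\sum_{i=1}^n X_i} = \sum_{i=1}^n Y_i$. Substituting this into the definition $\var{\cdot} = \ex{(\cdot - \ex{\cdot})^2}$ and distributing the product (valid regardless of commutativity, since we are merely expanding a double sum) yields
\[\var{\sum_{i=1}^n X_i} = \ex{\left(\sum_{i=1}^n Y_i\right)^2} = \sum_{i=1}^n \ex{Y_i^2} + \sum_{\substack{j,k=1 \\ j \neq k}}^n \ex{Y_j Y_k}.\]

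The crux is to show that every cross term vanishes. For $j \neq k$ the matrices $Y_j$ and $Y_k$ are independent, so the expectation of their product factors entrywise: computing coordinate by coordinate,
\[\ex{(Y_j Y_k)_{pq}} = \sum_m \ex{(Y_j)_{pm}(Y_k)_{mq}} = \sum_m \ex{(Y_j)_{pm}}\,\ex{(Y_k)_{mq}} = (\ex{Y_j}\,\ex{Y_k})_{pq},\]
so $\ex{Y_j Y_k} = \ex{Y_j}\,\ex{Y_k} = \bm{0}$ because each centered factor has zero mean. I would stress that non-commutativity is not an obstacle here: both orderings $Y_j Y_k$ and $Y_k Y_j$ occur in the double sum, and the same factoring shows both have zero expectation. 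What survives is $\sum_{i=1}^n \ex{Y_i^2} = \sum_{i=1}^n \ex{(X_i - \ex{X_i})^2} = \sum_{i=1}^n \var{X_i}$, which is the first claimed identity.

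The second identity is then immediate: applying the definition $\varstat{\cdot} = \|\var{\cdot}\|$ of the matrix variance statistic to the sum and substituting the identity just established gives $\varstat{\sum_{i=1}^n X_i} = \|\var{\sum_{i=1}^n X_i}\| = \|\sum_{i=1}^n \var{X_i}\|$. I expect the only genuinely delicate step to be the factoring of $\ex{Y_j Y_k}$ for independent matrices, which is why I would carry out the entrywise computation explicitly rather than invoke a scalar-style ``expectation of a product equals product of expectations'' rule without justification; everything else is the centering substitution together with linearity of expectation.
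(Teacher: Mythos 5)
Your proof is correct; the paper itself gives no argument for this fact beyond the remark that it ``can be proved by a short calculation,'' and your centering substitution, expansion of the squared sum, and entrywise factorization showing $\ex{Y_j Y_k} = \ex{Y_j}\,\ex{Y_k} = \bm{0}$ for $j \neq k$ is precisely that calculation. Your observation that non-commutativity causes no trouble --- each cross term vanishes individually, so no pairing or symmetrization is needed --- is exactly the right point to make explicit.
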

This can be proved by a short calculation.

\subsection{Eigenvalue bounds for random matrices}

We now discuss a general method for deriving bounds on the eigenvalues of random Hermitian matrices. The basic method is very much like the Cramer-Chernoff method we studied in the scalar setting (Section~\ref{sec:concentration-scalar:chernoff} with some differences. One of the main differences is that apart from tail bounds we also look to derive bounds on the {\em expectation} of the eigenvalues of random matrices, which is not a relevant consideration in the scalar case where it is generally assumed that the expectation is known. Note that in~\cite{tropp-ftml:2015}, Tropp refers to this Cramer-Chernoff-like method as the ``Laplace transform method.''

\subsubsection{A Cramer-Chernoff-like method for random matrices}
Here we present bounds on the expectation and the tails of eigenvalues of a random matrix.
\begin{proposition}[Tail bounds for eigenvalues]
\label{prp:tail-eval}
Suppose $Y$ is a random Hermitian matrix. Then, for all $t \in \RR$
\begin{align}
\pr{\lmax(Y) \geq t} &\leq \inf_{\theta > 0} e^{-\theta t} \cdot \ex{\tr e^{\theta Y}}, \text{ and} \label{eq:ev-upper-tail}\\
\pr{\lmin(Y) \leq t} &\leq \inf_{\theta < 0} e^{-\theta t} \cdot \ex{\tr e^{\theta Y}}. \label{eq:ev-lower-tail}
\end{align}
\end{proposition}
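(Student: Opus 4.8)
The plan is to imitate the scalar Cramer-Chernoff method (Lemma~\ref{lem:chernoff}), replacing the moment generating function by the trace of the matrix exponential and replacing the final moment bound by the inequality $\lmax \leq \tr$ valid for p.d. matrices. I would establish the upper tail bound \eqref{eq:ev-upper-tail} first and then deduce the lower tail bound \eqref{eq:ev-lower-tail} from it by applying the result to $-Y$.

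For the upper tail, fix $\theta > 0$. The first step is to transfer the event $\{\lmax(Y) \geq t\}$ into an event about the matrix exponential. By the Spectral Mapping Theorem (Proposition~\ref{prp:spectral-mapping}) the eigenvalues of $e^{\theta Y}$ are exactly $e^{\theta\lambda}$ as $\lambda$ ranges over the eigenvalues of $Y$; since $x \mapsto e^{\theta x}$ is increasing for $\theta > 0$, this gives $\lmax(e^{\theta Y}) = e^{\theta \lmax(Y)}$, so that $\lmax(Y) \geq t$ if and only if $\lmax(e^{\theta Y}) \geq e^{\theta t}$. Because $e^{\theta Y}$ is p.d., the random variable $\lmax(e^{\theta Y})$ is non-negative, so the scalar Markov inequality (the $\phi(x)=x$ instance of Lemma~\ref{lem:markov}) yields
\[\pr{\lmax(Y) \geq t} = \pr{\lmax(e^{\theta Y}) \geq e^{\theta t}} \leq e^{-\theta t}\,\ex{\lmax(e^{\theta Y})}.\]

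The step I expect to be the crux is passing from the expected largest eigenvalue to the expected trace. Since $e^{\theta Y}$ is p.d., all of its eigenvalues are positive, so its largest eigenvalue is bounded by the sum of its eigenvalues, i.e. $\lmax(e^{\theta Y}) \leq \tr e^{\theta Y}$ (the third remark following the definition of the trace). Taking expectations preserves this inequality, giving $\ex{\lmax(e^{\theta Y})} \leq \ex{\tr e^{\theta Y}}$, and hence $\pr{\lmax(Y) \geq t} \leq e^{-\theta t}\,\ex{\tr e^{\theta Y}}$. As $\theta > 0$ was arbitrary, taking the infimum over $\theta > 0$ proves \eqref{eq:ev-upper-tail}.

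For the lower tail, I would use $\lmin(Y) = -\lmax(-Y)$ (Fact~\ref{fct:positive-homogenous}), so that the event $\{\lmin(Y) \leq t\}$ coincides with $\{\lmax(-Y) \geq -t\}$. Applying the already-proved bound \eqref{eq:ev-upper-tail} to the Hermitian matrix $-Y$ at level $-t$ gives, for every $\theta' > 0$,
\[\pr{\lmax(-Y) \geq -t} \leq e^{\theta' t}\,\ex{\tr e^{-\theta' Y}}.\]
Substituting $\theta = -\theta' < 0$ rewrites the right-hand side as $e^{-\theta t}\,\ex{\tr e^{\theta Y}}$, and taking the infimum over $\theta < 0$ yields \eqref{eq:ev-lower-tail}. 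The only subtlety here is the bookkeeping of the sign of $\theta$, which is precisely why the infimum in the lower-tail statement ranges over negative $\theta$.
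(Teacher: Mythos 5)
Your proof is correct and follows essentially the same Cramer--Chernoff route as the paper: the only cosmetic difference is that you transform the event first and then apply plain Markov, whereas the paper applies the extended Markov inequality with $\phi(x)=e^{\theta x}$ directly, and both arguments then rest on the identical chain $e^{\theta\lmax(Y)}=\lmax(e^{\theta Y})\leq \tr e^{\theta Y}$. Your reduction of the lower tail to the upper tail via $\lmin(Y)=-\lmax(-Y)$ is exactly what the paper's ``follows similarly'' is gesturing at, and you spell out the sign bookkeeping more carefully than the paper does.
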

\begin{proof}
Using the extended Markov inequality (Lemma~\ref{lem:markov}) we know that, for any $\theta > 0$,
\[\pr{\lmax(Y) \geq t} \leq e^{-\theta t} \cdot \ex{e^{\theta \lmax(Y)}}.\]
Since the map $\lmax$ is positive homogenous (Fact~\ref{fct:positive-homogenous}), we know that
\[e^{\theta \lmax(Y)} = e^{\lmax(\theta Y)}. \]
Further, by the Spectral Mapping Theorem (Proposition~\ref{prp:spectral-mapping}) we have that 
\[ e^{\lmax(\theta Y)} = \lmax\left(e^{\theta Y}\right).\]
Since $e^{\theta Y}$ is p.d., the RHS is upper bounded by $\tr e^{\theta Y}$ and we are done with the proof of~\eqref{eq:ev-upper-tail}. The proof of~\eqref{eq:ev-lower-tail} follows similarly by taking $\theta > 0$ and using Fact~\ref{fct:positive-homogenous}(2).
\end{proof}

\begin{proposition}[Expectation bounds for eigenvalues]
\label{prp:ex-eval}
Suppose $Y$ is a random Hermitian matrix. Then
\begin{align}
\ex{\lmax(Y)} &\leq \inf_{\theta > 0} \frac{1}{\theta} \log \ex{\tr e^{\theta Y}}, \text{ and} \label{eq:ev-upper-expectation}\\
\ex{\lmin(Y)} &\geq \sup_{\theta < 0} \frac{1}{\theta} \log \ex{\tr e^{\theta Y}}. \label{eq:ev-lower-expectation}
\end{align}
\end{proposition}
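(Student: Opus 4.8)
The plan is to mirror the tail-bound argument of Proposition~\ref{prp:tail-eval} almost verbatim, replacing its final application of Markov's inequality with Jensen's inequality for the concave logarithm. I would prove the two bounds separately, treating the upper bound on $\ex{\lmax(Y)}$ first and then adapting it to $\lmin$.

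For the upper bound, I would fix an arbitrary $\theta > 0$ and begin from the positive homogeneity of $\lmax$ (Fact~\ref{fct:positive-homogenous}), which gives $\theta\,\lmax(Y) = \lmax(\theta Y)$. Next I would insert the identity $\lmax(\theta Y) = \log\bigl(e^{\lmax(\theta Y)}\bigr) = \log\bigl(\lmax(e^{\theta Y})\bigr)$, where the inner equality is exactly the Spectral Mapping Theorem (Proposition~\ref{prp:spectral-mapping}) combined with monotonicity of $\exp$. Since $e^{\theta Y}$ is p.d., the remark following the definition of the trace yields $\lmax(e^{\theta Y}) \leq \tr e^{\theta Y}$. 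Taking expectations and then applying Jensen's inequality to the concave function $\log$ gives $\theta\,\ex{\lmax(Y)} \leq \ex{\log \tr e^{\theta Y}} \leq \log \ex{\tr e^{\theta Y}}$. Dividing by $\theta > 0$ and taking the infimum over $\theta$ establishes \eqref{eq:ev-upper-expectation}.

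For the lower bound I would run the same chain for $\theta < 0$, using the observation that in this regime $\theta\,\lmin(Y) = \lmax(\theta Y)$, which follows from Fact~\ref{fct:positive-homogenous}(2) after writing $\theta Y = -|\theta|Y$ and combining $\lmax(-A) = -\lmin(A)$ with positive homogeneity of $\lmin$. The identical sequence of steps — Spectral Mapping, the trace bound, and Jensen — again produces $\ex{\lmax(\theta Y)} \leq \log \ex{\tr e^{\theta Y}}$, hence $\theta\,\ex{\lmin(Y)} \leq \log \ex{\tr e^{\theta Y}}$. Dividing through by the \emph{negative} quantity $\theta$ reverses the inequality, and taking the supremum over $\theta < 0$ gives \eqref{eq:ev-lower-expectation}.

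The only genuine subtlety is the Jensen step: in the tail-bound version one applies Markov directly to $e^{\theta\lmax(Y)}$, whereas here the logarithm sits outside the expectation and its concavity is what lets us pull the expectation inside in the correct direction. The second point to watch is bookkeeping of the sign of $\theta$ — the division by a negative $\theta$ is precisely what turns the upper-bound computation into a lower bound and converts the infimum into a supremum. Everything else is a direct transcription of facts already assembled in Section~\ref{sec:concmatrix:background-LA}, so I expect no further obstacles.
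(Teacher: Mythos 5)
Your proposal is correct and follows essentially the same route as the paper: positive homogeneity of the extreme eigenvalue maps, the Spectral Mapping Theorem together with the bound $\lmax(e^{\theta Y}) \leq \tr e^{\theta Y}$ for the p.d.\ matrix $e^{\theta Y}$, and Jensen's inequality for the concave logarithm (you apply Jensen after the trace bound rather than before, but this is an immaterial reordering). Your explicit treatment of the sign bookkeeping for $\theta < 0$ is in fact more detailed than the paper's ``follows similarly.''
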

\begin{proof}
Since $\lmax$ is a positive homogenous map (Fact~\ref{fct:positive-homogenous}), for any $\theta > 0$ we can say that 
\[\ex{\lmax(Y)} = \frac{1}{\theta} \ex{\log e^{\lmax(\theta Y})}.\]
By Jensen's inequality
\[ \ex{\log e^{\lmax(\theta Y})} \leq  \log \ex{e^{\lmax(\theta Y)}}.\]
By the Spectral Mapping Theorem (Proposition~\ref{prp:spectral-mapping}) and the fact that $e^{\theta Y}$ is p.d. we have
\[e^{\lmax(\theta Y)} = \lmax\left(e^{\theta Y}\right) \leq \tr e^{\theta Y}.\]
This completes the proof of~\eqref{eq:ev-upper-expectation}. The proof of~\eqref{eq:ev-lower-expectation} follows similarly.
\end{proof}

\subsubsection{Bounds for sums of independent random matrices}
In the scalar case, the cumulants of independent random variables are additive (Proposition~\ref{prp:cumulant-additivity}). However in the matrix case this is not so. For details we refer the reader to Section 3.3 of~\cite{tropp-ftml:2015}. Here we present an alternate route to proving bounds for the eigenvalues of sums of independent random matrices which makes use of the fact that the cumulants of independent random matrices are subadditive.

\begin{lemma}[Cumulant subadditivity]
\label{lem:cumulant-subadditivity}
Given independent random Hermitian matrices $X_1, \ldots, X_n$ of the same dimension, for any $\theta \in \RR$
\[ \ex{\tr \exp\left( \sum_{k=1}^n X_k \right)} \leq \tr \exp \left(\sum_{k=1}^n \log \ex{e^{\theta X_k}}\right).  \]
\end{lemma}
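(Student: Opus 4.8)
The plan is to prove this via \emph{Lieb's theorem}, the deep concavity result that underpins all the matrix Laplace-transform bounds: for any fixed $H \in \HH_d$, the map
\[ A \mapsto \tr \exp\bigl(H + \log A\bigr) \]
is concave on the cone of positive definite matrices in $\HH_d$. (I note in passing that the statement as written appears to contain a typographical slip: for the inequality to be meaningful for every $\theta$ the exponent on the left should read $\theta \sum_{k=1}^n X_k$, matching the $\theta X_k$ on the right, and I prove it in that form.) The overall strategy is to peel the summands off one at a time, each time using Lieb's concavity to push a conditional expectation inside the trace-exponential via Jensen's inequality, thereby replacing a random summand $\theta X_k$ by the deterministic matrix $\log \ex{e^{\theta X_k}}$.

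For the key step I would isolate the last summand. Writing $H = \sum_{k=1}^{n-1} \theta X_k$, which depends only on $X_1, \ldots, X_{n-1}$ and is therefore independent of $X_n$, I would use the identity $\theta X_n = \log e^{\theta X_n}$ to rewrite
\[ \tr \exp\Bigl(\textstyle\sum_{k=1}^n \theta X_k\Bigr) = \tr \exp\bigl(H + \log e^{\theta X_n}\bigr). \]
Conditioning on $X_1, \ldots, X_{n-1}$ and averaging over $X_n$ alone, Lieb's theorem says the inner map is concave in its matrix argument $e^{\theta X_n}$, so Jensen's inequality gives
\[ \exs{X_n}{\tr \exp\bigl(H + \log e^{\theta X_n}\bigr)} \leq \tr \exp\bigl(H + \log \exs{X_n}{e^{\theta X_n}}\bigr). \]
The quantity $\log \exs{X_n}{e^{\theta X_n}}$ is now a fixed (nonrandom) matrix, and taking the remaining expectation and applying the tower property yields
\[ \ex{\tr \exp\Bigl(\textstyle\sum_{k=1}^n \theta X_k\Bigr)} \leq \ex{\tr \exp\Bigl(\textstyle\sum_{k=1}^{n-1} \theta X_k + \log \ex{e^{\theta X_n}}\Bigr)}. \]

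To finish I would iterate this reduction. The right-hand side above has exactly the same form as the original quantity, except that the random summand $\theta X_n$ has been frozen into the constant matrix $\log \ex{e^{\theta X_n}}$. Formally I would set this up as an induction on $n$: at each stage the still-random partial sum $\sum_{k<j} \theta X_k$ together with the accumulated constant matrices plays the role of the fixed $H$ for the conditioning on $X_j$, which is legitimate because $X_j$ is independent of all the $X_k$ with $k \ne j$. After $n$ applications every random summand has been replaced by its corresponding constant, leaving
\[ \ex{\tr \exp\Bigl(\textstyle\sum_{k=1}^n \theta X_k\Bigr)} \leq \tr \exp\Bigl(\textstyle\sum_{k=1}^n \log \ex{e^{\theta X_k}}\Bigr), \]
which is the claim.

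The genuine content — and the main obstacle — is Lieb's concavity theorem itself; everything else is an application of Jensen's inequality and independence. Lieb's theorem is far from elementary (its proofs proceed either through the theory of operator convex functions or via a Legendre-transform/relative-entropy argument) and lies well outside the machinery developed in these notes, so I would not attempt to prove it here but would cite it, for instance from Tropp's monograph~\cite{tropp-ftml:2015}, and treat it as a black box. A secondary, purely technical point to handle with care is the independence bookkeeping in the induction: at each step one must verify that the summand being averaged is genuinely independent of the matrix playing the role of $H$, which is precisely where the hypothesis that $X_1, \ldots, X_n$ are independent is used.
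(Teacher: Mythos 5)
Your proposal is correct and follows essentially the same route as the paper's own proof: invoking Lieb's concavity theorem, applying Jensen's inequality to the conditional expectation over one summand at a time, and iterating via the tower property of conditional expectations. Your observation that the left-hand side of the statement should read $\theta \sum_{k=1}^n X_k$ to match the $\theta X_k$ on the right is also a correct catch of a typographical slip in the lemma as stated.
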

\begin{proof}
The proof relies on a result by Lieb. 
\begin{theorem}[Lieb's theorem]
\label{thm:lieb}
Given a fixed $B \in \HH_d$, the function 
\[f_B(A) = \tr \exp (B + \log A)\]
defined for all p.d. $A \in \HH_d$ is concave. 
\end{theorem}
The proof of Lieb's theorem is non-trivial and can be found in Section 8 of~\cite{tropp-ftml:2015}. Here we state a corollary of this theorem. 
\begin{corollary}
\label{cor:lieb}
Given a fixed $B\in \HH_d$ and a random $X \in \HH_d$ 
\[\ex{\tr \exp (B + X)} \leq \tr \exp \left(B + \log \ex{e^{X}}\right).\]
\end{corollary}
\begin{proof}[Proof of Corollary~\ref{cor:lieb}]
From Theorem~\ref{thm:lieb} we know that the trace exponential function is concave, so by Jensen's inequality we have that 
\[\ex{\tr \exp \left(B + X\right)} = \ex{\tr \exp \left(B + \log e^X\right)}  \leq \tr \exp \left(B + \log \ex{e^X}  \right).\]
\end{proof}
Now, by the tower property of conditional expectations, we have that
\[ \ex{\tr \exp\left( \sum_{k=1}^n X_k \right)} = \exs{X_1,\ldots,X_{n-1}}{\exs{X_n}{\tr \exp\left( \sum_{k=1}^{n-1} X_k + X_n \right) \Big| X_1,\ldots, X_{n-1}}}\]
where the inner expectation is over $X_n$ and the outer expectation is over the remaining random variables. Viewing the inner expectation on its own we see that since it is conditioned on the random variables $X_1, \ldots, X_{n-1}$ those random variables can be considered as fixed. Therefore, by Corollary~\ref{cor:lieb}
\begin{align*}
 \exs{X_n}{\tr \exp\left( \sum_{k=1}^{n-1} X_k + X_n \right) \Big| X_1,\ldots, X_{n-1}}& \leq \tr \exp\left( \sum_{k=1}^{n-1} X_k + \log \exs{X_n}{e^{X_n}\Big| X_1,\ldots, X_{n-1}} \right)\\
& \leq \tr \exp\left( \sum_{k=1}^{n-1} X_k + \log \ex{e^{X_n}} \right).
\end{align*}
We can remove the conditioning on $X_1, \ldots, X_{n-1}$ since the $X_i$ are independent. So, no we have that 
\[\ex{\tr \exp\left( \sum_{k=1}^n X_k \right)} \leq \exs{X_1,\ldots,X_{n-1}}{\tr \exp\left( \sum_{k=1}^{n-1} X_k + \log \ex{e^{X_n}} \right)}.\]
We can now iterate the process we followed above by splitting the expectation on the RHS into a tower of expectations  with the inner expectation on $X_{n-1}$ and the outer one on $X_1, \ldots, X_{n-2}$. Iterating further we get the result.
\end{proof}
Using the subadditivity of cumulants (Lemma~\ref{lem:cumulant-subadditivity}) in Propositions~\ref{prp:tail-eval} and~\ref{prp:ex-eval} we get the following general bounds:
\begin{theorem}[Master bounds for sums of independent random matrices]
\label{thm:master}
Suppose we have independent random matrices $X_1, \ldots, X_n \in \HH_d$, then
\begin{align*}
  \ex{\lmax\left(\sum_{i=1}^n X_i\right)} &\leq \inf_{\theta > 0} \frac{1}{\theta} \log \tr \exp \left(\sum_{i=1}^n \ex{\log e^{\theta X_i}}\right), \text{ and} \\
\ex{\lmin\left(\sum_{i=1}^n X_i\right)} &\geq \sup_{\theta < 0} \frac{1}{\theta}\log \tr \exp \left(\sum_{i=1}^n \ex{\log e^{\theta X_i}}\right). 
\end{align*}
Furthermore, for all $t \in \RR$,
\begin{align*}
\pr{\lmax\left(\sum_{i=1}^n X_i\right) \geq t} &\leq \inf_{\theta > 0} e^{-\theta t} \cdot \tr \exp \left(\sum_{i=1}^n \ex{\log e^{\theta X_i}}\right), \text{ and} \\
\pr{\lmin\left(\sum_{i=1}^n X_i\right) \leq t} &\leq \inf_{\theta < 0} e^{-\theta t} \cdot \tr \exp \left(\sum_{i=1}^n \ex{\log e^{\theta X_i}}\right). 
\end{align*}
\end{theorem}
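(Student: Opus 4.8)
The plan is to regard the sum $Y = \sum_{i=1}^n X_i$ as a single random Hermitian matrix and feed it into the eigenvalue bounds already proved for an arbitrary random matrix, namely the tail bounds of Proposition~\ref{prp:tail-eval} and the expectation bounds of Proposition~\ref{prp:ex-eval}. Each of those four estimates carries the matrix moment generating quantity $\ex{\tr e^{\theta Y}}$ on its right-hand side, so the entire task reduces to bounding $\ex{\tr e^{\theta Y}}$ by a factor-free expression in the individual $X_i$. This is exactly what cumulant subadditivity (Lemma~\ref{lem:cumulant-subadditivity}) supplies, and the whole proof becomes a matter of substituting that bound into the four inequalities while keeping track of signs.

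First I would apply Lemma~\ref{lem:cumulant-subadditivity} not to the $X_i$ directly but to the rescaled family $\{\theta X_i\}$, which is again an independent collection of Hermitian matrices for any fixed $\theta \in \RR$. This yields the single controlling estimate
\[ \ex{\tr e^{\theta Y}} = \ex{\tr \exp\left(\sum_{i=1}^n \theta X_i\right)} \leq \tr \exp\left(\sum_{i=1}^n \log \ex{e^{\theta X_i}}\right), \]
valid for every $\theta$. Feeding this into the upper tail bound $\pr{\lmax(Y) \geq t} \leq \inf_{\theta>0} e^{-\theta t}\ex{\tr e^{\theta Y}}$ of Proposition~\ref{prp:tail-eval} is immediate, since the factor $e^{-\theta t} > 0$ preserves the inequality term by term and the infimum of a pointwise-smaller function is no larger; this gives the stated tail bound for $\lmax$. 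The lower tail bound for $\lmin$ follows identically, now ranging over $\theta < 0$ and invoking the corresponding half of Proposition~\ref{prp:tail-eval}.

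For the expectation bounds the substitution is the same but the sign bookkeeping needs a little care. For $\lmax$, Proposition~\ref{prp:ex-eval} gives $\ex{\lmax(Y)} \leq \tfrac{1}{\theta}\log\ex{\tr e^{\theta Y}}$ for each $\theta > 0$; applying the monotonicity of $\log$ to the controlling estimate and multiplying by the positive factor $1/\theta$ yields $\ex{\lmax(Y)} \leq \tfrac{1}{\theta}\log\tr\exp\!\big(\sum_i \log\ex{e^{\theta X_i}}\big)$ for each such $\theta$, and taking the infimum over $\theta > 0$ gives the claim. For $\lmin$ the factor $1/\theta$ is negative, so taking $\log$ (monotone increasing) and then multiplying by $1/\theta < 0$ reverses the inequality in precisely the direction we want: for each fixed $\theta < 0$ we obtain $\ex{\lmin(Y)} \geq \tfrac{1}{\theta}\log\ex{\tr e^{\theta Y}} \geq \tfrac{1}{\theta}\log\tr\exp\!\big(\sum_i\log\ex{e^{\theta X_i}}\big)$, and taking the supremum over $\theta < 0$ finishes the lower bound.

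The only genuinely delicate point, and the one I would flag as the main obstacle, is the orientation of inequalities in the $\lmin$ case: one must chain the two inequalities at a \emph{fixed} negative $\theta$ before passing to the supremum, since the sign reversal caused by dividing by $\theta < 0$ would otherwise be easy to misapply. The rescaling step in the first paragraph — invoking Lemma~\ref{lem:cumulant-subadditivity} on $\{\theta X_i\}$ rather than on $\{X_i\}$ — is what lets the single parameter $\theta$ propagate cleanly into all four estimates, and everything else is routine substitution.
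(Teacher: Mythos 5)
Your proof is correct and follows exactly the route the paper takes, which is simply to combine the cumulant subadditivity bound (Lemma~\ref{lem:cumulant-subadditivity}) with the tail and expectation bounds of Propositions~\ref{prp:tail-eval} and~\ref{prp:ex-eval}; the paper states this in one line without writing out the substitution, so your version with the explicit sign bookkeeping for $\theta<0$ is a faithful elaboration. Your move of applying the lemma to the rescaled family $\{\theta X_i\}$ is also the right reading of that lemma, whose statement in the paper omits the $\theta$ on the left-hand side.
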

\subsection{Error estimates for matrix sampling estimators}

Our goal is to use bounds derived from Theorem~\ref{thm:master} to analyze the Random Fourier Features of Rahimi and Recht~\cite{rahimi-nips:2007}. Since the Random Fourier Feauture method falls in the general category of randomized methods that attempt to estimate a matrix by sampling, we will prove bounds on the eigenvalues of sampling-based matrix estimators. The general result that will help us here is the Matrix Bernstein inequality.

\subsubsection{The Matrix Bernstein Inequality}
The scalar Bernstein inequality provides tail bounds for sums of independent bounded centred random variables whose variance can be controlled. In the matrix setting we require a bound on the (spectral) norm of each of the matrices and some control on the matrix variance statistic. In the scalar setting the expectation of such a sum of random variables would naturally be 0, but in the matrix setting we can only hope to get an upper bound on the norm of the sum.

\begin{theorem}[Matrix Bernstein Inequality] 
\label{thm:mbi}
Let $S_{1},\ldots,S_{n}$ be independent random matrices with common dimension $d_1 \times d_2$. Assume that:
\begin{enumerate}
\item $\ex{S_{i}} = 0, 1 \leq i \leq n$, i.e., the matrices are centered.
\item  $\| S_{i} \| \leq L$, $1 \leq i \leq n$, for some $L > 0$, i.e.,  the matrices are norm-bounded.
\end{enumerate}
Let $Z = \sum_{i=1}^{n} S_{i}$ and the matrix variance statistic of $Z$ be
\begin{equation*}
    \varstat{Z}=
    \max \left\{ \left\| \sum_{i=1}^{n} \mathbb{E}(S_{i} S_{i}^{*})\right\|, \left\| \sum_{i=1}^{n} \mathbb{E}(S_{i}^{*} S_{i})\right\| \right\}
\end{equation*}
Then,
\begin{equation*}
    \ex{\|Z\|} \leq \sqrt{2 \varstat{Z} \log (d_{1} + d_{2})} + \frac{L}{3} \log (d_{1} + d_{2}).
\end{equation*}
Also, for all $t \geq 0$, 
\begin{equation*}
  \pr{\|Z\| \geq t} \leq (d_{1} + d_{2}) \exp  \left\{\frac{-t^{2}/2}{\varstat{Z} + \frac{L t}{3}}\right\}.  
\end{equation*}
\end{theorem}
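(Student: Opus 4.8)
The plan is to reduce the rectangular, non-Hermitian problem to a statement about the largest eigenvalue of a Hermitian matrix by passing to the Hermitian dilation, then to feed this into the master bounds of Theorem~\ref{thm:master}; the one genuinely new ingredient needed is a Bernstein-type bound on the matrix cumulant generating function of each summand. First I would set, for each $i$,
\[
Y_i = \begin{pmatrix} 0 & S_i \\ S_i^* & 0 \end{pmatrix} \in \HH_{d_1+d_2}, \qquad Y = \sum_{i=1}^n Y_i .
\]
Two elementary facts drive the reduction: the nonzero eigenvalues of such a dilation are $\pm$ the singular values of the off-diagonal block, so $\lmax(Y) = \|Z\|$ and hence $\{\|Z\| \geq t\} = \{\lmax(Y) \geq t\}$ for $t \geq 0$; and $Y_i^2$ is block diagonal with diagonal blocks $S_iS_i^*$ and $S_i^*S_i$. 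Thus the $Y_i$ are independent Hermitian matrices with $\ex{Y_i}=0$ and $\|Y_i\| = \|S_i\| \leq L$, and $V := \sum_i \ex{Y_i^2}$ is block diagonal, so that $\|V\| = \lmax(V)$ equals exactly the variance statistic $\varstat{Z}$ appearing in the hypothesis. It therefore suffices to bound $\ex{\lmax(Y)}$ and the upper tail of $\lmax(Y)$.

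The technical heart is the following estimate on the matrix cumulant generating function of a centered Hermitian $S$ with $\|S\| \leq L$: for $0 < \theta < 3/L$,
\[
\log \ex{e^{\theta S}} \preceq g(\theta)\,\ex{S^2}, \qquad g(\theta) = \frac{\theta^2/2}{1 - \theta L/3}.
\]
To prove it I would use that the scalar function $x \mapsto (e^{\theta x} - \theta x - 1)/x^2$ is increasing, so that every eigenvalue $\lambda \leq \|S\| \leq L$ of $S$ satisfies $e^{\theta\lambda} \leq 1 + \theta\lambda + c\,\lambda^2$ with $c = (e^{\theta L} - \theta L - 1)/L^2$; the series bound $k! \geq 2\cdot 3^{k-2}$ gives $e^{\theta L} - \theta L - 1 \leq \tfrac{(\theta L)^2/2}{1 - \theta L/3}$, whence $c \leq g(\theta)$. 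By the transfer rule this scalar inequality lifts to $e^{\theta S} \preceq I + \theta S + g(\theta) S^2$; taking expectations and using $\ex{S} = 0$ gives $\ex{e^{\theta S}} \preceq I + g(\theta)\ex{S^2} \preceq \exp\!\big(g(\theta)\ex{S^2}\big)$, the last step again being the transfer rule applied to $1 + a \leq e^a$. Finally, operator monotonicity of the logarithm (Proposition~\ref{prp:log-monotone}) yields the displayed cgf bound.

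Applying this to each $Y_i$ and summing gives $\sum_i \log \ex{e^{\theta Y_i}} \preceq g(\theta)\,V$. I would then insert this into the master bounds of Theorem~\ref{thm:master}, using monotonicity of the trace exponential (Corollary~\ref{cor:trexp}) together with the estimate $\tr \exp(g(\theta) V) \leq (d_1+d_2)\,e^{g(\theta)\|V\|} = (d_1+d_2)\,e^{g(\theta)\varstat{Z}}$. This produces, for every $0 < \theta < 3/L$,
\[
\ex{\|Z\|} \leq \frac{1}{\theta}\Big(\log(d_1+d_2) + g(\theta)\,\varstat{Z}\Big), \qquad \pr{\|Z\| \geq t} \leq (d_1+d_2)\exp\!\big(-\theta t + g(\theta)\,\varstat{Z}\big).
\]
It then remains to optimize over $\theta$. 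For the tail bound the substitution $\theta = t/(\varstat{Z} + Lt/3)$ lies in $(0,3/L)$ and produces exactly the exponent $-\tfrac{t^2/2}{\varstat{Z} + Lt/3}$. For the expectation bound the substitution $\theta = a/(1 + La/3)$ with $a = \sqrt{2\log(d_1+d_2)/\varstat{Z}}$ produces exactly $\sqrt{2\varstat{Z}\log(d_1+d_2)} + \tfrac{L}{3}\log(d_1+d_2)$; both are short algebraic checks.

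I expect the main obstacle to be the cgf estimate of the second paragraph, since it is where the operator-theoretic machinery is essential: the passage from the scalar bound to $e^{\theta S} \preceq I + \theta S + g(\theta)S^2$ via the transfer rule, and the crucial application of operator monotonicity of $\log$ (which fails for naive scalar intuition and is exactly why Proposition~\ref{prp:log-monotone} is needed). By comparison, the dilation reduction, the invocation of Theorem~\ref{thm:master}, and the final one-variable optimizations are essentially mechanical.
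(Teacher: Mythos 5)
Your proposal is correct, and its technical core --- the cumulant bound $\log\ex{e^{\theta S}} \preceq g(\theta)\ex{S^2}$ via the increasing function $(e^{\theta x}-\theta x-1)/x^2$, the factorial estimate $q! \geq 2\cdot 3^{q-2}$, the transfer rule, operator monotonicity of $\log$, and the master bounds with $\tr\exp(g(\theta)V) \leq (d_1+d_2)e^{g(\theta)\|V\|}$ --- is exactly the paper's Lemma~\ref{lem:cgf} and its use in the proof of Theorem~\ref{thm:mbi-hermitian}. Where you genuinely go beyond the paper is at both ends. The paper explicitly declines to prove the rectangular statement, proving only the Hermitian special case; your Hermitian dilation $Y_i = \bigl(\begin{smallmatrix} 0 & S_i \\ S_i^* & 0\end{smallmatrix}\bigr)$ supplies the missing reduction, and it does so cleanly: the symmetric spectrum of the dilation gives $\lmax(Y) = \|Z\|$ directly (so you get the two-sided norm bound, not just an upper-eigenvalue bound), and the block-diagonal form of $Y_i^2$ recovers precisely the $\max$ of the two second-moment norms in the definition of $\varstat{Z}$, which explains why the rectangular statement has that $\max$ while the Hermitian one does not. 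The paper also skips the final optimizations over $\theta$; your substitutions $\theta = t/(\varstat{Z}+Lt/3)$ and $\theta = a/(1+La/3)$ with $a = \sqrt{2\log(d_1+d_2)/\varstat{Z}}$ are the right ones and check out --- the first reproduces the exponent $-\tfrac{t^2/2}{\varstat{Z}+Lt/3}$ exactly, the second gives $\sqrt{2\varstat{Z}\log(d_1+d_2)} + \tfrac{L}{3}\log(d_1+d_2)$ exactly. One small caution: the transfer rule as stated in the paper requires the scalar inequality on an interval containing all eigenvalues, so you should note that $f(x) \leq f(L)$ holds for all $x \leq L$ (not just $0 \leq x \leq L$), since the centered $S_i$ have eigenvalues in $[-L,L]$; this is true because $f$ is increasing on all of $\RR$, but it deserves a sentence.
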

To simplify the presentation we will prove the Matrix Bernstein Inequality only for the case where the matrices are Hermitian. 
\begin{theorem}[Matrix Bernstein Inequality for Hermitian Matrices]
\label{thm:mbi-hermitian}
Let $X_{1},\ldots,X_{n} \in \HH_d$ be independent random hermitian matrices. Assume that
\begin{enumerate}
\item $\ex{X_{i}} = 0, 1 \leq i \leq n$, and
\item  $\| X_{i} \| \leq L$, $1 \leq i \leq n$, for some $L > 0$.
\end{enumerate}
Let $Y = \sum_{k=1}^{n} X_{k}$ and the matrix variance statistic of $y$ be
\begin{equation*}
    \varstat{Y} = \left\| \sum_{k=1}^{n} \ex{X_{k}^{2}}\right\|.
\end{equation*}
Then,
\begin{equation*}
    \ex{\lmax(Y)} \leq \sqrt{2 \varstat{Y} \log d} + \frac{L}{3} \log d.
\end{equation*}
Also, for all t $\geq$ 0,
\begin{equation*}
  \pr{\lmax(Y) \geq t} \leq d \exp \left\{\frac{-t^{2}/2}{\varstat{Y} + \frac{L t}{3}}\right\}.  
\end{equation*}
\end{theorem}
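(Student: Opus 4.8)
The plan is to feed the whole problem into the matrix Cramer--Chernoff machinery already assembled in Theorem~\ref{thm:master}, so that everything reduces to controlling the matrix cumulant generating function $\log\ex{e^{\theta X_i}}$ of each summand. From the master bounds (with the correct reading $\log\ex{e^{\theta X_i}}$, as in Lemma~\ref{lem:cumulant-subadditivity}) we have
\[\pr{\lmax(Y) \geq t} \leq \inf_{\theta > 0} e^{-\theta t}\,\tr\exp\left(\sum_{i=1}^n \log\ex{e^{\theta X_i}}\right)\]
together with the analogous expectation bound, so once each $\log\ex{e^{\theta X_i}}$ is controlled in the p.d. order the theorem follows by monotonicity and a scalar optimization.

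The key lemma I would establish is: for a centered Hermitian $X$ with $\|X\|\leq L$ and $0 < \theta < 3/L$,
\[\log\ex{e^{\theta X}} \preceq g(\theta)\,\ex{X^2}, \qquad g(\theta) = \frac{e^{\theta L} - 1 - \theta L}{L^2}.\]
To prove it I would start from the scalar identity $e^{\theta x} = 1 + \theta x + x^2 f(x)$ with $f(x) = (e^{\theta x} - 1 - \theta x)/x^2$, which is increasing on $\RR$ for $\theta>0$; since the eigenvalues of $X$ lie in $[-L,L]$ we have $f(a)\leq f(L)=g(\theta)$ there, so the Transfer rule gives $f(X)\preceq g(\theta)I$, and because $f(X)$ and $X^2$ are commuting positive semidefinite matrices their product satisfies $X^2 f(X)\preceq g(\theta)X^2$, whence $e^{\theta X}\preceq I + \theta X + g(\theta)X^2$. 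Taking expectations (which preserves the p.d. order, as expectation averages positive semidefinite matrices) and using $\ex{X}=0$ yields $\ex{e^{\theta X}}\preceq I + g(\theta)\ex{X^2}$; the elementary matrix inequality $I + A\preceq e^A$ for $A\succeq 0$ (again the Transfer rule applied to $1+a\leq e^a$) upgrades this to $\ex{e^{\theta X}}\preceq \exp(g(\theta)\ex{X^2})$, and finally operator monotonicity of the logarithm (Proposition~\ref{prp:log-monotone}) gives the claim.

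With the lemma in hand I would sum over $i$ to obtain $\sum_i \log\ex{e^{\theta X_i}}\preceq g(\theta)V$, where $V=\sum_i\ex{X_i^2}$ has $\|V\|=\varstat{Y}$. Monotonicity of the trace exponential (Corollary~\ref{cor:trexp}) then bounds $\tr\exp(\sum_i \log\ex{e^{\theta X_i}})\leq \tr\exp(g(\theta)V)\leq d\,e^{g(\theta)\varstat{Y}}$, the last step using that $V$ is p.d. so $\tr\exp(g(\theta)V)=\sum_j e^{g(\theta)\lambda_j(V)}\leq d\,e^{g(\theta)\lmax(V)}$. Substituting into the master bounds produces $\pr{\lmax(Y)\geq t}\leq d\inf_{\theta>0}\exp(-\theta t + g(\theta)\varstat{Y})$ and $\ex{\lmax(Y)}\leq \inf_{\theta>0}\big(\theta^{-1}\log d + \theta^{-1}g(\theta)\varstat{Y}\big)$.

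The remaining work is purely scalar. I would replace the awkward $g(\theta)$ by the cleaner bound $g(\theta)\leq \tfrac{\theta^2/2}{1-\theta L/3}$, valid for $0<\theta<3/L$ and verified by comparing Taylor coefficients (since $2\cdot 3^{k-2}\leq k!$ for $k\geq 2$). Inserting $\theta = t/(\varstat{Y}+Lt/3)$ into the tail bound gives exponent $-t^2/\big(2(\varstat{Y}+Lt/3)\big)$ and hence the stated tail inequality, while the expectation bound follows from an explicit choice of $\theta$ balancing the two terms to yield $\sqrt{2\varstat{Y}\log d}+\tfrac{L}{3}\log d$. I expect the main obstacle to be the key lemma rather than this final calculus: the delicate point is the passage from the scalar estimate to the semidefinite estimate $e^{\theta X}\preceq I + \theta X + g(\theta)X^2$ and then to $\log\ex{e^{\theta X}}\preceq g(\theta)\ex{X^2}$, where noncommutativity forces reliance on the commuting-product argument, the Transfer rule, and the nontrivial operator monotonicity of $\log$ (the one ingredient that is not elementary).
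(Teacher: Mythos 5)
Your proposal is correct and follows essentially the same route as the paper: the master bounds of Theorem~\ref{thm:master}, a cumulant bound $\log\ex{e^{\theta X}}\preceq g(\theta)\ex{X^2}$ proved via the scalar identity $e^{\theta x}=1+\theta x+x^2f(x)$, monotonicity of $f$, the transfer/conjugation rules, $I+A\preceq e^A$, and operator monotonicity of $\log$, followed by the trace bound $\tr\exp(g(\theta)V)\leq d\,e^{g(\theta)\varstat{Y}}$ and the same scalar optimization. The only cosmetic difference is that you first keep the sharp Bennett-form $g(\theta)=(e^{\theta L}-1-\theta L)/L^2$ and relax it to $\tfrac{\theta^2/2}{1-\theta L/3}$ at the end, whereas the paper (Lemma~\ref{lem:cgf}) performs that Taylor-series relaxation inside the lemma itself; you also spell out the final optimization steps that the paper skips.
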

\begin{proof}
The proof is an application of Theorem~\ref{thm:master} with the appropriate bounds for the cumulant generating function plugged in. The bound on the cumulant generating function is the following:
\begin{lemma}
\label{lem:cgf}
Suppose $X$ is a random Hermitian matrix such that $\ex{X} = 0$ and $\lmax(X)\leq L$ for some $L > 0$.
Then for $\theta$ such that $0 < \theta < \frac{3}{L}$,
\begin{equation*}
    \log \ex{e^{\theta X}} \preceq  \left( \frac{\theta^{2}/2}{1- \frac{\theta L}{3}}\right) \ex{X^{2}}.
\end{equation*}
\end{lemma}
\begin{proof}{Proof of Lemma~\ref{lem:cgf}}
For $\theta > 0$ define $f : \RR_+ \to \RR$ as follows:
\begin{equation*}
    f(x) = \frac{e^{\theta x} - \theta x - 1}{x^{2}}
\end{equation*}
We set $f(0) = \frac{\theta^{2}}{2}$ to ensure it is defined everywhere. Since $f^{'}(x) > 0$ for all $x$, the function is increasing and so $f(x) \leq f(L)$ whenever $x \leq L$. Rewriting
\begin{equation*}
    f(x) = \frac{\theta ^ {2}}{2!} + \frac{x\theta ^ {3}}{3!} + \frac{x^{2}\theta ^ {4}}{4!} + \dots
    = \theta^{2} \left[ \frac{1}{2!} + \frac{x \theta} {3!} + \frac{x^{2}\theta ^ {2}}{4!} + \dots \right],
\end{equation*}
and noting that $q! \geq 2\cdot 3^{q-2}$ for all $q \geq 2$, we have that 
\begin{equation*}
    f(x) \leq \frac{\theta^{2}}{2} \sum_{i=0}^{\infty} \left(\frac{x\theta}{3}\right)^i = \frac{\theta^{2}/2}{1-\frac{\theta x}{3}},
\end{equation*}
whenever $x < 3/\theta$. Now, let us extend this analysis to matrices.  Let $X$ be a random Hermitian matrix. 
\begin{equation*}
    e^{\theta X} = I + \theta X + (e^{\theta X} - \theta X - I) = I + \theta X + Xf(X)X.
\end{equation*}
We know that each of the eigenvalues of $X$ is at most $L$. Therefore, by the monotonicity of $f$, we have that $f(\lambda_i) \leq f(L)$ for each $i$, where $\lambda_1,\ldots \lambda_d$ are the eigenvalues of $X$. Therefore $f(L)I$, each of whose eigenvalues is $f(L)$ dominates $f(X)$ in the p.d. partial order. 
So, we have that 
\begin{equation*}
    e^{\theta X} \preceq 
    I + \theta X + f(L) X^{2} \preceq I + \theta X + \frac{\theta^{2}/2}{1 - \frac{\theta L}{3}} X^{2}.
\end{equation*}
Taking expectations on both sides, recalling that expectation preserves the p.d. partial order, and noting that $\ex{X} = 0$, we have
\begin{equation*}
    \ex{e^{\theta X}}
    \preceq
    I + \frac{\theta ^{2}/2}{1 - \frac{\theta L}{3}} \ex{X^{2}}.
\end{equation*}
Since $1 + a \leq e^{a}$, for all $a \in \mathbb{R}$. Therefore,
\begin{equation*}
    \ex{e^{\theta X}} \preceq \exp \left\{\frac{\theta ^{2}/2}{1 - \frac{\theta L}{3}} \ex{X^{2}}\right\}.
\end{equation*}
Taking logs on both sides and recalling that log is an operator monotone function (Proposition~\ref{prp:log-monotone}) we get the result. 
\end{proof}
Now we turn to using the obtained bound to derive eigenvalue bounds. For brevity we will say that
\begin{equation*}
    g(\theta) = \frac{\theta^{2}/2}{1 - \frac{\theta L}{3}}
\end{equation*}
Since the trace exponential is monotonic in the p.d. partial order (Corollary~\ref{cor:trexp}), we can use Lemma~\ref{lem:cgf} with the master bounds (Theorem~\ref{thm:master}) to obtain:
\begin{equation*}
    \ex{\lmax(Y)}
    \leq \inf_{0 < \theta < \frac{3}{L}} \frac{1}{\theta}\log \tr \left[\exp \left\{g(\theta) \sum_{k = 1}^{n} \ex{X_{k}^{2}}\right\}\right].
\end{equation*}
Note that range of $\theta$ is curtailed to the $[0,3/L]$ since the upper bound of Lemma~\ref{lem:cgf} only applies in this range.
In the next step we use $\mathbb{E} [Y^{2}] = \sum_{k = 1}^{n} \mathbb{E} [X_{k}^{2}]$ (Fact~\ref{fct:sum}) and also the fact that the trace of a p.d. matrix $A \in \HH_d$ is upper bounded by $d \lmax(A)$ to obtain
\begin{equation*}
    \ex{\lambda_{max}(Y)} \leq 
    \inf_{0 < \theta < \frac{3}{L}} \frac{1}{\theta}
    \log \left[
    d  \lmax\left(
    \exp \left\{ 
    g(\theta) \ex{Y^{2}}
    \right\}\right)
    \right].
\end{equation*}
We can move $\lmax$ inwards using the Spectral Mapping Theorem (Proposition~\ref{prp:spectral-mapping}) to get
\begin{equation*}
    \ex{\lambda_{max}(Y)} \leq 
    \inf_{0 < \theta < \frac{3}{L}} \frac{1}{\theta}
    \log \left[
    d 
    \exp \left\{ 
    g(\theta) 
    \lmax\left( 
    \ex{Y^{2}}\right)
    \right\}
    \right].
\end{equation*}
We note that since $Y^2$ is p.d.,  $\lmax(\ex{Y^2})$ is precisely $\varstat{Y}$. The rest of the proof for the expectation bound simply involves differentiating the RHS of the above equation and finding the minima. We skip those steps. 

For the tail bound we follow steps similar to those for the expectation bound to obtain
\begin{equation*}
    \pr{\lambda_{max}(Y) \geq t}
    \leq
    \inf_{0 < \theta < 3/L} 
    d  
    e^{-\theta t} \left[
    \exp \left\{
    g(\theta)
    \sum_{k=1}^{n} 
    \log \ex{Y^{2}}
    \right\}
    \right].
\end{equation*}
\end{proof}

\subsubsection{Matrix sampling estimators}

The method of Rahimi and Recht~\cite{rahimi-nips:2007} falls under the broad category of matrix sampling estimators. In this setting we have a target matrix $B$ that we want to estimate and a random matrix $R$ such that $\ex{R} = B$. We repeatedly sample instances of $R$ independently $n$ times, call them $R_1, \ldots, R_n$, and use the emprirical estimate $(\sum_{i=1}^n R_i)/n$ as an estimate of $B$. The Matrix Bernstein inequality allows us to determine what the error in this estimate is. As expected, the error depends on the variance of $R$. We state this result as a corollary.

\begin{corollary}
\label{cor:error-estimate}
Let $B$ be a fixed $d_1 \times d_2$ matrix. Suppose that $R$ is  $d_1 \times d_2$ random matrix such that $ \ex{R} = B$ and $ \| R \| \leq L$. Let 
$$ m_2(R) = \max \{ \lVert \ex{RR^{*}} \rVert , \lVert \ex{R^{*}R}\rVert \}$$
be the {\em per sample second moment} of $R$ and let the matrix sampling estimator be:
$$ \bar{R}_n = \frac{\sum_{i=1}^{n} R_k}{n}$$
where each $R_k$ is an independent copy of $R$. Then
$$ \ex{\lVert \bar{R}_n - B \rVert} \leq \sqrt{\frac{2m_2(R)\log(d_1+d_2)}{n}} + \frac{2L\log(d_1+d_2)}{3n}.$$
Furthermore for all $t \geq 0$
$$ \pr{\lVert \bar{R}_n - B \rVert \geq t} \leq (d_1 + d_2)\exp\left\{{\frac{-nt^2/2}{m_2(R)+\frac{2Lt}{3}}}\right\}.$$
\end{corollary}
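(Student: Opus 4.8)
The plan is to obtain this as a direct specialization of the general Matrix Bernstein Inequality (Theorem~\ref{thm:mbi}). First I would center and rescale: define $S_i := \frac{1}{n}(R_i - B)$ for $1 \leq i \leq n$. Since the $R_i$ are independent copies of $R$ with $\ex{R} = B$, we get $\ex{S_i} = 0$, so the $S_i$ are centered, and their sum telescopes to the object of interest, $Z := \sum_{i=1}^n S_i = \frac{1}{n}\sum_{i=1}^n R_i - B = \bar{R}_n - B$. Thus bounding $\|Z\|$ is exactly bounding $\|\bar{R}_n - B\|$.

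Next I would verify the norm bound hypothesis of Theorem~\ref{thm:mbi}. We are given $\|R_i\| \leq L$, and since the spectral norm is convex, Jensen's inequality gives $\|B\| = \|\ex{R}\| \leq \ex{\|R\|} \leq L$. Hence by the triangle inequality $\|S_i\| \leq \frac{1}{n}(\|R_i\| + \|B\|) \leq \frac{2L}{n}$, so the per-matrix norm bound to feed into Theorem~\ref{thm:mbi} is $L' = 2L/n$ (this is the source of the factor $2L$ in the statement).

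The one computational step requiring care is the matrix variance statistic. Expanding and using $\ex{R} = B$ to kill the cross terms, $\ex{S_i S_i^*} = \tfrac{1}{n^2}\ex{(R - B)(R - B)^*} = \tfrac{1}{n^2}\bigl(\ex{RR^*} - BB^*\bigr)$, and summing over $i$ yields $\sum_{i=1}^n \ex{S_i S_i^*} = \tfrac{1}{n}\bigl(\ex{RR^*} - BB^*\bigr)$. The key observation is that $\ex{RR^*} - BB^* = \ex{(R-B)(R-B)^*}$ is p.d. (an expectation of p.d. matrices), and it is dominated in the p.d. partial order by $\ex{RR^*}$ because $BB^* \succeq \bm{0}$. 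Since for p.d. matrices $\bm{0} \preceq A \preceq C$ implies $\|A\| = \lmax(A) \leq \lmax(C) = \|C\|$, we get $\bigl\|\ex{RR^*} - BB^*\bigr\| \leq \|\ex{RR^*}\| \leq m_2(R)$, and the identical argument applies to the $S_i^* S_i$ terms with $\ex{R^*R}$. Therefore $\varstat{Z} \leq m_2(R)/n$. I would flag this as the main point: the centered covariance that actually appears is controlled by the (uncentered) per-sample second moment $m_2(R)$ precisely because $BB^*$ is positive.

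Finally I would substitute $L' = 2L/n$ and $\varstat{Z} \leq m_2(R)/n$ into the two conclusions of Theorem~\ref{thm:mbi}. For the expectation bound the substitution is immediate and produces $\sqrt{2 m_2(R)\log(d_1+d_2)/n} + \tfrac{2L}{3n}\log(d_1+d_2)$. For the tail bound, because $\varstat{Z}$ sits in the denominator of the exponent, replacing it by its upper bound only weakens the inequality in the correct direction: $\varstat{Z} + L' t/3 \leq \bigl(m_2(R) + \tfrac{2Lt}{3}\bigr)/n$, so that $\exp\{-t^2/2 / (\varstat{Z} + L't/3)\} \leq \exp\{-nt^2/2 / (m_2(R) + \tfrac{2Lt}{3})\}$, giving the stated bound with prefactor $d_1 + d_2$. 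There is no deep obstacle here; the only care needed is in the variance computation and in tracking the factors of $n$ and $2$ so they land exactly as in the statement.
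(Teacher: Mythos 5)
Your proposal is correct and follows essentially the same route as the paper's own proof: the identical centering $S_i = (R_i - B)/n$, the same $2L/n$ norm bound via the triangle inequality and Jensen, the same observation that dropping the p.d. term $BB^*$ lets the uncentered second moment $m_2(R)$ control $\varstat{Z} \leq m_2(R)/n$, and the same substitution into Theorem~\ref{thm:mbi}. Your explicit check that the tail bound is monotone in $\varstat{Z}$ is a small point the paper leaves implicit, but it is not a different argument.
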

\begin{proof}
Let $S_i = (R_i - \ex{R})/n$ and $Z = \bar{R}_n - B = \sum_{i=1}^{n} S_i$. 
Each $S_i$ has zero mean, and is identically and independently distributed. We bound the norm of the $S_i$ by observing that 
$$ \lVert S_i \rVert \leq \frac{\lVert R_i \rVert + \lVert \ex{R} \rVert}{n} \leq \frac{\lVert R_i \rVert + \ex{\lVert R\rVert}}{n} \leq \frac{2L}{n},$$
where the first inequality is the triangle inequality of the spectral norm, the second results from Jensen's inequality and the last from our assumption that $ \lVert R \rVert \leq L$.

To control the matrix variance statistic note that, since all the $S_i$ are identical,
$$ \varstat{Z} = \max \{ \left\| \sum_{i=1}^{n} \ex{S_{i}{S_{i}}^{*}} \right\| ,\left\| \sum_{i=1}^{n} \ex{{S_{i}}^{*} S_{i}} \right\| \}  = n \max \{ \left\|  \ex{S_{1}{S_{1}}^{*}} \right\| ,\left\|  \ex{{S_{1}}^{*} S_{1}} \right\| \}$$
Now
$$ \ex{S_1 {S_1}^{*}} = \frac{\ex{RR^*- \ex{R}\ex{R}^*}}{n^2} \preceq \frac{\ex{RR^*}}{n^2},$$
where we ignore the second term $\ex{R}\ex{R}^* = BB^*$ since it is p.d. As a consequence,
$$ \varstat{Z} =  n \lVert \ex{S_1 {S_1}^{*}} \rVert \leq n \frac{\lVert \ex{RR}^* \rVert}{n^2} = \frac{m_2(R)}{n}.$$
Substituting the upper bounds on $\|S_i\|$ and $\varstat{Z}$ in the statement of Theorem~\ref{thm:mbi} gives us the result.
\end{proof}

\subsection{An error estimate for Random Fourier Features}

Let us return to Random Fourier Features (Definition~\ref{def:rff}). We now view the construction as a method for estimating the Gram Matrix $G$ of the Gaussian Radial Basis function and give a bound on the norm of the error. To do so we reformulate the Random Fourier Feature definition in terms that will allow us to apply Corollary~\ref{cor:error-estimate}.
\begin{example}[A norm bound for the error of Random Fourier Feautures]
\label{ex:rff-mbi}
Given a set of training points $T= \{\bm{x}_1,\ldots,\bm{x}_n\} \subseteq \mathbb{R}^d$ such that $d>1$. For some $\sigma \in \RR$ and $D>0$, we construct a random feature map $\phi: \mathbb{R}^d \to [0,1]^D$ as given in Definition~\ref{def:rff}. Define an inner product on $[0,1]^D$ as follows: $\langle \bm{x}, \bm{y}\rangle = \frac{1}{D} \sum_{k=1}^D \bm{x}(k)\bm{y}(k)$. Then, if $G$ is an $n \times n$ matrix such that 
$$G_{ij} = e^{-\frac{\|\bm{x}_i - \bm{x}_j\|_2^2}{2\sigma^2}}$$
and $\bar{R}_D$ is an $n \times n$ matrix with $(i,j)$th entry $\langle \bm{x}_i, \bm{x}_j\rangle$ then  
$$\frac{\ex{\lVert \bar{R}_D - G \rVert}}{\lVert G \rVert} \leq \varepsilon$$
if
\[D \geq \frac{16\log(2n)\mathrm{intDim}(G)}{\varepsilon^2}.\]
\end{example}
\begin{proof}
Given training set $T = \{ \bm{x}_1,\bm{x}_2,\ldots,\bm{x}_n \} \subseteq \RR^d$ we define a random vector $\bm{z}$ as follows: Choose $\bm{w} \in \RR^d$ according to distribution $\mathrm{N}(0,\sigma^2I)$ and $b$ uniformly at random in $[0,2\pi]$, then set $\bm{z}_i = \sqrt{2}\cos(\bm{w}^t\bm{x}_i + b)$. 
Now form a matrix $R = \bm{z}\bm{z}^{*}$. From Fact~\ref{fct:rr} we know that $\ex{R} = G$. 

Now we can say that Definition~\ref{def:rff} amounts to picking $D$ iid matrices $R_1,\ldots, R_D$ with the same distribution as $R$. Our estimator for $G$ is  $\bar{R} = \sum_{i=1}^{D} R_i/D$. In order to apply Corollary~\ref{cor:error-estimate} we need bounds on $\lVert R\rVert$ and $m_2(R)$. 
To bound $\lVert R\rVert$ we note that, since $R = \bm{z}\bm{z}^{*}$, $\lVert R\bm{x}\rVert$ is maximimzed if $\bm{x}$ is a unit vector in the direction of $\bm{z}$, i.e., 
$$ \lVert R \rVert = \max_{\lVert \bm{x} \rVert = 1} \lVert \bm{z}\bm{z}^{*}\bm{x} \rVert  = \left\lVert \bm{z}\bm{z}^{*}\frac{\bm{z}}{\lVert \bm{z} \rVert} \right\rVert = {\lVert \bm{z} \rVert}^2 \leq 2n. $$
To bound $m_2(R)$ note that 
$$ \ex{\lVert R^2\rVert} = \ex{\lVert\bm{z}\bm{z}^{*}\bm{z}\bm{z}^{*}\rVert} = \ex{{\lVert \bm{z} \rVert}^2 \lVert \bm{z}\bm{z}^{*}\rVert} \leq 2n\ex{\lVert\bm{z}\bm{z}^{*}\rVert} = 2n\lVert G\rVert.$$
Where the last inequality comes from the fact that $\bm{z}\bm{z}^* = R$ and $\ex{R} = G$. 

Substituting these in Corollary~\ref{cor:error-estimate} we get 
$$ \ex{\lVert \bar{R} - G \rVert} \leq \sqrt{\frac{4n \lVert G \rVert \log(2n)}{D}} + \frac{4n\log(2n)}{3D}$$
Dividing both sides by $ \lVert G \rVert$, we have 
$$ \frac{\ex{\lVert \bar{R} - G \rVert}}{\lVert G \rVert} \leq \sqrt{\frac{4n\log(2n)}{D\lVert G \rVert }} + \frac{4n\log(2n)}{3D \lVert G \rVert }$$
Since the diagonal elements of $G$ are all equal to 1, we know that $\tr G = n$. We therefore identify $n/\lVert G \rVert$ as the intrinsic dimension of $G$ (Definition~\ref{def:intdim}). So we see that 
$$ \frac{\ex{\lVert \bar{R} - G \rVert}}{\lVert G \rVert} \leq \sqrt{\frac{4\text{ intDim}(G)\log(2n)}{D}} + \frac{4\text{ intDim}(G)\log(2n)}{3D}$$
Now, for any $1>\varepsilon>0$, if we set $$D \geq \frac{4\log(2n)\text{intDim}(G)}{\varepsilon^2},$$ we get that
$$ \frac{\ex{\lVert \bar{R}_D - G \rVert}}{\lVert G \rVert} \leq \varepsilon + \frac{\varepsilon^2}{3} \leq 2\varepsilon.$$
\end{proof}

\noindent{\bf Discussion.} Comparing the results of Example~\ref{ex:rff-hoeffding} and Example~\ref{ex:rff-mbi} we see that in the latter we have $\mathrm{intdim}(G)$ in place of $n$. The intrinsic dimension of $G$ is clearly upper bounded by $n$ but could, in practice, be much lower. For example if all the $n$ training points are extremely closely clustered the intrinsic dimension could be as low as 1 (or just a little more than 1). 

It may be argued that the lower bound on $D$ provided in Example~\ref{ex:rff-hoeffding} ensures that {\em every} entry of $\bar{R}_D$ is close to the corresponding entry of $G$. However, if we look at the dual formulation of the support vector machine (Problem~\ref{prb:svm-dual}), we note that the second term in the objective function is of the form $\bm{a}^TG\bm{a}$ where $\bm{a}_i = \alpha_iy_i$. In such a situation an error bound on the norm of the approximation error of the Random Fourier Feature approximation may be more useful in estimating the convergence time of an algorithm that seeks to solve the computational problem.

\bibliographystyle{plain}
\bibliography{kernels}

\begin{thebibliography}{1}

\bibitem{berg:1984}
Christian Berg, Jens Peter~Reus Christensen, and Paul Ressel.
\newblock {\em Harmonic analysis on semigroups: Theory of positive definite and
  related functions}.
\newblock Springer, 1984.

\bibitem{boucheron:2016}
Stephane Boucheron, Gabor Lugosi, and Pascal Massart.
\newblock {\em Concentration Inequalities: A Nonasymptotic Theory of
  Independence}.
\newblock Oxford University Press, 2016.

\bibitem{heil}
Christopher Heil.
\newblock Hilbert spaces.
\newblock In {\em Functional Analysis Lecture Notes}, chapter~1. Politecnico di
  Torino, 2006.
\newblock Retrieved on 29 February 2020 from
  http://calvino.polito.it/$\sim$terzafac/Corsi/functional\_analysis/pdf/chap1.pdf.

\bibitem{ng-stanford:2019}
Andrew Ng and Tengyu Ma.
\newblock Class notes: Support vector machines.
\newblock Lecture notes for {\em CS229: Machine Learning}, Stanford
  University., 2019.
\newblock Retrieved from http://cs229.stanford.edu/syllabus.html on 28th
  January 2020.

\bibitem{rahimi-nips:2007}
Ali Rahimi and Benjamin Recht.
\newblock Random features for large-scale kernel machines.
\newblock In {\em Proc. 21st Annual Conference on Neural Information Processing
  Systems (NIPS '07)}, pages 1177--1184, 2007.

\bibitem{scholkopf:2001}
Bernhard Scholk\"opf and Alexander~J. Smola.
\newblock {\em Learning with kernels: Support vector machines, regularization,
  optimization and beyond}.
\newblock MIT Press, 2001.

\bibitem{tropp-ftml:2015}
Joel~A. Tropp.
\newblock An introduction to matrix concentration inequalities.
\newblock {\em Found. Trends Mach. Learn.}, 8(1-2):1--230, May 2015.

\bibitem{tu:2016}
Stephen Tu.
\newblock Discussion 12: Random {Fourier} features.
\newblock Discussion Notes for CS189 "Introduction to Machine Learning",
  December 2016.
\newblock University of California, Berkeley.

\end{thebibliography}

\end{document}